\newtheorem*{theorem*}{Theorem}
\newtheorem{theorem}{Theorem}
\numberwithin{theorem}{section}
\newtheorem{lemma}[theorem]{Lemma}
\newtheorem*{lemma*}{Lemma}
\newtheorem{proposition}[theorem]{Proposition}
\renewcommand{\eqref}[1]{Equation~\ref{#1}}
\newcommand{\x}{{\mathbf x}}
\newcommand{\btheta}{{\boldsymbol \theta}}
\title{Precise characterization of the prior predictive distribution of deep ReLU networks}
\author{
  Lorenzo Noci$^*$\\ 
  Dept of Computer Science \\
  ETH Z\"urich\\
  \texttt{\small lorenzo.noci@inf.ethz.ch} \\
  \And
  Gregor Bachmann$^*$\\
  Dept of Computer Science \\
  ETH Z\"urich\\
  \texttt{\small gregor.bachmann@inf.ethz.ch} \\
  \And
  Kevin Roth$^*$\\
  Dept of Computer Science \\
  ETH Z\"urich\\
  \texttt{\small kevin.roth@inf.ethz.ch} \\
  \And
  Sebastian Nowozin\\
  Microsoft Research \\
  Cambridge, UK \\
  \texttt{\small Sebastian.Nowozin@microsoft.com} \\
  \And
  Thomas Hofmann \\
  Dept of Computer Science \\
  ETH Z\"urich\\
  \texttt{\small thomas.hofmann@inf.ethz.ch} \\   
}
\begin{document}

\maketitle

\doparttoc 
\faketableofcontents 

\begin{abstract}
Recent works on Bayesian neural networks (BNNs) have highlighted the need to better understand the implications of using Gaussian priors in combination with the compositional structure of the network architecture. 
Similar in spirit to the kind of analysis that has been developed to devise better initialization schemes for neural networks (cf. He- or Xavier initialization),
we derive a precise characterization of the prior predictive distribution of finite-width ReLU networks with Gaussian weights.
While theoretical results have been obtained for their heavy-tailedness,
the full characterization of the prior predictive distribution (i.e.\ its density, CDF and moments), remained unknown prior to this work.
Our analysis, based on the Meijer-G function, allows us to quantify the influence of architectural choices such as the width or depth of the network on the resulting shape of the prior predictive distribution. 
We also formally connect our results to previous work in the infinite width setting, demonstrating that the moments of the distribution converge to those of a normal log-normal mixture in the infinite depth limit. 
Finally, our results provide valuable guidance on prior design: 
for instance, controlling the predictive variance with depth- and width-informed priors on the weights of the network.
\end{abstract}


\section{Introduction}
It is well known that standard neural networks initialized with Gaussian weights tend to Gaussian processes \citep{rasmussen2003gaussian} in the infinite width limit \citep{neal1996priors, lee2018dnngp, matthews2018gaussiannetworks}, coined \textit{neural network Gaussian process} (NNGP) in the literature. Although the NNGP has been derived for a number of architectures, such as convolutional \citep{novak2019bayesiandeepcnngp, garrigaalonso2019convnetgp},  recurrent \citep{yang2019wide} and attention mechanisms \citep{hron2020infinite}, little is known about the finite width case.

The reason why the infinite width limit is relatively tractable to study is that uncorrelated but dependent units of intermediate layers become normally distributed due to the central limit theorem (CLT) and as a result, independent. In the finite width case, zero correlation does not imply independence,  rendering the analysis far more involved as we will outline in this paper.

One of the main motivations for our work is to better understand the implications of using Gaussian priors in combination with the compositional structure of the network architecture.
As argued by \cite{wilson2020bayesianperspective, wilson2020case}, the prior over parameters does not carry a meaningful interpretation; 
the prior that ultimately matters is the \textit{prior predictive distribution} that is induced when a prior over parameters is combined with a neural architecture \citep{wilson2020bayesianperspective, wilson2020case}.

Studying the properties of this prior predictive distribution is not an easy task, the main reason being the \textit{compositional} structure of a neural network, which ultimately boils down to products of random matrices with a given (non-linear) activation function. 
The main tools to study such products are the Mellin transform and the Meijer-G function \citep{meijer1936uber, springer1970distribution, mathai1993handbook, stojanac2017products}, both of which will be leveraged in this work to gain theoretical insights into the inner workings of BNN priors.

\paragraph{Contributions} 
Our results provide an important step towards understanding the interplay between architectural choices and the distributional properties of the prior predictive distribution, in particular:
\begin{itemize}
    \item We characterize the prior predictive density of finite-width ReLU networks of any depth through the framework of Meijer-G functions.
    \item We draw analytic insights about the shape of the distribution by studying its moments and the resulting heavy-tailedness. We disentangle the roles of width and depth, demonstrating how deeper networks become more and more heavy-tailed, while wider networks induce more Gaussian-like distributions.
    \item We connect our work to the infinite width setting by recovering and extending prior results \citep{lee2018dnngp, matthews2018gaussian} to the infinite depth limit. We describe the resulting distribution in terms of its moments and match it to a normal log-normal mixture \citep{lognormal}, empirically providing an excellent fit  even in the non-asymptotic regime. 
    \item Finally, we introduce generalized He priors, where a desired variance can be directly specified in the function space. This allows the practitioner to make an interpretable choice for the variance, instead of implicitly tuning it through the specification of each layer variance.
\end{itemize}

The rest of the paper is organized as follows: in Section \ref{sec:fully_connected_notation}, we introduce the relevant notation for the neural network that will be analyzed. We describe the prior works of \citet{lee2018dnngp, matthews2018gaussian} in more detail in Section \ref{nngp-section} . Then, in Section \ref{sec:toolbox}, we introduce the Meijer-G function and the necessary mathematical tools. In Section \ref{sec:pred_prior} we derive the probability density function for a linear network of any depth and extend these results to ReLU networks, which represents the key contribution of our work. In Section \ref{sec:analytic_insights}, we present several consequences of our analysis, including an extension of the infinite width setting to infinite depth as well as precise characterizations of the heavy-tailedness in the finite regime.
Finally, in Section \ref{sec:prior_design}, we show how one can design an architecture and a prior over the weights to achieve a desired prior predictive variance. 


\section{Related Work}
Although Bayesian inference in deep learning has recently risen in popularity, surprisingly little work has been devoted to investigating standard priors and their implied implicit biases in neural architectures. Only through the lens of infinite width, progress has been made \citep{neal1996priors}, establishing a Gaussian process behaviour at the output of the network. More recently, \citet{lee2018dnngp} and \citet{matthews2018gaussian} extended this result to arbitrary depth. We give a brief introduction in Section \ref{nngp-section}. Due to their appealing Gaussian process formulation, infinite width networks have been extensively studied theoretically, leading to novel insights into the training dynamics under gradient descent  \citep{jacot2018neural} and generalization \citep{arora2019finegrained}. 

Although theoretically very attractive, the usage of infinite width has been severely limited by its inferior empirical performance \citep{arora2019exact}.  While first insights into this gap have been obtained \citep{aitchison2020bigger, pmlr-v139-aitchison21a}, the picture is far from complete and a better understanding of finite width networks is still highly relevant for practical applications. 
In the finite regime, however, such precise characterizations in function space have been elusive so far and largely limited to empirical insights \citep{flam2017mapping, warpingbayes} 
and investigations of the heavy-tailedness of layers \citep{vladimirova2019understanding, fortuin2021bayesian}.  
The field of finite-width corrections has recently gained a lot of attraction. \cite{hanin2020products} studies the simultaneous limit of width and depth of the Jacobian of a ReLU net. More recently, a number of concurrent works appearing shortly before or after ours, such as  \cite{zavatone2021exact, roberts2021principles, li2021future} study properties of large-but-finite neural nets. Notably, \cite{zavatone2021exact} concurrently derived similar results on the characterization of the prior predictive distribution of finite width networks, however, we did not discover them until after we had completed this work. 
Note also that our novel limiting behaviours (cf.\ Sec.\ \ref{limiting_section}) and our analytical insights into heavy-tailedness (cf.\ Sec.\ \ref{kurtosis}) clearly distinguish our work from theirs. In addition, our results also offer valuable guidance on prior design for ML practitioners (cf.\ Sec.\  \ref{sec:prior_design}). \citet{li2021future} derive a similar limiting result in the infinite-width and depth setting in the case of Resnets (ours if for fully-connected, cf.\ Sec.\ \ref{limiting_section}), albeit with a completely different proof technique. Moreover, we also give precise insights into the prior predictive distribution for finite width (cf. \ Sec.\ \ref{sec:pred_prior}), whereas \citet{li2021future} only work with the limits.

Finally, this work is also related to the studies of signal propagation into finite-width random networks \citep{poole2016exponential, schoenholz2017deepinformationpropagation} and initialization \citep{he2015prelu, hanin2018start}. In particular, \cite{he2015prelu} uses a second moment analysis to specify the variance of the weights. In this sense, our approach extends it by deriving all the moments of the distribution. 

\section{Background}
\subsection{Fully Connected Neural Network}
\label{sec:fully_connected_notation}

Given an input $\bm{x} \in \mathbb{R}^d$, we define a $\btheta$-parameterized $L$ layer fully-connected neural network $f_{\bm{\theta}}(\bm{x})$ as the composition of layer-wise affine transformations $\bm{W}^{(l)} \in \mathbb{R}^{d_{l-1} \times d_{l}}$ and element-wise non-linearities $\sigma: \mathbb{R} \to \mathbb{R}$, 
\begin{equation}
    f_{\bm{\theta}}(\bm{x}) = \bm{W}^{(L)\top}\sigma\big(\bm{W}^{(L-1)\top} \dots \sigma\big(\bm{W}^{(1)\top}\bm{x}\big)\dots\big) \, ,
\end{equation}
where $\bm{\theta} = (\bm{W}^{(1)}, \dots , \bm{W}^{(L)})$ denotes the collection of all weights. 
Throughout this work, we assume standard initialization, i.e.\ $W_{ij}^{(l)} \sim \mathcal{N}(0,\sigma_l^2)$, where weights in each layer can have a different variance $\sigma_l^2$.
Often, it will be more convenient to work with the corresponding unit-level formulation, expressed through the recursive equations: 
\begin{align}
    & f^{(l)}_k(\bm{x}) = \sum_{j=1}^{d_{l-1}}W^{(l)}_{jk} g^{(l-1)}_j(\bm{x}) \,\,\ ,  \quad
     \bm{g}^{(l)}(\bm{x}) = \sigma(\bm{f}^{(l)}(\bm{x})) \, ,
\end{align}
When propagating an input $\bm{x}$ through the network we refer to $\bm{f}^{(l)}(\bm{x}) \in \mathbb{R}^{d_l}$ as the \textit{pre-activations} and to $\bm{g}^{(l)}(\bm{x}) \in \mathbb{R}^{d_l}$ as the \textit{post-activations}. When it is clear from the context, to enhance readability, we will occasionally abuse notation denoting $\bm{f}^{(l)} := \bm{f}^{(l)}(\bm{x})$ and $\bm{g}^{(l)} := \bm{g}^{(l)}(\bm{x})$. 
We will also use the short-hand $[m] = \{1, \dots, m\}$.
Imposing a probability distribution on the weights induces a distribution on the pre-activations $\bm{f}^{(l)}(\bm{x})$. Understanding the properties of this distribution is the main goal of this work.

\subsection{Prior Predictive Distribution and Infinite Width}
\label{nngp-section}
A precise characterization of the prior predictive distribution of a neural network has been established in the so-called infinite width setting \citep{neal1995bayesianneuralnetworks, lee2018dnngp, matthews2018gaussian}. By considering variances that scale inversely with the width, i.e.\ $\sigma_{l}^2 = \frac{1}{d_l}$ and fixed depth $L \in \mathbb{N}$, it can be shown that the implied prior predictive distribution converges in law to a Gaussian process:
\begin{equation}
    f_{\bm{\theta}} \xrightarrow[]{d} \mathcal{GP}(0, \Sigma^{(L)}) ,
\end{equation}

where $\Sigma^{(L)}: \mathbb{R}^{d} \times \mathbb{R}^{d}, \hspace{2mm} \bm{x}, \bm{x}' \mapsto \Sigma^{(L)}(\bm{x}, \bm{x}')$ is the NNGP kernel \citep{lee2018dnngp}, available in closed-form through the recursion 
\begin{align}
    &  \Sigma^{(1)}(\bm{x}, \bm{x}') = \bm{x}^{T}\bm{x}'\,\,\ ,  \quad
     \Sigma^{(l+1)}(\bm{x}, \bm{x'}) =  \mathbb{E}_{\bm{z} \sim \mathcal{N}(\bm{0},\tilde{\bm{\Sigma}}^{(l)})}\big[\sigma(z_1)\sigma(z_2)\big] \, ,
\end{align}
     
for $l=1, \dots, L-1$ and where $\tilde{\bm{\Sigma}}^{(l)} = \begin{pmatrix} \Sigma^{(l)}(\bm{x}, \bm{x}) & \Sigma^{(l)}(\bm{x}, \bm{x}') \\[1mm]
\Sigma^{(l)}(\bm{x}', \bm{x}) & \Sigma^{(l)}(\bm{x}', \bm{x}')\end{pmatrix} \in \mathbb{R}^{2 \times 2}$. The proof relies on the multivariate central limit theorem in conjunction with an inductive technique, letting hidden layer widths go to infinity in sequence. Due to the Gaussian nature of the limit, exact Bayesian inference becomes tractable and techniques from the Gaussian process literature can be readily applied \citep{rasmussen2003gaussian}. Although theoretically very appealing, from a practical point of view, infinite width networks are not as relevant due to their inferior performance 
\citep{novak2019bayesiandeepcnngp}. 
Gaining a better understanding of this performance gap is hence of utmost importance.

\subsection{Meijer-G function}
\label{sec:toolbox}
The Meijer-G function is the central tool to our analysis of the predictive prior distribution in the finite width regime. The Meijer-G function is a ubiquitous tool, appearing in a variety of scientific fields ranging from mathematical physics \citep{mathphysics} to symbolic integration software \citep{reduce} to electrical engineering \citep{electric}. Despite its high popularity in many technical fields, there have only been a handful of works in ML leveraging this elegant and convenient theoretical framework \citep{alaa2019demystifying, meijerg_learning}. 
In the following, we will introduce the Meijer-G function along with the relevant mathematical tools to develop our theory.

\begin{figure}
    \centering
    \begin{subfigure}[t]{0.45\textwidth}
        \includegraphics[width=\textwidth]{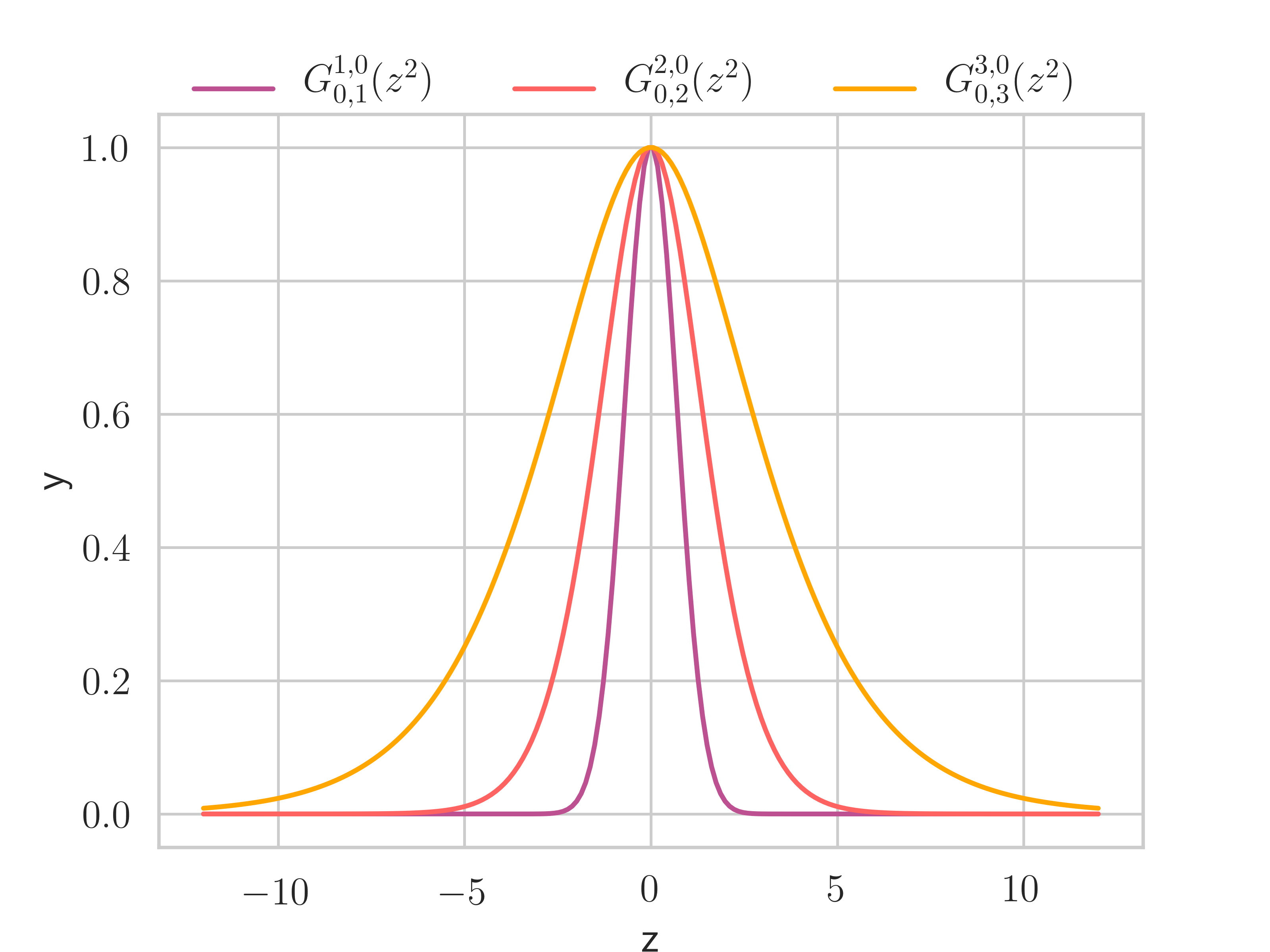}
        \caption{Plots of Meijer-G functions}
        \label{fig:meijer_g_plot}
    \end{subfigure}
    \begin{subfigure}[t]{0.38\textwidth}
        \includegraphics[width=\textwidth]{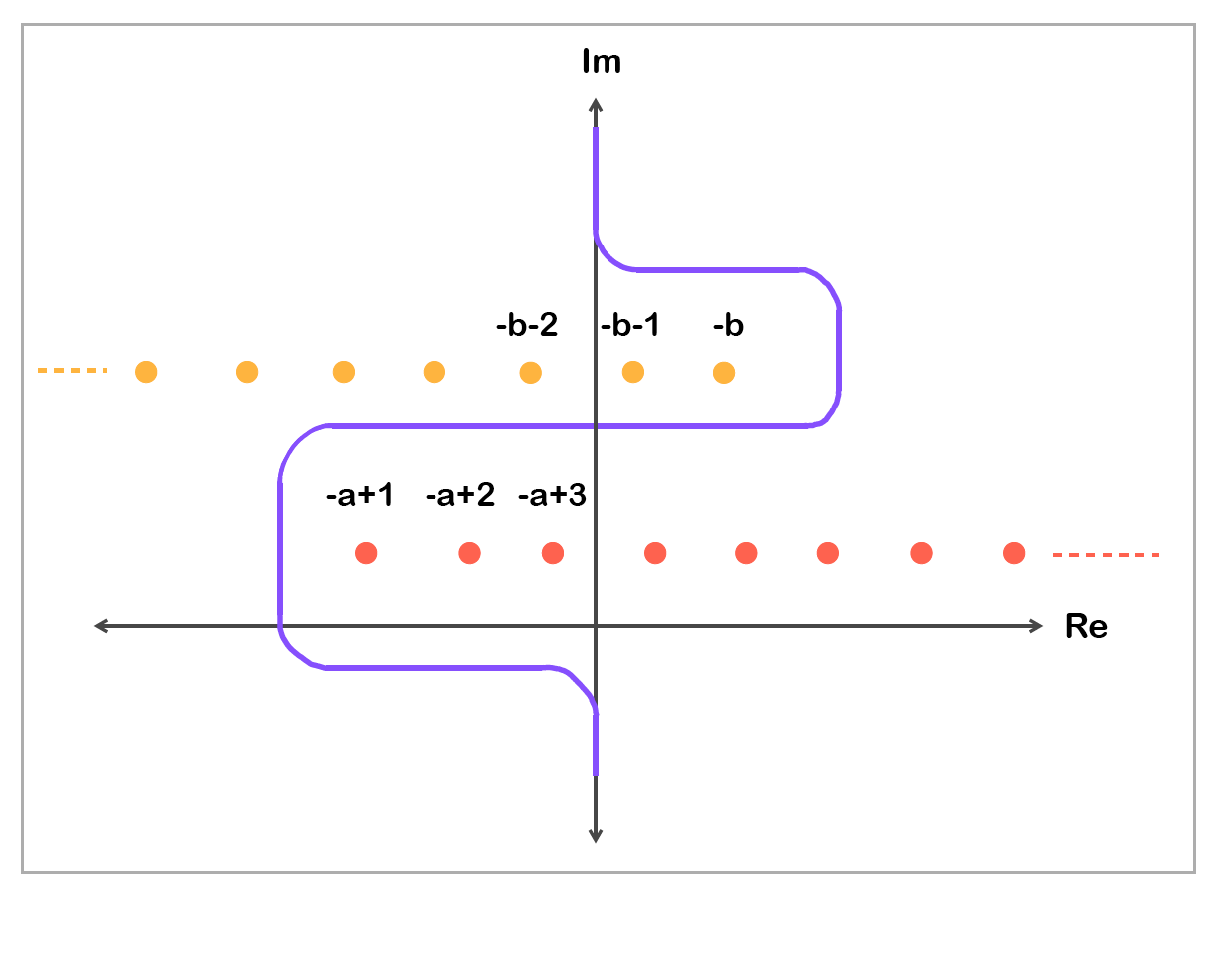}
        \caption{Visualization of integration path}
        \label{fig:meijer_g_path}
    \end{subfigure}
    \caption{(a) Plot of the Meijer-G functions $G^{l,0}_{0,l}\left(\cdot | \bm{b} \right)$ for $l=1, 2, 3$ and $\bm{b}=0, (0, 5), (0, 5, 5)$.
    (b)~An example path $\mathcal{L}$ in the complex plane. Notice how the orange singularities are always to the left of $\mathcal{L}$ and the red ones always to the right.}
    \label{meijer_g}
\end{figure}

For $s \in \mathbb{C}$ with real-part $\mathfrak{R}(s) >0$, denote by $\Gamma(s)$ the Gamma function defined as 
\begin{equation}
    \Gamma(s) = \int_{0}^{\infty}x^{s-1}e^{-x}dx .
\end{equation}

We then consider the analytic continuation of $\Gamma$, which extends it to the entire complex plane, as frequently done in complex analysis. Now fix $m,n,p,q \in \mathbb{N}$ such that $0 \leq m \leq q$ and $0 \leq n \leq p$ and consider $\bm{a} \in \mathbb{R}^{p}$, $\bm{b} \in \mathbb{R}^{q}$ such that $a_{i}-b_{j} \not \in \mathbb{Z}_{>0}$ $\forall i=1,\dots, p$ and $j=1,\dots, q$.
The Meijer-G function \citep{meijer1936uber, mathai1993handbook, mathai2006generalized}, is defined as:
\begin{equation}
    G^{m,n}_{p,q}\left(z | \begin{smallmatrix}\bm{a}\\ \bm{b} \end{smallmatrix} \right) = \frac{1}{2\pi i} \int_{\mathcal{L}} \chi(s) z^{-s} ds ,
\end{equation}
where:
\begin{equation}
    \chi(s) = \frac{\prod_{j=1}^{m}\Gamma(b_j + s) \prod_{j=1}^{n}\Gamma(1 - a_j - s)}{\prod_{j=m+1}^{q}\Gamma(1 - b_j - s) \prod_{j=n+1}^{p}\Gamma(a_j + s)} ,
\end{equation}
and the integration path $\mathcal{L}$ defines a suitable complex curve, described in the following. Recall that the function $\Gamma(s)$ has poles at $0, -1, -2, \dots$ all the way to $-\infty$. Hence, $\Gamma(1-a_j-s)$ has poles $-a_{j}+1, -a_{j}+2, \dots$, all the way to $\infty$, and $\Gamma(b_j + s)$ has poles $-b_j, -b_{j}-1, \dots$, all the way to $-\infty$. 
The path $\mathcal{L}$ is defined such that the poles of $\Gamma(b_j + s)$ are to the left of $\mathcal{L}$ and the ones of $\Gamma(1-a_j-s)$ are to the right of it. The condition $a_{i}-b_{j} \not \in \mathbb{Z}_{>0}$ makes sure we can find such a separation as this implies that the poles do not overlap. We illustrate an example of such a path in Figure \ref{fig:meijer_g_path}. 
In red we display the poles of $\Gamma(1-a_j+s)$  and in orange the poles of $\Gamma(b_j + s)$.
For interested readers we refer to \citet{Beals2013MeijerGA} for a more extensive introduction.

The defining property of Meijer-G functions is their closure under integration, i.e.\ the convolution of two Meijer-G functions is again a Meijer-G function.
Combined with the fact that most elementary functions can be written as a Meijer-G function, this property becomes extremely powerful at expressing complicated integrals neatly. 
Our proofs leverage this result extensively by expressing the integrands encountered in the prior predictive function as Meijer-G functions.  \\[2mm]
Throughout this text, we will only encounter Meijer-G functions of a simpler signature $G^{l,0}_{0,l}\left(\cdot | \bm{b} \right)$. For completeness, we show its functional form here:  
\begin{equation}
    G^{l,0}_{0,l}\left(z | \bm{b} \right) = \frac{1}{2\pi i} \int_{\mathcal{L}} z^{-s} \prod_{j=1}^{l}\Gamma(b_j + s)  ds .
\end{equation}
\noindent
Small values for $m \in \mathbb{N}$ correspond to familiar functions such as the exponential $G^{1,0}_{0,1}\left(z | 0 \right) = e^{-z}$ and the modified Bessel function of second kind $G^{2,0}_{0,2}\left(\frac{z^2}{4}\big{|}[\frac{\nu}{2}, -\frac{\nu}{2}] \right) = 2K_{\nu}(z)$. We visualize several Meijer-G functions of this form in Figure~\ref{fig:meijer_g_plot}. For illustrative purposes, we normalize the functions to have a maximum value of $1$.


\section{Predictive Priors for Neural Networks}
\label{sec:pred_prior}
In this section we detail our theoretical results on the predictive prior distribution implied by a fully-connected neural network with Gaussian weights, both with and without ReLU non-linearities. 
\paragraph{Linear Networks:}
First, we consider linear networks, i.e.\ fully-connected networks where the post-activations coincide with the pre-activations. They can be characterized as the product of Gaussian random matrices, for which the result in terms of Meijer-G functions is known (consider for instance \citet{ipsen2015products}). 
To highlight the differences between the linear and the non-linear approach, we re-prove the linear case leveraging our proof technique and notation. 
For simplicity, we assume w.l.o.g. that the input is normalized, i.e $|| \bm{x} {||} = 1$. We now present the resulting distribution of the predictive prior: 
\begin{tcolorbox}[boxrule=0pt, sharp corners]
\begin{theorem}
\label{thm:distr_linear_network_any_width}
Suppose $l \geq 1$, and the input has dimension $d_0$. Then, the joint marginal density of the random vector $\bm{f}^{(l)}$ (i.e.\ the density of the $l$-th layer pre-activations) is proportional to:
\begin{equation}
    p(\bm{f}^{(l)}) \propto G^{l, 0}_{0,l}\left(\frac{||\bm{f}^{(l)}||^2}{2^l  \sigma^2} \bigg \rvert  0, \frac{1}{2}\left(d_1 - d_l\right), \dots, \frac{1}{2}\left( d_{l-1} - d_l\right) \right) ,
\end{equation}
where $\sigma^2 = \prod_{i=1}^l \sigma_{i}^2$ .
\end{theorem}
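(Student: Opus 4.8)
The plan is to use that in a linear network the post-activations coincide with the pre-activations, so $\bm{f}^{(l)}=\bm{W}^{(l)\top}\cdots\bm{W}^{(1)\top}\bm{x}$, and to reduce everything to the distribution of the squared norm $R_l:=\|\bm{f}^{(l)}\|^2$. Conditioning on $\bm{f}^{(l)}$, each coordinate of $\bm{f}^{(l+1)}=\bm{W}^{(l+1)\top}\bm{f}^{(l)}$ is a zero-mean Gaussian of variance $\sigma_{l+1}^2\|\bm{f}^{(l)}\|^2$, independent across coordinates, so $\bm{f}^{(l+1)}\mid\bm{f}^{(l)}\sim\mathcal{N}(\bm{0},\sigma_{l+1}^2\|\bm{f}^{(l)}\|^2\mathbf{I}_{d_{l+1}})$. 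This conditional law is isotropic, hence the marginal of $\bm{f}^{(l)}$ is spherically symmetric and $p(\bm{f}^{(l)})$ depends only on $r=\|\bm{f}^{(l)}\|^2$. Moreover $R_{l+1}\stackrel{d}{=}\sigma_{l+1}^2R_l\,V_{l+1}$ with $V_{l+1}\sim\chi^2_{d_{l+1}}$ independent of $R_l$; unrolling from the base case $R_1\stackrel{d}{=}\sigma_1^2\chi^2_{d_1}$ (which uses $\|\bm{x}\|=1$) yields $R_l\stackrel{d}{=}\big(\prod_{i=1}^l\sigma_i^2\big)\prod_{i=1}^l V_i$, a scaled product of $l$ independent chi-squared variables.

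Next I would recover the density of this product through the Mellin transform, which factorizes over independent products. With $\mathbb{E}[V_i^s]=2^s\Gamma(s+d_i/2)/\Gamma(d_i/2)$, the Mellin transform of the density $\rho_{R_l}$ of $R_l$ is $\mathbb{E}[R_l^{s-1}]=(2^l\sigma^2)^{s-1}\prod_{i=1}^l\Gamma(s-1+d_i/2)/\Gamma(d_i/2)$, where $\sigma^2=\prod_i\sigma_i^2$. Taking the inverse Mellin transform, shifting the integration variable by one, and comparing with $G^{l,0}_{0,l}(z\mid\bm{b})=\tfrac{1}{2\pi i}\int_{\mathcal{L}}z^{-s}\prod_j\Gamma(b_j+s)\,ds$, I would read off $\rho_{R_l}(r)\propto r^{-1}\,G^{l,0}_{0,l}(r/(2^l\sigma^2)\mid d_1/2,\dots,d_l/2)$; the poles of all factors $\Gamma(s+d_i/2)$ lie to the left of $\mathcal{L}$, matching the $(m,n)=(l,0)$ signature.

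Finally I would pass from the scalar density $\rho_{R_l}$ to the density of the vector $\bm{f}^{(l)}$. Since $p(\bm{f}^{(l)})$ is a function of $r=\|\bm{f}^{(l)}\|^2$ alone, the radial change of variables contributes a Jacobian $r^{d_l/2-1}$, so $p(\bm{f}^{(l)})\propto r^{1-d_l/2}\rho_{R_l}(r)\propto r^{-d_l/2}\,G^{l,0}_{0,l}(r/(2^l\sigma^2)\mid d_1/2,\dots,d_l/2)$. I would then apply the Meijer-G power-shift identity $z^{\alpha}G^{m,n}_{p,q}(z\mid\bm{b})=G^{m,n}_{p,q}(z\mid\bm{b}+\alpha)$ (valid since multiplying by $z^\alpha$ merely translates the contour variable) with $\alpha=-d_l/2$, sending $d_i/2\mapsto(d_i-d_l)/2$. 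Because the $i=l$ entry becomes $0$ and the Meijer-G is symmetric in its $\bm{b}$ arguments, this produces exactly $0,\tfrac{1}{2}(d_1-d_l),\dots,\tfrac{1}{2}(d_{l-1}-d_l)$, establishing the claim.

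I expect the main obstacle to be the bookkeeping in this last step: getting the radial Jacobian exactly right and then applying the power-shift identity so that the natural parameters $d_i/2$ collapse to the stated $0$ and $(d_i-d_l)/2$, while absorbing the leftover $(2^l\sigma^2)^{-d_l/2}$ constant into the proportionality. A secondary technical point is justifying the placement of the Mellin inversion contour, i.e.\ choosing $\mathfrak{R}(s)>1-\tfrac12\min_i d_i$ so that $\mathbb{E}[R_l^{s-1}]$ is finite and the inverse transform is well defined.
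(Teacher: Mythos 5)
Your proposal is correct, but it takes a genuinely different route from the paper's. The paper proves Theorem~\ref{thm:distr_linear_network_any_width} by induction at the level of densities: it conditions on $\bm{f}^{(l-1)}$, applies the law of total probability, and solves the resulting hyper-spherical integral with a dedicated closure lemma (Lemma~\ref{lemma:integral_for_induction}), which turns the convolution of the conditional Gaussian with the previous layer's Meijer-G density into a new Meijer-G function, one layer at a time. You instead collapse everything onto the scalar $R_l=\|\bm{f}^{(l)}\|^2$, observe the exact recursion $R_{l+1}=\sigma_{l+1}^2 R_l V_{l+1}$ with $V_{l+1}\sim\chi^2_{d_{l+1}}$ independent of the past, and exploit multiplicativity of the Mellin transform over independent products --- essentially the classical Springer--Thompson argument for products of random variables, which the paper cites but does not use. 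A single Mellin inversion then yields the Meijer-G, and your passage back to the vector density via spherical symmetry, the radial Jacobian $r^{d_l/2-1}$, and the power-shift identity (Proposition~\ref{prop:meijer_g_identities}) is exactly right; the paper's appendix uses the same ingredients in reverse when computing moments and the normalization constant. Two small points you should make explicit: (i) mutual independence of $V_1,\dots,V_l$ when unrolling follows because the conditional law of $V_{l+1}$ given $\bm{W}^{(1)},\dots,\bm{W}^{(l)}$ is the same $\chi^2_{d_{l+1}}$ for every conditioning value, and (ii) the Mellin inversion is justified since the product of Gamma factors decays exponentially along vertical lines, which you partially acknowledge. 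As for what each approach buys: yours is shorter, makes the product-of-independent-$\chi^2$ structure of $\|\bm{f}^{(l)}\|^2$ transparent, and delivers the normalization constant (cf.\ Lemma~\ref{lemma:linear_net_distr_norm_const}) and all moments (cf.\ Appendix~\ref{sec:proof_moments_linear}) essentially for free; the paper's conditioning-plus-integral-lemma machinery is heavier here but is reused almost verbatim for the ReLU case (Theorem~\ref{THM:DISTR_RELU_NETWORK_ANY_WIDTH}), where each layer additionally injects a binomial mixture over active units and the clean independent-$\chi^2$ factorization must be replaced by mixtures of Beta-type factors --- bookkeeping that the density-level induction over active/inactive cells organizes naturally.
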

\end{tcolorbox}
Our proof is based on an inductive technique by conditioning on the pre-activations $\bm{f}^{(l)}$ of the previous layer while analyzing the pre-activations of the next layer. Due to space constraints, we defer the full proof of Thm~\ref{thm:distr_linear_network_any_width} to the Appendix \ref{sec:proof_thm_linear_net_any_width}.  In the following, to give a flavor of our technique and highlight the great utility of the Meijer-G function, we present the base case as well as the key technical result (Lemma~\ref{lemma:integral_for_induction}) to perform the inductive step to obtain the final statement. 

\paragraph{Base case ($L=1$) :}
Here we restrict our attention to the first layer pre-activations which, given an input $\bm{x}$, are defined as:
\begin{equation}
    f^{(1)}_k(\bm{x}) = \sum_{j=1}^{d_0} W^{(1)}_{jk}\cdot x_j .
\end{equation}

 Conditioned on the input, this is a sum of $d_0$ i.i.d. Gaussian random variables {$W^{(1)}_{jk} \sim \mathcal{N}(0, \sigma^2_1)$}, which is again Gaussian with mean zero and variance $ \sigma_1^2 || \bm{x} ||^2$, i.e.\ $f^{(1)}_k(\bm{x}) \sim \mathcal{N}(0,  \sigma_1^2 || \bm{x} ||^2) \stackrel{(d)}{=}\mathcal{N}(0,  \sigma_1^2)$.
 where the last equality follows from $|| \x {||} = 1$.
 As we are conditioning on the input, the joint distribution of the first layer units is composed of $d_1$ independent Gaussians and hence
\begin{equation}
    \bm{f}^{(1)}(\bm{x}) \sim \mathcal{N}(0,  \sigma_1^2 \bm{I}) \; .
\end{equation}
Indeed, as anticipated from Thm.~\ref{thm:distr_linear_network_any_width}, the corresponding Meijer-G function encodes the Gaussian density:
\begin{equation}
    G^{1,0}_{0,1}\left(\frac{||\bm{f}^{(1)}||^2}{2 \sigma_1^2} \bigg \rvert 0 \right) = \exp\left(-\frac{||\bm{f}^{(1)}||^2}{2 \sigma_1^2} \right) .
\end{equation}

\paragraph{Induction Step:}

Conditioning on the previous pre-activations $\bm{f}^{(l-1)}$ brings us into a similar setting as in the base case because the resulting conditional distribution is again a Gaussian: 
\begin{equation}
    p(\bm{f}^{(l)}|\bm{f}^{(l-1)}) = \mathcal{N}(\bm{0}, \sigma_{l}^2 ||\bm{f}^{(l-1)}{||}_2^2) .
\end{equation}

In contrast to the fixed input $\bm{x}$, we now need to apply the law of total probability to integrate out the dependence on $\bm{f}^{(l-1)}$, leveraging the induction hypothesis for $p(\bm{f}^{(l-1)})$, to obtain the marginal distribution $p(\bm{f}^{(l)})$. This is where the Meijer-G function comes in handy as it can easily express such an integral. In combination with the closedness of the family under integration, this enables us to perform an inductive proof. We summarize this in the following Lemma:
\begin{tcolorbox}[boxrule=0pt, sharp corners]
\begin{lemma}
\label{lemma:integral_for_induction}
Let $\bm{f}^{l}$ and $\bm{f}^{l-1}$ be $d_l$-dimensional and $d_{l-1}$-dimensional vectors, respectively, where $l > 1$, $l \in \mathbb{N}$. Let $\sigma_{l}^2 > 0$, $\tilde{\sigma}^2 >0$ and $b_1, \dots b_{l-1} \in \mathbb{R}$. Then the following integral:
\begin{equation}
    I := \int_{\mathbb{R}^{d_{l-1}}} p(\bm{f}^{(l)}|\bm{f}^{(l-1)}) G_{0,l-1}^{l-1,0}\left(\frac{|| \bm{f}^{(l-1)}||^{2}}{2^{l-1}\tilde{\sigma}^{2} } \bigg\rvert b_1, \dots, b_{l-1}\right) d\bm{f}^{l-1} ,
\end{equation}
can be expressed as:
\begin{align}
     I = \frac{1}{C} G_{0,l}^{l,0}\left(\frac{|| \bm{f}^{(l)}||^2}{2^{l} \sigma^2 } \bigg\rvert 0, \frac{1}{2}(d_{l-1} - d_l) + b_1 , \dots, \frac{1}{2}(d_{l-1} - d_l) + b_{l-1}\right) ,
\end{align}
where $\sigma^2 = \sigma_{l}^2 \tilde{\sigma}^2$, and
$C \in \mathbb{R}$ is a constant available in closed-form.
\end{lemma}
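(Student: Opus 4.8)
The plan is to collapse the $d_{l-1}$-dimensional integral into a one-dimensional Mellin-type integral and then recognize the result as a single Meijer-G function. The crucial observation is that \emph{both} factors of the integrand depend on $\bm{f}^{(l-1)}$ only through its norm: the conditional density $p(\bm{f}^{(l)}\mid\bm{f}^{(l-1)})=(2\pi\sigma_l^2\|\bm{f}^{(l-1)}\|^2)^{-d_l/2}\exp\!\big(-\|\bm{f}^{(l)}\|^2/(2\sigma_l^2\|\bm{f}^{(l-1)}\|^2)\big)$ is isotropic given $\|\bm{f}^{(l-1)}\|$, and the Meijer-G factor is an explicit function of $\|\bm{f}^{(l-1)}\|^2$. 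Hence I would first pass to spherical coordinates $r=\|\bm{f}^{(l-1)}\|$, so that the angular integration produces only the constant surface area $\omega_{d_{l-1}-1}=2\pi^{d_{l-1}/2}/\Gamma(d_{l-1}/2)$, which I absorb into the prefactor $C$.

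After substituting $u=r^2$ (so that $r^{d_{l-1}-1}\,dr=\tfrac12 u^{d_{l-1}/2-1}\,du$) and collecting the powers of $u$ coming from the Gaussian normalization $r^{-d_l}=u^{-d_l/2}$, the integral reduces, up to an explicit multiplicative constant $K$, to
\begin{equation}
 I = K\int_0^\infty u^{\rho-1}\,e^{-c/u}\,G_{0,l-1}^{l-1,0}\!\left(\frac{u}{\beta}\,\Big\rvert\, b_1,\dots,b_{l-1}\right)du,
\end{equation}
where $\rho=\tfrac12(d_{l-1}-d_l)$, $c=\|\bm{f}^{(l)}\|^2/(2\sigma_l^2)$ and $\beta=2^{l-1}\tilde\sigma^2$. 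This is exactly the setting in which the Meijer-G machinery pays off, since $I/K$ is now the Mellin transform of a product of two functions whose transforms are known.

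The core step is to insert the Mellin--Barnes representation of the exponential, $e^{-c/u}=\frac{1}{2\pi i}\int_{\mathcal{L}'}\Gamma(t)\,c^{-t}u^{t}\,dt$ (the defining integral of $G^{1,0}_{0,1}(c/u\mid 0)=e^{-c/u}$), interchange the order of integration, and carry out the inner $u$-integral using the defining property of the Meijer-G function, namely that $G^{l-1,0}_{0,l-1}$ is the inverse Mellin transform of $\prod_{j}\Gamma(b_j+\cdot)$. This gives $\int_0^\infty u^{\rho+t-1}G_{0,l-1}^{l-1,0}(u/\beta\mid\bm{b})\,du=\beta^{\rho+t}\prod_{j=1}^{l-1}\Gamma(b_j+\rho+t)$. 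Substituting back and using $c^{-t}\beta^{t}=(c/\beta)^{-t}$ yields
\begin{equation}
 \frac{I}{K\beta^{\rho}} = \frac{1}{2\pi i}\int_{\mathcal{L}'}\Gamma(0+t)\prod_{j=1}^{l-1}\Gamma\big((b_j+\rho)+t\big)\left(\frac{c}{\beta}\right)^{-t}dt,
\end{equation}
which I recognize as $G_{0,l}^{l,0}\big(c/\beta \,\big\rvert\, 0,\,b_1+\rho,\dots,b_{l-1}+\rho\big)$ by reading off the $l$ Gamma factors $\Gamma(0+t),\Gamma((b_1+\rho)+t),\dots,\Gamma((b_{l-1}+\rho)+t)$. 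Finally, $c/\beta=\|\bm{f}^{(l)}\|^2/(2^l\sigma_l^2\tilde\sigma^2)=\|\bm{f}^{(l)}\|^2/(2^l\sigma^2)$ with $\sigma^2=\sigma_l^2\tilde\sigma^2$, and $b_j+\rho=\tfrac12(d_{l-1}-d_l)+b_j$, which is precisely the claimed statement, with $1/C=K\beta^\rho$ collecting the (closed-form) surface-area, Jacobian, Gaussian-normalization and $\beta^\rho$ factors.

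I expect the main obstacle to be the analytic bookkeeping that makes the formal manipulations rigorous, rather than the parameter identification itself. In particular, I would need to check that the Fubini interchange between the Mellin--Barnes contour $\mathcal{L}'$ and the $u$-integral is legitimate, which amounts to choosing the contour so that the poles of $\Gamma(t)$ lie to its left, i.e.\ $\mathfrak{R}(t)>0$, while simultaneously ensuring the inner Mellin transform $\int_0^\infty u^{\rho+t-1}G_{0,l-1}^{l-1,0}(u/\beta\mid\bm{b})\,du$ converges, i.e.\ $\mathfrak{R}(\rho+t)>-\min_j b_j$, and verifying that these two requirements are compatible. One must also confirm that the resulting parameter vector satisfies the admissibility condition $a_i-b_j\notin\mathbb{Z}_{>0}$ from the definition of the Meijer-G function, so that the output is a bona fide Meijer-G density; this is where the inductive structure feeds back, since the $b_j$ produced here become the inputs of the next step.
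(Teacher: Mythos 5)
Your proposal is correct and takes essentially the same route as the paper's proof: spherical coordinates with the angular integral absorbed into the constant, the substitution to the squared radius, the Mellin--Barnes representation of the exponential factor, an interchange of the two integrals, and the Mellin transform of $G^{l-1,0}_{0,l-1}$ producing the product of Gamma functions that is then read off as $G^{l,0}_{0,l}$ with shifted parameters $b_j+\tfrac12(d_{l-1}-d_l)$. The only differences are cosmetic: the paper substitutes $x=r^2/(2^{l-1}\tilde\sigma^2)$ so the scale factor $\beta$ never appears explicitly, and it justifies the interchange by positivity (Tonelli) rather than by the contour-placement analysis you sketch.
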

\end{tcolorbox}
The proof largely relies on well-known integral identities involving the Meijer-G function. Again, due to its technical nature, we refer for the full proof and for the exact constant $C$ to the Appendix \ref{lemma:proof_integral_for_induction} and \ref{sec:normalization_constant}, respectively. 

\paragraph{ReLU Networks:}
\label{sec:prior_predictive_relu}
In the previous paragraph we have computed the prior predictive distribution for a linear network with Gaussian weights. Now, we extend these results to ReLU networks. The proof technique used is very similar to its linear counterpart, the main difference stems from the need to decompose the distribution over active and inactive ReLU cells. As a consequence, the resulting density is a superposition of different Meijer-G functions, each associated with a different active (linear) subnetwork. This is presented in the following:

\begin{tcolorbox}[boxrule=0pt, sharp corners]
\begin{theorem}
\label{THM:DISTR_RELU_NETWORK_ANY_WIDTH}
Suppose $l \geq 2$, and the input has dimension $d_0$. Define the multi-index set $\mathcal{R} = [d_1] \times \dots \times [d_{l-1}]$ and introduce the vector $\bm{u}^{\bm{r}} \in \mathbb{R}^{l-1}$ through its components $\bm{u}^{\bm{r}}_i = \frac{1}{2}(r_i-d_l)$.

\begin{equation}
    p(\bm{f}_{\text{ReLU}}^{(l)}) = \sum_{\bm{r} \in \mathcal{R}}q_{\bm{r}}G^{l, 0}_{0,l}\left(\frac{||\bm{f}_{\text{ReLU}}^{(l)}||^2}{2^l  \sigma^{2}} \bigg \rvert 0, \bm{u}^{\bm{r}} \right) + q_0 \delta(\bm{f}_{\text{ReLU}}^{(l)}) ,
\end{equation}
where $\sigma^2 = \prod_{i=1}^l\sigma_i^2$ and the individual weights are given by

\begin{equation}
    q_{\bm{r}} =\pi^{-\frac{d_l}{2}}2^{-\frac{l}{2}d_l} (\sigma^2)^{-\frac{d_l}{2}}\prod_{i=1}^{l-1}\binom{d_{i}}{r_i} \frac{1}{2^{d_{i}} \Gamma\left( \frac{r_i}{2}\right)}  ,
\end{equation}
and 
\begin{equation}
    q_0 = 1 - \prod_{i=1}^{l-1}\frac{2^{d_i}-1}{2^{d_i}}
\end{equation}
\end{theorem}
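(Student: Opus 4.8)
The plan is to prove the statement by induction on the depth $l$, following the linear argument of Thm.~\ref{thm:distr_linear_network_any_width} but handling the piecewise-linear ReLU through a decomposition over activation patterns. As induction hypothesis I assume that $p(\bm{f}^{(l-1)})$ already has the stated form: a spherically symmetric superposition of $G^{l-1,0}_{0,l-1}$ functions indexed by $\bm{r}' \in [d_1]\times\dots\times[d_{l-2}]$, where each $r_i$ records the number of \emph{inactive} (negative pre-activation) units in layer $i$, so that $d_i-r_i$ active units carry signal forward — consistent with the fact that $\bm{u}^{\bm{r}}_i=\tfrac12(d_i-r_i-d_l)$ is the linear parameter $\tfrac12(\text{width}-d_l)$ with effective width $d_i-r_i$. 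As in the linear case I condition on the previous layer, $p(\bm{f}^{(l)}) = \int_{\mathbb{R}^{d_{l-1}}} \mathcal{N}(\bm{f}^{(l)};\bm{0},\sigma_l^2\|\sigma(\bm{f}^{(l-1)})\|^2\bm{I})\,p(\bm{f}^{(l-1)})\,d\bm{f}^{(l-1)}$. The essential new difficulty is that the conditional variance $\sigma_l^2\|\sigma(\bm{f}^{(l-1)})\|^2$ depends only on the positive coordinates, breaking the spherical symmetry that collapsed the linear integral to a single Meijer-G. The base case $l=2$ is a direct computation: after the single ReLU layer the active/inactive split yields a modified-Bessel (equivalently $G^{2,0}_{0,2}$) density, already displaying the claimed superposition.

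First I would split $\mathbb{R}^{d_{l-1}}$ into its $2^{d_{l-1}}$ sign orthants. On an orthant where a fixed set $S$ of coordinates is positive, $\|\sigma(\bm{f}^{(l-1)})\|^2=\|\bm{f}_S\|^2$, so only the active block enters the Gaussian, whereas the inactive block affects $p(\bm{f}^{(l-1)})$ only through $\|\bm{f}^{(l-1)}\|^2=\|\bm{f}_S\|^2+\|\bm{f}_{S^c}\|^2$. Since the hypothesis is spherically symmetric, every orthant with the same number $m:=|S^c|$ of inactive coordinates contributes identically; there are $\binom{d_{l-1}}{m}$ of them, which produces the binomial factor $\binom{d_{l-1}}{r_{l-1}}$ with $r_{l-1}=m$, and the orthant-volume fractions $2^{-m}$ and $2^{-(d_{l-1}-m)}$ combine into the $2^{-d_{l-1}}$ in $q_{\bm{r}}$.

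The heart of the proof is a two-stage integration for a fixed pattern. Passing to squared radii $A=\|\bm{f}_S\|^2$ and $\Xi=\|\bm{f}_{S^c}\|^2$, I would first marginalize the inactive block. Using the Mellin-Barnes representation of the hypothesis Meijer-G together with the Beta integral $\int_0^\infty \Xi^{m/2-1}(A+\Xi)^{-s}\,d\Xi = A^{m/2-s}\,\Gamma(m/2)\Gamma(s-m/2)/\Gamma(s)$, the ``$+\Xi$'' inside the argument turns into the Mellin factor $A^{m/2-s}$ plus an extra $\Gamma(s-m/2)/\Gamma(s)$; crucially the $\Gamma(m/2)$ produced here cancels the $1/\Gamma(m/2)$ from the inactive radial volume, leaving the active radial factor $1/\Gamma((d_{l-1}-m)/2)=1/\Gamma((d_i-r_i)/2)$. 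I would then integrate the active block against the Gaussian: in the Barnes representation this is the elementary integral $\int_0^\infty A^{\nu-1}e^{-C/A}\,dA=\Gamma(-\nu)C^{\nu}$ (the analytic core of Lemma~\ref{lemma:integral_for_induction}), which introduces $\Gamma(s+(d_l-d_{l-1})/2)$ and a power $\|\bm{f}^{(l)}\|^{(d_{l-1}-d_l)-2s}$.

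Finally I would substitute $s=\tilde{s}+\tfrac12(d_{l-1}-d_l)$. Already before the shift, the denominator $\Gamma(s)$ from the Beta step cancels the numerator $\Gamma(s)$ coming from the $b_1=0$ slot of the hypothesis, which collapses the signature back to $G^{l,0}_{0,l}$: exactly $l$ numerator factors $\Gamma(\cdot+s)$ remain, namely $\Gamma(s-m/2)$, the hypothesis parameters, and $\Gamma(s+(d_l-d_{l-1})/2)$. The shift then absorbs the stray power into the canonical $z^{-\tilde s}$ with $z=\|\bm{f}^{(l)}\|^2/(2^l\sigma^2)$ and relabels: the Gaussian factor becomes the new $b=0$ slot, the hypothesis parameters become $\tfrac12(d_i-r_i-d_l)$ for $i\le l-2$, and $\Gamma(s-m/2)$ becomes $\Gamma(\tilde{s}+\tfrac12((d_{l-1}-m)-d_l))$, i.e.\ the new entry $\tfrac12(d_{l-1}-r_{l-1}-d_l)$ — exactly $\bm{u}^{\bm{r}}$. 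Summing over patterns at layer $l-1$ extends the index set to $\mathcal{R}=\mathcal{R}'\times[d_{l-1}]$ and multiplies the constants into the stated $q_{\bm{r}}$, while positivity of every summand justifies the Fubini/Tonelli interchanges and the contour manipulation. I expect the main obstacle to be precisely this Mellin-Barnes bookkeeping: tracking the three $\Gamma$ factors introduced by the Beta identity, the Gaussian convolution and the shift, and verifying that they conspire to cancel and leave the clean $G^{l,0}_{0,l}$ signature with the active-width parameter, rather than a higher-order Meijer-G.
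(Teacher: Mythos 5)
Your proposal is correct and, at the level of strategy, coincides with the paper's proof: induction on depth with base case $l=2$, decomposition of the ReLU layer over activation patterns (your sign orthants are exactly the subsets $A$ of inactive units in the paper's Lemma~\ref{lemma:relu_transform_density}), and collapse of the $2^{d_{l-1}}$ patterns to cardinality classes by spherical symmetry, giving the binomial coefficients and the $2^{-d_{l-1}}$. Where you genuinely differ is in how the per-pattern integral is evaluated. The paper is modular: it integrates out the inactive block by a symmetry/marginalization argument, identifying the marginal density of the active sub-vector as the same Meijer-G form with $d_{l-1}$ replaced by $d_{l-1}-|A|$ (the remark after Lemma~\ref{lemma:second_layer_relu}), and then reuses the linear-case Lemma~\ref{lemma:integral_for_induction} verbatim, the only new ingredient being the positive-orthant angular constant. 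You instead work directly on the Mellin--Barnes integrand: your Beta-integral step is in effect an inline proof of that marginalization property (the cancellation $\Gamma(s)\cdot\Gamma(s-m/2)/\Gamma(s)$ is exactly what converts the $b_1=0$ slot into the active-width slot $-m/2$), and your elementary integral $\int_0^\infty A^{\nu-1}e^{-C/A}\,dA=\Gamma(-\nu)C^{\nu}$ together with the shift $s\mapsto\tilde s+\tfrac12(d_{l-1}-d_l)$ replays the content of Lemma~\ref{lemma:integral_for_induction} in explicit coordinates. I checked your Gamma bookkeeping and it is right: after the shift the parameters are $0$, $\tfrac12(d_i-r_i-d_l)$ for $i\le l-2$, and $\tfrac12(d_{l-1}-r_{l-1}-d_l)$, with argument $\|\bm{f}^{(l)}\|^2/(2^l\sigma^2)$, matching $\bm{u}^{\bm{r}}$. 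Your route buys self-containedness (no separate marginalization claim is needed, and all cancellations are visible); the paper's buys brevity by recycling the linear machinery. Two caveats, both present at the paper's own level of rigor as well: positivity justifies the Fubini--Tonelli swaps but not the contour manipulations per se (you need $\mathfrak{R}(s)>m/2$ on $\mathcal{L}$ for the Beta step and $\mathfrak{R}(s)>\tfrac12(d_{l-1}-d_l)$ for the Gaussian step, which can be arranged since all poles of the relevant Gamma factors lie to the left); and the fully-inactive pattern $r_{l-1}=d_{l-1}$ produces an atom at the origin in the true law, which the stated formula silently drops because its weight carries $1/\Gamma(0)=0$.
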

\end{tcolorbox}
We refer to Appendix \ref{sec:proof_thm_distr_relu_any_width} for the detailed proof. Observe that sub-networks with the same number of active units per layer induce the same density. The theorem reflects this symmetry by not summing over all possible sub-networks but only over the representatives of each equivalence class, absorbing the contribution of the other equivalent networks into $q_{\bm{r}}$. Indeed, notice that $|\mathcal{R}| = \prod_{i=1}^{l-1}d_i$, which is exactly the number of equivalent sub-networks contained in the original network. Notice also that $q_{\bm{r}}$ governs how much weight is assigned to the corresponding Meijer-G function. The normalization constant is incorporated into the weights $q_{\bm{r}}$ since the Meijer-G function does not integrate to $1$ (see Appendix \ref{lemma:linear_net_distr_norm_const}). As a result, the weights add up to the sum of the respective integration constants, and not to $1$.

\section{Analytic Insights into the Prior Predictive Distribution}
\label{sec:analytic_insights}
Here we highlight how one can use the mathematical machinery of Meijer-G functions to derive interesting insights,
relying on numerous mathematical results provided in the literature \citep{integrals1, integrals2, AndrewsLarryC2011Fgts}. With this rich line of work at our disposal, we can easily move from the rather abstract but mathematically convenient world of Meijer-G functions to very concrete results. We demonstrate this by recovering and extending the NNGP results provided in \citet{lee2018dnngp, matthews2018gaussian}. In particular, our analysis allows for simultaneous width and depth limits, showing how different limiting distributions emerge as a consequence of the growth with respect to $L$.  Finally, we characterize the heavy-tailedness of the prior-predictive for any width, providing further evidence that deeper models induce distributions with heavier tails, as observed in \citet{vladimirova2019understanding}. 
We hope that this work paves the way for further  progress in understanding priors, leveraging this novel connection through the rich literature on Meijer-G functions.

\subsection{Infinite Width: Recovering and Extending the NNGP for a Single Datapoint}
\label{limiting_section}
 We will start by giving an alternative proof in the linear case for the Gaussian behaviour emerging as the width of the network tends to infinity, recovering the results of \citet{lee2018dnngp,matthews2018gaussian} in the restricted setting of having just one fixed input $\bm{x}$. We extend their results in the following ways:
 \begin{itemize}
     \item[1.]  We provide a convergence proof  that is independent of the ordering of limits.
     \item[2.] We characterize the distributions arising from a simultaneous infinite width and depth limit, considering different growth rates for depth $L$.
 \end{itemize}  
 For ease of exposition, we focus on the equal width case, i.e.\ where $d_1 = \dots = d_{l-1} = m$ with one output $d_L = 1$. To have well-defined limits, one has to resort to the so-called NTK parametrization \citep{jacot2018neural}, which is achieved by setting the variances as $\sigma_1^2 = 1$ and $\sigma_i^2 = \frac{2}{m}$ \hspace{1mm} for $i=2, \dots, l$. We summarize the result in the following:  
\begin{tcolorbox}
\begin{theorem}
\label{relu_limit}
Consider the distribution of the output $p\left(f_{\text{ReLU}}^{(L)}\right)$, as defined in Thm.~\ref{THM:DISTR_RELU_NETWORK_ANY_WIDTH}. Denote  $X \sim \mathcal{N}(0,1)$, $Y \sim \mathcal{LN}(-\frac{5}{4}\gamma, \frac{5}{4}\gamma)$ for $X \perp Y$ and the resulting normal log-normal mixture by $Z=XY$, for $\gamma >0$. Let the depth grow as $L = c + \gamma m^{\beta}$ where $\beta \geq 0$ and $c \in \mathbb{N}$ fixed. Then it holds that 
\begin{equation}
    \mathbb{E}\left[\left(f^{(L)}_{\text{ReLU}}\right)^{2k}\right] \xrightarrow[]{m \xrightarrow[]{}\infty} \begin{cases} 
\mathbb{E}[X^{2k}] = (2k-1)!! \hspace{18mm} \text{if } \beta < 1\\[2mm]
\mathbb{E}[Z^{2k}] = e^{\frac{5}{2}\gamma k(k-1)}(2k-1)!! \hspace{3mm} \text{ if } \beta = 1\\[2mm]
\infty \hspace{44mm} \text{if } \beta>1
\end{cases} ,
\end{equation}
where $(2k-1)!! = (2k-1) \dots 3 \cdot 1$ denotes the double factorial (by symmetry, odd moments are zero). Moreover, for $\beta < 1$ it holds that 
\begin{equation}
    p(f^{(L)}_{\text{ReLU}}) \xrightarrow[]{d} X \hspace{4mm} \text{for } m \xrightarrow[]{} \infty ,
\end{equation}
for any $k > 1$.
\end{theorem}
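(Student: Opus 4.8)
The plan is to reduce the statement to an asymptotic analysis of the even moments, which are computable in closed form from Theorem~\ref{THM:DISTR_RELU_NETWORK_ANY_WIDTH}, and then to upgrade moment convergence to distributional convergence via the method of moments. Since $d_L=1$ the output is scalar, so I would compute $\mathbb{E}[(f^{(L)}_{\text{ReLU}})^{2k}]$ by integrating $f^{2k}$ against the density. Each summand is a $G^{L,0}_{0,L}$ function of $|f|^2/(2^L\sigma^2)$, so after the substitution $t=|f|^2/(2^L\sigma^2)$ the integral becomes the Mellin transform $\int_0^\infty t^{s-1}G^{L,0}_{0,L}(t\,|\,\bm b)\,dt=\prod_j\Gamma(b_j+s)$ evaluated at $s=k+\tfrac12$. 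Collecting the $\bm r$-independent prefactors of $q_{\bm r}$ together with the powers of $2^L\sigma^2$, this gives
\[
\mathbb{E}\big[(f^{(L)}_{\text{ReLU}})^{2k}\big]=\pi^{-1/2}\,\Gamma\big(k+\tfrac12\big)\sum_{\bm r\in\mathcal R}\prod_{i=1}^{L-1}\binom{m}{r_i}\frac{1}{2^m}\,\frac{\Gamma\big(\tfrac{m-r_i}{2}+k\big)}{\Gamma\big(\tfrac{m-r_i}{2}\big)} .
\]

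Next I would exploit that the summand factorizes over layers and that all hidden widths are equal, so the multi-index sum collapses to the $(L-1)$-th power of a single per-layer sum $S_k:=\sum_{r}\binom{m}{r}2^{-m}\,\Gamma\big(\tfrac{m-r}{2}+k\big)/\Gamma\big(\tfrac{m-r}{2}\big)$. Reading $\binom{m}{r}2^{-m}$ as the mass function of $R\sim\mathrm{Bin}(m,\tfrac12)$ and the Gamma ratio as the rising factorial $\prod_{j=0}^{k-1}\big(\tfrac{m-r}{2}+j\big)$, $S_k$ is the expectation of a degree-$k$ polynomial in the variable $\tfrac{m-R}{2}$, which has mean $m/4$ and variance $m/16$.

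The crux is a careful second-order expansion of $S_k$ in $1/m$. Expanding the rising factorial produces a subleading term of relative size $2k(k-1)/m$, while expanding the binomial expectation about its mean produces a variance correction of relative size $k(k-1)/(2m)$; summing the two gives $S_k=(m/4)^k\big(1+\tfrac{5k(k-1)}{2m}+O(m^{-2})\big)$, where the constant $\tfrac52=\tfrac12+2$ is exactly the sum of the two contributions. Raising to the power $L-1$ and combining with $\pi^{-1/2}\Gamma(k+\tfrac12)=(2k-1)!!\,2^{-k}$, the remaining powers of $2$, $m$ and $\sigma^2$ telescope to leave $\mathbb{E}[(f^{(L)}_{\text{ReLU}})^{2k}]=(2k-1)!!\,\exp\big(\tfrac{5k(k-1)(L-1)}{2m}(1+o(1))\big)$. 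Substituting $L-1=c-1+\gamma m^{\beta}$ makes the exponent $\tfrac{5k(k-1)}{2}\big(\tfrac{c-1}{m}+\gamma m^{\beta-1}\big)$, whose limit is $0$, $\tfrac52\gamma k(k-1)$, or $+\infty$ according as $\beta<1$, $\beta=1$, or $\beta>1$; a direct moment computation for the normal log-normal mixture $Z=XY$ confirms the $\beta=1$ value. Finally, for $\beta<1$ all even moments converge to $(2k-1)!!$ and odd moments vanish by symmetry, so since the standard Gaussian is moment-determinate (Carleman's condition), the method of moments yields $f^{(L)}_{\text{ReLU}}\xrightarrow[]{d}X$.

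The main obstacle I anticipate is precisely this second-order expansion: one must extract the $O(1/m)$ correction from both the rising factorial and the binomial variance to pin down the constant $\tfrac52$, and then justify exponentiating after raising $S_k$ to the power $L-1\sim\gamma m^{\beta}$. This requires showing the $O(m^{-2})$ remainder contributes $O(m^{\beta-2})\to0$ when multiplied by $L-1$, so that the exponentiation is valid in the relevant regimes $\beta\le1$; the boundary terms $r\in\{0,m\}$ omitted from the binomial reading are exponentially small and harmless.
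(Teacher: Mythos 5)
Your proposal is correct and follows essentially the same route as the paper: closed-form even moments via the Mellin transform of the Meijer-G density, collapse of the multi-index sum to the $(L-1)$-th power of a per-layer binomial sum, a second-order expansion in $1/m$ yielding the coefficient $\tfrac{5k(k-1)}{2}$ (the paper tracks polynomial coefficients via its lemmas on Gamma-ratios and binomial moments, where you use a mean/variance expansion of $\mathrm{Bin}(m,\tfrac{1}{2})$ --- the same computation with different bookkeeping), followed by the exponential limit and the method of moments with moment-determinacy of the Gaussian for $\beta<1$. The only blemish is that your displayed moment formula omits the prefactor $(2^{L}\sigma^{2})^{k}$, but since you later note that the powers of $2$, $m$ and $\sigma^{2}$ telescope under the NTK parametrization and your final expansion agrees with the paper's, this is a notational slip rather than a gap.
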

\end{tcolorbox}
The proof leverages the fact that we can easily calculate all the moments of $f^{(L)}_{\text{ReLU}}$ at finite width using well-known integral identities involving the Meijer-G function. For the limit, care has to be taken as the moments of $f^{(L)}_{\text{ReLU}}$ involve the moments of the Binomial distribution $\operatorname{Bin}(m, \frac{1}{2})$ for which recursive but no analytic formulas are available. By carefully studying the leading coefficients of the resulting polynomial, we can establish the result in Thm.~\ref{relu_limit}. \\[2mm]
This result is in stark contrast to \citet{lee2018dnngp, matthews2018gaussian} which cannot deal with a simultaneous depth limit due to the inductive nature of their proof, but can only describe the special case of fixed depth ($\gamma=1, \beta=0$).
Notice that for depth growing asymptotically at a slower rate than width ($L=\gamma m^{\beta}$ for $\beta<1$), we obtain the convergence in distribution since the Gaussian distribution is identified by its moments (See Theorem 30.2 in \citet{Bill86}). For $\beta=1$, we can also prove convergence of the moments and identify them as arising from the normal log-normal mixture $Z$ \citep{lognormal}. Unfortunately, this does not suffice to conclude convergence in distribution as there are no known results on the identifiability of $Z$. Empirical evidence however, presented in Figure~\ref{fig:logcdf_relu} and Figure~\ref{fig:pdf_relu}, suggests that the normal log-normal mixture captures the distribution to an excellent degree of fidelity. We demonstrate this through the following experiment. Note first that $Z \xrightarrow[]{} \mathcal{N}(0,1)$ in law as $\gamma \xrightarrow[]{} 0$, which is expected, as for $\gamma \xrightarrow[]{} 0$, we assume a fixed depth $L=c$. We can hence use this result in the non-asymptotic regime by setting $\gamma = \frac{L}{m}$, expecting to interpolate between the two distributions as we vary depth and width. We illustrate the empirical findings in Figure~\ref{fig:logcdf_relu} and Figure~\ref{fig:pdf_relu}. We find an astonishing match to the true distribution both in terms of CDF and PDF. Moreover, as expected, we recover the Gaussian distribution for small values of $\gamma$.  \\[2mm]
The decoupling of the moments into two separate factors gives insights into the role of width and depth on the shape of the distribution. The emergence of the log-normal factor in this infinite depth limit highlights how deeper networks encourage more heavy-tailed distribution, if not countered by a sufficient amount of width. This becomes even more drastic once depth outgrows width ($\beta>1$), leading to divergence of all even moments. In the following section we will show that also in the finite width regime, heavy-tailedness becomes more pronounced as we increase the depth. 

\begin{figure}[ht!]
    \centering
    \begin{subfigure}[ht]{0.3\textwidth}
        \includegraphics[width=\textwidth]{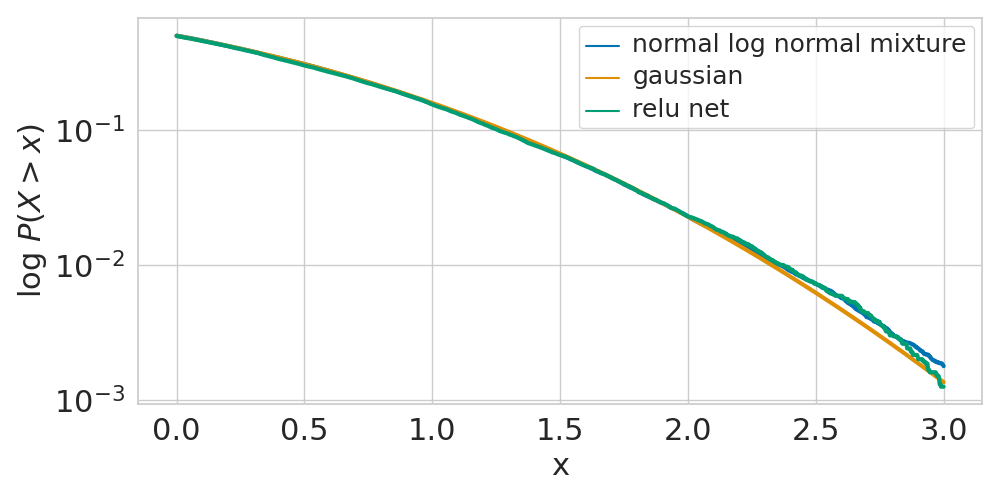}
        \caption{$\gamma = 0.01$}
        \label{fig:gamma_001_relu_cdf}
    \end{subfigure}
    \begin{subfigure}[ht]{0.3\textwidth}
        \includegraphics[width=\textwidth]{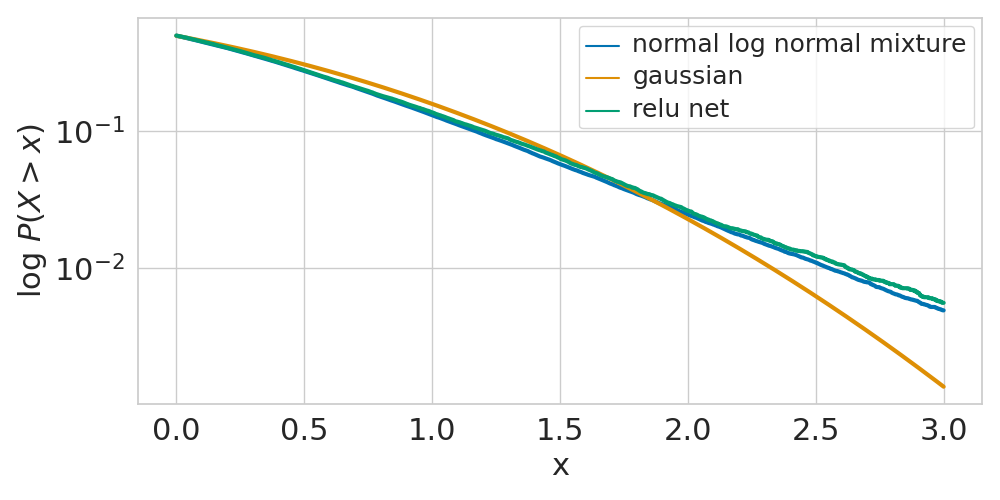}
        \caption{$\gamma = 0.1$}
        \label{fig:gamma_01_relu_cdf}
    \end{subfigure}
    \begin{subfigure}[ht]{0.3\textwidth}
        \includegraphics[width=\textwidth]{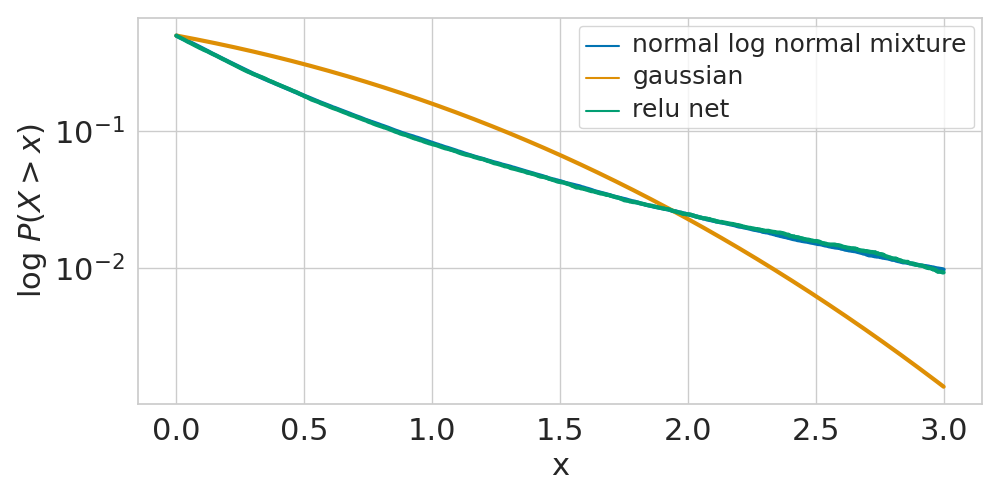}
        \caption{$\gamma = 0.5$}
        \label{fig:gamma_05_cdf}
    \end{subfigure}
    \caption{\textbf{Convergence to normal log-normal mixture:} We display the CDFs of a ReLU network, standard Gaussian CDF and the normal log-normal mixture. The width is fixed to $100$  while depth is increased from $1$ to $50$ using the parameter $\gamma$. The neural network CDF is constructed empirically by drawing $10^{4}$ samples from the prior. Note that as depth increases, the CDF departs from the Gaussian but still follows the normal log-normal mixture.}
    \label{fig:logcdf_relu}
\end{figure}

\begin{figure}[ht!]
    \centering
    \begin{subfigure}[ht]{0.3\textwidth}
        \includegraphics[width=\textwidth]{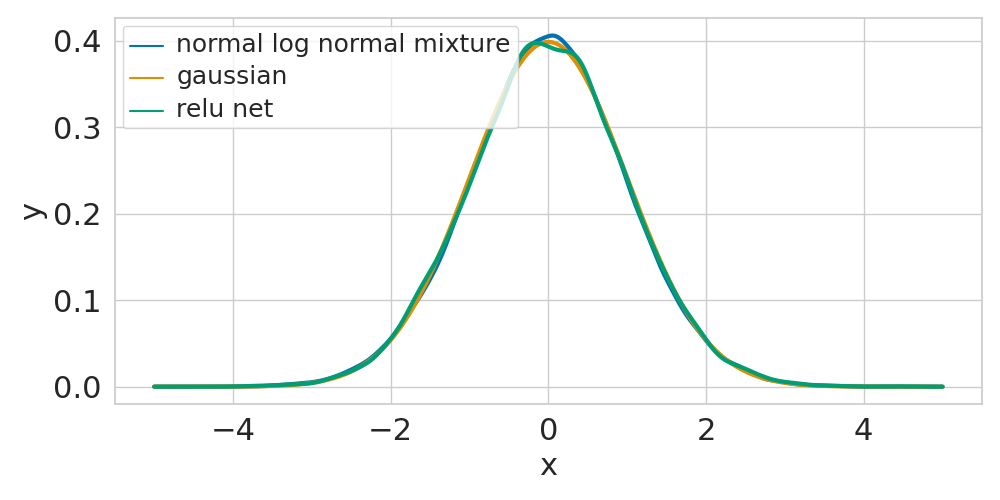}
        \caption{$\gamma = 0.01$}
        \label{fig:gamma_001_relu_pdf}
    \end{subfigure}
    \begin{subfigure}[ht]{0.3\textwidth}
        \includegraphics[width=\textwidth]{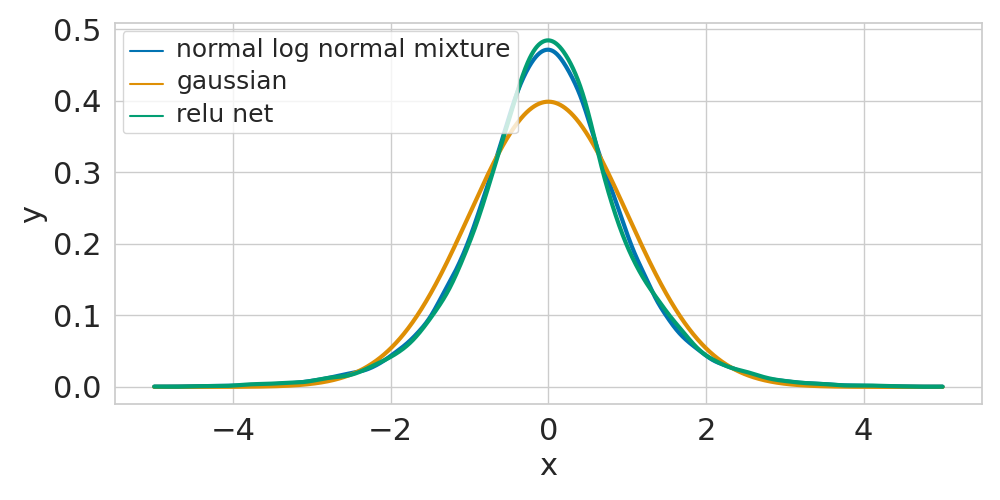}
        \caption{$\gamma = 0.1$}
        \label{fig:gamma_01_relu_pdf}
    \end{subfigure}
    \begin{subfigure}[ht]{0.3\textwidth}
        \includegraphics[width=\textwidth]{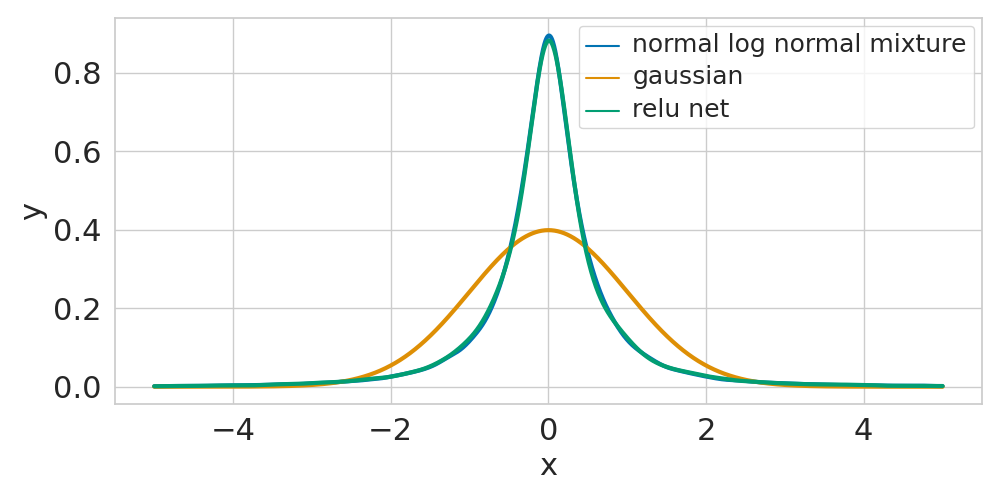}
        \caption{$\gamma = 0.5$}
        \label{fig:gamma_05_pdf}
    \end{subfigure}
    \caption{\textbf{Convergence to normal log-normal mixture:} We show the density plots for the same setting as in Figure \ref{fig:logcdf_relu}. Note that as depth increases, the density departs from the Gaussian but still follows the normal log-normal mixture.}
    \label{fig:pdf_relu}
\end{figure}

\subsection{Heavy-tailedness Increases with Depth}
\label{kurtosis}
From the moments analysis, it is simple to recover a known fact about the prior distribution of neural networks, namely that deeper layers are increasingly heavy-tailed \citep{vladimirova2019understanding}. To see this, we can derive the kurtosis, a standard measure of tailedness of the distribution \citep{westfall2014kurtosis}, defined as:
\begin{equation}
    \kappa := \frac{\mathbb{E}\left[\left(X - \mathbb{E}[X] \right)^{4}\right]}{\mathbb{V}[X]^2},
\end{equation}
where $X$ is a univariate random variable with finite fourth moment.
We can calculate the kurtosis of a ReLU network analytically at any width, relying on closed-form results for the lower order moments of the Binomial distribution. We outline the exact calculation in the Appendix \ref{relu_proofs} and state the resulting expression here:
\begin{equation}
    \kappa_{\text{ReLU}}(m,L) = 3 \left(\frac{m+5}{m}\right)^{L-1} .
\end{equation}

Note how the kurtosis increases with depth $L$ and decreases with the width $m$, highlighting once again the opposite roles those two parameters take regarding the shape of the prior.
As expected, for fixed depth $L$, the kurtosis converges to $3$, i.e.\ $\lim_{m \xrightarrow[]{}\infty} \kappa_{\text{ReLU}}(m,L) =3$, which is the kurtosis of a standard Gaussian variable $\mathcal{N}(0,1)$. For the simultaneous limit $L=\gamma m$, we find a value of $3e^{5\gamma}$, which exactly matches with the expression derived in Thm.~\ref{relu_limit}:
\begin{equation}
    \lim_{m \xrightarrow[]{}\infty}\kappa_{\text{ReLU}}(m, \gamma m) = \frac{\mathbb{E}[Z^4]}{\mathbb{E}[Z^2]^2} = \frac{e^{\frac{5}{2}\gamma  2}3!!}{e^{0}1!!} = 3e^{5\gamma} .
\end{equation}


\section{Prior Design}
\label{sec:prior_design}
In Section \ref{kurtosis}, we have already outlined how the choice of architecture influences the heavy-tailedness of the distribution. If a Gaussian-like output is desired, the architecture should be designed in such a way that the width $m$ significantly exceeds the depth $L$ (small $\gamma)$, while heavy-tailed predictive priors can be achieved by considering regimes where $L$ exceeds $m$ (big $\gamma)$. 
As we will see shortly after in this section, another consequence of our analysis is that we can now directly work with the variance in  function space, instead of implicitly tuning it by changing the variances at each layer. In this way, the variance over the weights has a clear interpretation in terms of predictive variance, making it easier for the deep learning practitioner to take an informed decision when designing a prior for BNNs.
This type of variance analysis has previously been used to devise  better initialization schemes for neural networks coined "He-initialization" \citep{he2015prelu} in the Gaussian case and "Xavier initialization" \citep{pmlr-v9-glorot10a} for uniform weight initializations. 
Therefore, we coin the resulting prior \textit{Generalized He-prior}.

\paragraph{Predictive variance:} We consider the parametrization $\sigma_1^2 = 1$ and $\sigma_i^2 = \frac{2t_i^2}{m}$, which, as we show in Appendix \ref{relu_proofs}, leads to the following predictive variance:
\begin{equation}
    \mathbb{V}[Z] = \sigma_{1}^2 \prod_{i=2}^{l}t_i^2 .
\end{equation}
 Suppose we want a desired output variance $\sigma^2$. This can be achieved as follows: let $a_i$, with $i = 2, \dots, l$, be $l-1$ coefficients such that $\sum_{i=1}^{l-1}a_i = l-1$. Then choose $t_i^2 = \left(\sigma^2\right)^{\frac{a_i}{l-1}}$, implying that the layer variances $\sigma_i^2$ are given as
\begin{equation}
    \sigma_{i}^2 = \frac{2}{m}\left(\sigma^2\right)^{\frac{a_i}{l-1}} ,
\end{equation}
He-priors correspond to the special case where $\sigma^2 = 1$ and $a_i = 1$, $i=1, \dots, l-1$, while a standard Gaussian $N(0, 1)$ prior results in $\sigma^2 = \left(\frac{m}{2}\right)^{l-1}$, $\sigma_1^2 = 1$, and $a_i = 1$, $i=1, \dots, l-1$. Note how "far" He-priors can be from standard Gaussian for relatively deep and wide neural networks. While rather innocent-looking, using a standard Gaussian $\mathcal{N}(0,1)$ prior can lead to an extremely high output variance.  

Combining with the previous insights, practitioners can now choose a desired output variance along with a desired level of heavy-tailedness in a controlled manner by specifying the architecture, i.e.\ the width $m$ and the depth $L$.

\section{Discussion}
Our work sheds light on the shape of the prior predictive distribution arising from imposing a Gaussian distribution on the weights. Leveraging the machinery of Meijer-G functions, we characterized the density of the output in the finite width regime and derived analytic insights into its properties such as moments and heavy-tailedness. An extension to the stochastic process setting in the spirit of \citet{lee2018dnngp, matthews2018gaussian} as well as to convolutional architectures \citep{novak2019bayesiandeepcnngp} could bring theory even closer to practice and we expect similar results to also hold in those cases. This is however beyond the scope of our work and we leave it as future work. 

Our technique enabled us to extend the NNGP framework to infinite depth, discovering how in the more general case, the resulting distribution shares the same moments as a normal log-normal distribution. This allowed us to disentangle the roles of width and depth, where the former induces a Gaussian-like distribution while the latter encourages heavier tails.  Empirically, we found that the normal log-normal mixture provides an excellent fit to the true distribution even in the non-asymptotic setting, capturing both the cumulative and probability density function to a very high degree of accuracy. This surprising observation begs further theoretical and empirical investigations. In particular, discovering a suitable stochastic process incorporating the normal log-normal mixture, could lend further insights into the inner workings of neural networks. Moreover, the role of heavy-tailedness regarding generalization is very intriguing, potentially being an important reason underlying the gap between infinite width networks and their finite counterparts. This is also in-line with recent empirical works on priors, suggesting that heavy-tailed distributions can increase the performance significantly \citep{fortuin2021bayesian}. 

Using these insights, we described how one can choose a fixed prior variance directly in the output space along with a desired level of heavy-tailedness resulting from the choice of the architecture. We leave it as future work to also consider higher moments to give more nuanced control over the resulting prior in the function space.
Finally, we hope that the introduction of the Meijer-G function sparks more theoretical research on BNN priors and their implied inductive biases in function space.

\bibliographystyle{apalike}
\bibliography{00_main}

\newpage


\appendix
\addcontentsline{toc}{section}{Appendix}
\part{Appendix} 
\parttoc 


\section{Properties of Meijer-G function}

Here we describe the properties of Meijer-G functions which we will use extensively in the following.

The first result concerns the Mellin transform of the Meijer-G function, which will be the key to solve the integrals that we will face later. 
\label{sec:prop_meijer_g}
\begin{proposition}[ Mellin transform of the Meijer G function]
\label{prop:mellin_transf_g}
\begin{equation}
    \int_0^{\infty} x^{s-1} G^{m,n}_{p,q}(wx) = w^{-s} \frac{\prod_{j=1}^{m}\Gamma(b_j + s) \prod_{j=1}^{n}\Gamma(1 - a_j - s)}{\prod_{j=m+1}^{q}\Gamma(1 - b_j - s) \prod_{j=n+1}^{p}\Gamma(a_j + s)} .
\end{equation}
\end{proposition}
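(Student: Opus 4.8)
The plan is to observe that the right-hand side is exactly the defining kernel $\chi(s)$ of the Meijer-G function, rescaled by $w^{-s}$; hence the identity is, up to analytic bookkeeping, a restatement of Mellin inversion combined with the scaling property of the Mellin transform. Recall that by definition
\begin{equation}
    G^{m,n}_{p,q}(z) = \frac{1}{2\pi i}\int_{\mathcal{L}} \chi(s)\, z^{-s}\, ds ,
\end{equation}
which exhibits $G^{m,n}_{p,q}$ precisely as the inverse Mellin transform of $\chi$ along the contour $\mathcal{L}$. The claim to be proved is that applying the forward Mellin transform (with the dilated argument $wx$) recovers $w^{-s}\chi(s)$, and writing $\chi(s)$ as the ratio of Gamma products gives the stated formula.

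First I would carry out the elementary reduction to the unscaled case. Substituting $u = wx$ yields
\begin{equation}
    \int_0^{\infty} x^{s-1}\, G^{m,n}_{p,q}(wx)\, dx = w^{-s}\int_0^{\infty} u^{s-1}\, G^{m,n}_{p,q}(u)\, du ,
\end{equation}
so it remains to show that the unscaled Mellin transform equals $\chi(s)$. For this I would insert the Mellin--Barnes representation of $G^{m,n}_{p,q}(u)$, interchange the order of integration, and invoke the fact that the forward and inverse Mellin transforms are mutual inverses on the fundamental strip determined by $\mathcal{L}$. Formally,
\begin{equation}
    \int_0^{\infty} u^{s-1}\, G^{m,n}_{p,q}(u)\, du = \frac{1}{2\pi i}\int_{\mathcal{L}} \chi(t)\left(\int_0^{\infty} u^{s-1-t}\, du\right) dt ,
\end{equation}
and the inner integral acts as the Mellin analogue of a Dirac mass selecting $t = s$, so that the outer integral collapses to $\chi(s)$.

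The main obstacle is precisely the interchange of integration in the last display: the inner integral $\int_0^{\infty} u^{s-1-t}\, du$ does not converge pointwise for any fixed $s, t$, so Fubini cannot be applied naively and the delta-mass heuristic must be justified. The standard resolution is to exploit the analytic and decay properties of $\chi$ along vertical lines: the Gamma factors in the numerator force $|\chi(\sigma + i\tau)|$ to decay exponentially as $|\tau|\to\infty$ within the strip, which places $G^{m,n}_{p,q}$ in a function class possessing a non-empty fundamental strip on which its Mellin transform converges absolutely, and on which Mellin inversion holds verbatim. I would therefore appeal to the Mellin inversion theorem (as recorded in the references on the Meijer-G function cited earlier, e.g.\ \citet{Beals2013MeijerGA, mathai1993handbook}) to make the formal computation rigorous as an identity of analytic functions of $s$ on the strip, extending to the full region where both sides are defined by analytic continuation. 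The remaining steps—the substitution and the rewriting of $\chi(s)$ as the explicit Gamma ratio—are routine.
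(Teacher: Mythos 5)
Your outline is sound, but note first that the paper does not actually prove this proposition: its ``proof'' is a citation to Chapters 2.3 and 3.2 of \cite{mathai2006generalized}, together with a remark that the conditions of validity hold for the subclass it needs ($n=p=0$, $m=q=l$, real coefficients). What you have written is essentially the argument underlying that citation: reduce to $w=1$ by the substitution $u=wx$, regard $G^{m,n}_{p,q}$ as the inverse Mellin transform of $\chi$ along $\mathcal{L}$, and then appeal to the Mellin inversion theorem rather than to the (correctly flagged as non-rigorous) Fubini/delta-mass heuristic. The one substantive caveat concerns your justification of the decay of $\chi$: it is not true for arbitrary parameters that ``the Gamma factors in the numerator force'' exponential decay. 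By Stirling's formula, $|\chi(\sigma+i\tau)|$ behaves like $C\,|\tau|^{\alpha}\exp\bigl(-\tfrac{\pi}{2}\bigl(2(m+n)-p-q\bigr)|\tau|\bigr)$ as $|\tau|\to\infty$, since each of the $m+n$ numerator factors contributes decay $e^{-\pi|\tau|/2}$ while each of the $(q-m)+(p-n)$ denominator factors contributes growth of the same order; so exponential decay along vertical lines holds only when $m+n>(p+q)/2$, and otherwise the Mellin--Barnes integral over a vertical contour need not converge absolutely and the transform pair requires a different interpretation (loop contours or a limiting sense). For the class the paper actually uses one has $m+n-(p+q)/2=l/2>0$ and a nonempty fundamental strip $\mathfrak{R}(s)>-\min_j b_j$, so your argument goes through verbatim there; to prove the proposition at the generality at which it is stated, you would need to add this convergence condition explicitly (or restrict, as the paper implicitly does, to the subclass with $n=p=0$, $m=q=l$).
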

\begin{proof}
See Chapter 3.2 and 2.3 of \cite{mathai2006generalized}. Conditions of validity: for the class of Meijer-G functions that we consider here ($p$, $n = 0$, $m=q=l$ and the coefficients are all real) the Mellin transform exists (see \cite{mathai2006generalized}, Section 2.3.1). 
\end{proof}
To establish the base case $m=1$, we need the following results. 
\begin{proposition}
\label{prop:meijer_g_identities}
The following identities hold:
\begin{itemize}
    \item $\exp(z) = G^{1,0}_{0,1}\left(-z \bigg \rvert 0 \right)$ .
    \item multiplication by power property: $z^{d}  G^{m,n}_{p,q}\left(z | \begin{smallmatrix}a_1 & \dots & a_p \\ b_1 & \dots & b_q \end{smallmatrix} \right) =  G^{m,n}_{p,q}\left(z | \begin{smallmatrix}a_1 + d & \dots & a_p + d \\ b_1 + d & \dots & b_q + d \end{smallmatrix} \right)$ . 
\end{itemize}
\begin{proof}
See Chapter 2.6 of \cite{mathai2006generalized} for the first identity. The last property follows directly from the definition of Meijer-G function.
\end{proof}
\end{proposition}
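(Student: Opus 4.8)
The plan is to derive both identities directly from the Mellin--Barnes contour definition of the Meijer-G function, so that the first becomes a single explicit contour evaluation and the second a change of variables in that integral.

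For the exponential identity, I would specialize the defining integral to the signature $m=1$, $n=p=0$, $q=1$, $b_1=0$, which gives
\begin{equation}
    G^{1,0}_{0,1}\left(z \mid 0\right) = \frac{1}{2\pi i}\int_{\mathcal{L}} \Gamma(s)\, z^{-s}\, ds .
\end{equation}
There are two equivalent ways to finish. The cleanest is to invoke Proposition~\ref{prop:mellin_transf_g}: it shows that the Mellin transform of $G^{1,0}_{0,1}(\cdot \mid 0)$ equals $\Gamma(s)$, which is by definition also the Mellin transform of $x \mapsto e^{-x}$; injectivity of the Mellin transform on this class then forces $G^{1,0}_{0,1}(z\mid 0) = e^{-z}$, and replacing $z$ by $-z$ yields the stated $\exp(z) = G^{1,0}_{0,1}(-z\mid 0)$. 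Alternatively, and more self-containedly, I would close $\mathcal{L}$ to the left, picking up the poles of $\Gamma(s)$ at $s=-k$ for integers $k\geq 0$; using $\operatorname{Res}_{s=-k}\Gamma(s)=\frac{(-1)^k}{k!}$ and $z^{-s}\big|_{s=-k}=z^{k}$, the residue theorem gives $\sum_{k\geq 0}\frac{(-1)^k}{k!}z^{k}=e^{-z}$, confirming the claim.

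For the multiplication-by-power property, I would pull $z^d$ inside the defining integral and substitute $s \mapsto s-d$:
\begin{equation}
    z^{d}\, G^{m,n}_{p,q}\!\left(z \,\Big|\, \begin{smallmatrix}\bm{a}\\ \bm{b}\end{smallmatrix}\right) = \frac{1}{2\pi i}\int_{\mathcal{L}} \chi(s)\, z^{d-s}\, ds = \frac{1}{2\pi i}\int_{\mathcal{L}-d} \chi(s+d)\, z^{-s}\, ds .
\end{equation}
Inspecting $\chi(s+d)$ factor by factor, every $\Gamma(b_j+s)$ becomes $\Gamma((b_j+d)+s)$ and every $\Gamma(1-a_j-s)$ becomes $\Gamma(1-(a_j+d)-s)$, with the analogous shifts in the denominator; hence $\chi(s+d)$ is precisely the $\chi$-function of the parameters $\bm{a}+d$, $\bm{b}+d$, and the right-hand side is exactly $G^{m,n}_{p,q}(z\mid \bm{a}+d;\,\bm{b}+d)$.

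The one point deserving care, and the step I would treat as the main obstacle, is checking that the translated contour $\mathcal{L}-d$ is still an admissible path for the shifted parameters, i.e.\ that it separates the poles of the numerator Gamma factors as the definition demands. This holds because shifting every $a_j$ and $b_j$ by the same $d$ translates every pole by exactly $d$, while the contour is translated by the same amount, so the left/right separation is preserved verbatim; the non-overlap condition $a_i-b_j\notin\mathbb{Z}_{>0}$ is likewise invariant under a common shift. For the exponential identity the corresponding concern is merely the convergence of the residue sum when closing the contour, which is standard given the vertical decay of $\Gamma(s)$ and the control of $z^{-s}$ for $z>0$. With these admissibility checks in hand, both identities follow.
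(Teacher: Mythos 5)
Your proposal is correct, and it is in fact more than the paper itself provides: the paper's ``proof'' of this proposition is a citation to Chapter 2.6 of Mathai for the exponential identity and the bare assertion that the power property ``follows directly from the definition.'' Your contour-shift argument for the second identity is precisely the computation that assertion gestures at, carried out properly --- the substitution $s \mapsto s-d$, the identification of $\chi(s+d)$ with the $\chi$-function of the shifted parameters, and (importantly) the check that $\mathcal{L}-d$ remains an admissible separating contour, which is the one detail that could conceivably go wrong and which you correctly observe is automatic since poles and contour translate by the same amount. For the exponential identity, both of your routes are sound: the Mellin-uniqueness argument correctly piggybacks on Proposition~\ref{prop:mellin_transf_g} (no circularity, since that proposition is established independently), and the residue evaluation --- closing $\mathcal{L}$ to the left, $\operatorname{Res}_{s=-k}\Gamma(s) = \frac{(-1)^k}{k!}$, summing to $e^{-z}$ --- is the standard self-contained derivation; the convergence of that sum is indeed unproblematic here since $q>p$ permits closing the contour for all $z \neq 0$. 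The only point neither you nor the paper dwells on is that the statement with argument $-z$ implicitly invokes a branch choice for $(-z)^{-s}$, but that is a feature of the proposition as stated, not a gap in your argument. In short: where the paper outsources, you have supplied a complete and correct derivation from the Mellin--Barnes definition.
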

To perform the inductive step, we will encounter the following integral, that can be expressed in terms of the Meijer-G function.
\begin{proposition}
\begin{equation}
\label{eq:meijer_g_property_for_cdf}
    \int_1^{\infty} x^{-\rho}(x-1)^{\sigma - 1}G^{m,n}_{p,q}\left(\alpha x | \begin{smallmatrix}a_1 & \dots & a_p \\ b_1 & \dots & b_q \end{smallmatrix} \right) dx = \Gamma(\sigma) G^{m+1,n}_{p+1,q+1}\left(\alpha | \begin{smallmatrix}a_1 & \dots & a_p & \rho \\ \rho - \sigma & b_1 & \dots & b_q \end{smallmatrix} \right) .
\end{equation}
\end{proposition}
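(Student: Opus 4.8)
The plan is to prove the identity by substituting the Mellin--Barnes representation of the Meijer-G function and collapsing the $x$-integral to a Beta integral. Writing $G^{m,n}_{p,q}(\alpha x) = \frac{1}{2\pi i}\int_{\mathcal{L}}\chi(s)(\alpha x)^{-s}\,ds$ with the kernel $\chi$ as in the definition, I would insert this into the left-hand side and interchange the order of integration so that the outer integral over $x$ acts only on the elementary factor $x^{-\rho-s}(x-1)^{\sigma-1}$:
\begin{equation}
\int_1^{\infty} x^{-\rho}(x-1)^{\sigma-1}G^{m,n}_{p,q}(\alpha x)\,dx = \frac{1}{2\pi i}\int_{\mathcal{L}}\chi(s)\,\alpha^{-s}\left(\int_1^{\infty}x^{-\rho-s}(x-1)^{\sigma-1}\,dx\right)ds .
\end{equation}

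First I would evaluate the inner integral. The substitution $x = 1/t$ maps $[1,\infty)$ onto $(0,1]$ and turns it into a standard Beta integral,
\begin{equation}
\int_1^{\infty}x^{-\rho-s}(x-1)^{\sigma-1}\,dx = \int_0^1 t^{(\rho+s-\sigma)-1}(1-t)^{\sigma-1}\,dt = \frac{\Gamma(\rho+s-\sigma)\,\Gamma(\sigma)}{\Gamma(\rho+s)} ,
\end{equation}
which is valid provided $\mathfrak{R}(\sigma)>0$ and $\mathfrak{R}(\rho+s-\sigma)>0$ along the contour.

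Next comes the reassembly step. I would factor out $\Gamma(\sigma)$ and observe that multiplying $\chi(s)$ by $\Gamma(\rho+s-\sigma)/\Gamma(\rho+s)$ is precisely the effect of appending $\rho-\sigma$ as a new bottom parameter, placed first so that it enters the numerator product and raises both $m$ and $q$ by one, together with appending $\rho$ as a new top parameter, placed last so that it enters the denominator product and raises $p$ by one while $n$ stays fixed. A direct term-by-term comparison then shows the integrand equals $\chi'(s)$, the defining kernel of $G^{m+1,n}_{p+1,q+1}$ with top parameters $(a_1,\dots,a_p,\rho)$ and bottom parameters $(\rho-\sigma,b_1,\dots,b_q)$, yielding the claimed right-hand side $\Gamma(\sigma)\,G^{m+1,n}_{p+1,q+1}(\alpha|\cdots)$. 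I would also check that $\mathcal{L}$ still separates the poles correctly: the new poles of $\Gamma(\rho-\sigma+s)$ at $s=\sigma-\rho-k$ join the left-hand poles of the $\Gamma(b_j+s)$ factors, consistent with $\rho-\sigma$ being a bottom parameter, while $\Gamma(\rho+s)$ appears only in the denominator, consistent with $\rho$ being a top parameter, and the non-overlap condition $a_i-b_j\notin\mathbb{Z}_{>0}$ is preserved.

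The main obstacle is justifying the interchange of the two integrations. This requires the vertical contour to lie in a strip where $\mathfrak{R}(\rho+s-\sigma)>0$, so that the Beta integral converges, while simultaneously $\chi(s)$ decays fast enough along $\mathcal{L}$ (using the exponential decay of products of Gamma functions on vertical lines via Stirling's formula) for Fubini's theorem to apply to the resulting absolutely convergent double integral. I would therefore need to confirm that the strip of admissible contours is non-empty under the stated parameter assumptions; once Fubini is secured, what remains is the elementary Beta-integral evaluation and the bookkeeping of Gamma factors already described.
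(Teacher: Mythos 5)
Your proposal is correct, but it takes a genuinely different route from the paper, because the paper does not prove this proposition at all: it is quoted as a known table identity, with a pointer to Appendix~B of \cite{stojanac2017products} for the conditions of validity in the restricted class ($n=p=0$, $m=q=l$, real parameters) that the paper actually uses. Your argument---insert the Mellin--Barnes representation, interchange the integrals, evaluate the inner integral as a Beta integral via $x=1/t$ to get $\Gamma(\sigma)\Gamma(\rho+s-\sigma)/\Gamma(\rho+s)$, and absorb this ratio into the kernel---is the standard derivation underlying that table entry, and your bookkeeping is right: $\rho-\sigma$ placed first among the bottom parameters lands in the numerator product ($m\to m+1$, $q\to q+1$), $\rho$ placed last among the top parameters lands in the denominator product ($p\to p+1$, $n$ fixed), and the pole-separation check is self-consistent since Beta convergence forces $\mathfrak{R}(s)>\sigma-\rho$ on the contour, which is exactly what puts the new left poles $s=\sigma-\rho-k$ on the correct side. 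What your proof buys is self-containedness and transparency about where the hypotheses enter ($\mathfrak{R}(\sigma)>0$ and a contour confined to the half-plane $\mathfrak{R}(s)>\sigma-\rho$); what the paper's citation buys is generality, and this is where your one acknowledged obstacle genuinely bites: Fubini on a vertical contour needs absolute convergence of $\int_{\mathcal{L}}|\chi(s)\alpha^{-s}|\,|ds|$, which by Stirling requires roughly $2(m+n)>p+q$; outside that range the $G$-function is defined by loop contours or analytic continuation and your argument would need modification. For the class used in the paper this is harmless: with $m=q=l$, $n=p=0$ the kernel decays like $e^{-l\pi|\mathfrak{I}(s)|/2}$, the inner absolute integral is the constant $B(\rho+\mathfrak{R}(s)-\sigma,\sigma)$, and your proof goes through without gaps. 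One small refinement: the non-overlap condition is not automatically ``preserved''---the new pairs $a_i-(\rho-\sigma)$ for $i\le n$ must be assumed to avoid $\mathbb{Z}_{>0}$---but this is vacuous here since $n=0$, and note that the pair $(\rho,\rho-\sigma)$ is irrelevant because $\rho$ sits beyond position $n$, so the paper's application with $\sigma=1$ causes no conflict.
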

The conditions of validity for the class of Meijer-G that we consider here are again satisfied (see Appendix B of \cite{stojanac2017products}). 

\section{Proof for Linear Networks and  Derivation of their Moments}
 
Here we collect all the results regarding linear networks, establishing the relevant technical Lemmas to derive the density and calculate the moments of the resulting distribution.
\label{app:linear_proofs}
\begin{lemma}
The units of any layer are uncorrelated, i.e.\\ 
\begin{equation}
    Cov\left(f_k^{(l)}, f_{k'}^{(l)}\right) = 0 ,
\end{equation}
for all layers $l$, and for all $k$, $k'$ $\in [d_l]$
\end{lemma}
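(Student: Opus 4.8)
The plan is to compute the covariance directly from the recursive definition $f^{(l)}_k = \sum_{j=1}^{d_{l-1}} W^{(l)}_{jk}\, g^{(l-1)}_j$, exploiting the independence structure of the Gaussian weights. The only nontrivial case is $k \neq k'$; for $k = k'$ the quantity is the variance and is generally nonzero.

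First I would establish that $\mathbb{E}[f^{(l)}_k] = 0$. Writing out the sum and using that each weight $W^{(l)}_{jk} \sim \mathcal{N}(0, \sigma_l^2)$ of layer $l$ is independent of the post-activations $g^{(l-1)}_j$ — which are functions only of the weights in layers $1, \dots, l-1$ — the expectation factorizes as $\sum_j \mathbb{E}[W^{(l)}_{jk}]\,\mathbb{E}[g^{(l-1)}_j] = 0$, since the weights are zero-mean. Hence the covariance reduces to the cross moment $\mathbb{E}[f^{(l)}_k f^{(l)}_{k'}]$.

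Next, for the cross moment, I would expand the product $f^{(l)}_k f^{(l)}_{k'} = \sum_{j,j'} W^{(l)}_{jk} W^{(l)}_{j'k'}\, g^{(l-1)}_j g^{(l-1)}_{j'}$ and take expectations term by term. The key step is the factorization $\mathbb{E}[W^{(l)}_{jk} W^{(l)}_{j'k'}\, g^{(l-1)}_j g^{(l-1)}_{j'}] = \mathbb{E}[W^{(l)}_{jk} W^{(l)}_{j'k'}]\,\mathbb{E}[g^{(l-1)}_j g^{(l-1)}_{j'}]$, justified because the layer-$l$ weights are jointly independent of everything computed in the previous layers. Since $k \neq k'$, the weights $W^{(l)}_{jk}$ and $W^{(l)}_{j'k'}$ lie in distinct columns of $W^{(l)}$ and are therefore independent and zero-mean for every pair $(j, j')$, so $\mathbb{E}[W^{(l)}_{jk} W^{(l)}_{j'k'}] = 0$. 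Every term in the double sum vanishes, giving $\mathbb{E}[f^{(l)}_k f^{(l)}_{k'}] = 0$ and hence $\Cov(f^{(l)}_k, f^{(l)}_{k'}) = 0$.

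I do not expect a genuine obstacle here: the argument is a one-shot computation, and it never uses the specific (linear) form of $\sigma$ — it relies only on the independence of the final-layer weights from the earlier layers and on the independence of distinct columns of $W^{(l)}$. The one point requiring care is the clean justification of the factorization of expectations, i.e. making explicit that $W^{(l)}$ is drawn independently of $\bm{g}^{(l-1)}$; this is where I would be most careful to state the independence assumption precisely. One could alternatively phrase the whole argument via the tower property, conditioning on $\bm{g}^{(l-1)}$ and using $\mathbb{E}[f^{(l)}_k f^{(l)}_{k'} \mid \bm{g}^{(l-1)}] = \sum_{j,j'} \mathbb{E}[W^{(l)}_{jk} W^{(l)}_{j'k'}]\, g^{(l-1)}_j g^{(l-1)}_{j'} = 0$, which makes the column-independence of the weights the sole working ingredient.
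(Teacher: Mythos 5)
Your proof is correct and takes essentially the same route as the paper's: expand the double sum, factor the expectation using the independence of the layer-$l$ weights from everything computed in earlier layers, and kill each term using the zero-mean, column-independent weights. If anything, your version is slightly more careful than the paper's, since you explicitly restrict to $k \neq k'$ (where $\mathbb{E}\left[W^{(l)}_{jk} W^{(l)}_{j'k'}\right] = 0$ is actually valid) and note that $k = k'$ yields the generally nonzero variance, a caveat the paper's statement and proof gloss over.
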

\begin{proof}
\begin{align}
    Cov\left(f_k^{(l)}, f_{k'}^{(l)}\right) &= Cov\left(\sum_{j=1}^{d_{l-1}}f_j^{(l-1)}W_{jk}^{(l)}, \sum_{j'=1}^{d_{l-1}} f_{j'}^{(l-1)}W_{j'k'}^{(l)}\right) \\
    &= \mathbb{E}\left[\sum_{j=1}^{d_{l-1}}f_j^{(l-1)}W^{(l)}_{jk} \sum_{j'=1}^{d_{l-1}} f_{j'}^{(l-1)}W^{(l)}_{j'k'} \right] \\
    &= \sum_{j=1}^{d_{l-1}}\sum_{j'=1}^{d_{l-1}} \mathbb{E}\left[f_j^{(l-1)}W_{jk}^{(l)} f_{j'}^{(l-1)}W_{j'k'}^{(l)} \right] \\
    &= \sum_{j=1}^{d_{l-1}}\sum_{j'=1}^{d_{l-1}} \mathbb{E}\left[f_j^{(l-1)}f_{j'}^{(l-1)} \right] \mathbb{E}\left[W^{(l)}_{jk}\right] \mathbb{E}\left[W^{(l)}_{j'k'} \right] \\
    &= 0 .
\end{align}
\end{proof}

However, they are \textbf{not} independent, but only conditionally independent given the previous later's units. As a remark, note that as $d_1 \rightarrow \infty$, the units $f_k^{(2)}$ approach a Gaussian distribution, for which uncorrelation implies independence.

\subsection{Main technical Lemma for induction}
Here we prove the main technical Lemma (Lemma \ref{lemma:integral_for_induction}) that allows us to perform the inductive step.
\begin{tcolorbox}[boxrule=0pt, sharp corners]
\begin{lemma*}
\label{lemma:proof_integral_for_induction}
Let $\bm{f}^{l}$ and $\bm{f}^{l-1}$ be a $d_l$-dimensional and a $d_{l-1}$-dimensional vectors, respectively. Let $\sigma_w^2 > 0$, $\tilde{\sigma}^2 > 0$ be two variance parameters, and $b_1, \dots b_{l-1} \in \mathbb{R}$. Then the following integral:
\begin{equation}
    I := \int_{\mathbb{R}^{d_{l-1}}} \frac{1}{( ||\bm{f}^{(l-1)}||^2)^{\frac{d_l}{2}}} \text{e}^{-\frac{||\bm{f}^{(l)}||^2}{2 \sigma_w^2||\bm{f}^{(l-1)}||^2}} G_{0,l-1}^{l-1,0}\left(\frac{|| \bm{f}^{(l-1)}||^{2}}{2^{l-1}\tilde{\sigma}^2 } \bigg\rvert b_1, \dots, b_{l-1}\right) d\bm{f}^{l-1} ,
\end{equation}
has solution:
\begin{align}
     I = C G_{0,l}^{l,0}\left(\frac{|| \bm{f}^{(l)}||^2}{2^{l}\sigma^2 } \bigg\rvert 0, \frac{1}{2}(d_{l-1} - d_l) + b_1 , \dots, \frac{1}{2}(d_{l-1} - d_l) + b_{l-1}\right) ,
\end{align}
where $\sigma^2 := \sigma_w^2 \tilde{\sigma}^2$, and $C:= \frac{1}{2}\tilde{C}2^{\frac{1}{2}(d_{l-1} - d_l)(l-1)}\tilde{\sigma}^{(d_{l-1} - d_l)}$, where $\tilde{C}$ is a constant that depends only on $d_{l-1}$.
\end{lemma*}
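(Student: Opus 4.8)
The plan is to exploit that the entire integrand depends on $\bm{f}^{(l-1)}$ only through its norm, which collapses the $d_{l-1}$-dimensional integral to a one-dimensional one, and then to evaluate that one-dimensional integral as a Mellin convolution of two Meijer-G functions. First I would pass to spherical coordinates in $\mathbb{R}^{d_{l-1}}$: since the exponential factor, the power $(\|\bm{f}^{(l-1)}\|^2)^{-d_l/2}$, and the inner Meijer-G all depend on $\bm{f}^{(l-1)}$ only through $r = \|\bm{f}^{(l-1)}\|$, the angular integration contributes the surface area of $S^{d_{l-1}-1}$, namely $\tfrac{2\pi^{d_{l-1}/2}}{\Gamma(d_{l-1}/2)}$, which depends only on $d_{l-1}$ and will play the role of $\tilde{C}$. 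Substituting $u = r^2$ turns $r^{d_{l-1}-1}\,dr$ into $\tfrac{1}{2}u^{d_{l-1}/2-1}\,du$, and after combining with the explicit $u^{-d_l/2}$ the integral reduces to $\tfrac{\tilde{C}}{2}\int_0^\infty u^{\frac{1}{2}(d_{l-1}-d_l)-1}\, e^{-A/u}\, G^{l-1,0}_{0,l-1}\!\big(\tfrac{u}{2^{l-1}\tilde{\sigma}^2}\,\big|\,b_1,\dots,b_{l-1}\big)\,du$, where $A = \tfrac{\|\bm{f}^{(l)}\|^2}{2\sigma_w^2}$.

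Next I would rewrite the exponential as a Meijer-G function via the identity in Proposition~\ref{prop:meijer_g_identities}, $e^{-z} = G^{1,0}_{0,1}(z\,|\,0)$, so that the integrand is a product of two Meijer-G functions whose arguments are proportional to $1/u$ and to $u$ respectively --- exactly the structure of a Mellin convolution. To evaluate it, I would insert the Mellin--Barnes representation $G^{1,0}_{0,1}(\tfrac{A}{u}\,|\,0) = \tfrac{1}{2\pi i}\int_{\mathcal{L}}\Gamma(s)\,A^{-s}u^{s}\,ds$, exchange the order of the $u$- and $s$-integrations, and recognize the inner $u$-integral as the Mellin transform of the remaining Meijer-G evaluated at the shifted argument $\tfrac{1}{2}(d_{l-1}-d_l)+s$. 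Proposition~\ref{prop:mellin_transf_g} then gives it in closed form as $\big(2^{l-1}\tilde{\sigma}^2\big)^{\frac{1}{2}(d_{l-1}-d_l)+s}\prod_{j=1}^{l-1}\Gamma\!\big(b_j+\tfrac{1}{2}(d_{l-1}-d_l)+s\big)$. Substituting back, the surviving contour integral has integrand $\Gamma(s)\prod_{j=1}^{l-1}\Gamma\!\big(b_j+\tfrac{1}{2}(d_{l-1}-d_l)+s\big)\,(Aw)^{-s}$ with $w=(2^{l-1}\tilde{\sigma}^2)^{-1}$; since $Aw = \tfrac{\|\bm{f}^{(l)}\|^2}{2^l\sigma^2}$, this is precisely $G^{l,0}_{0,l}$ with lower parameters $0,\tfrac{1}{2}(d_{l-1}-d_l)+b_1,\dots,\tfrac{1}{2}(d_{l-1}-d_l)+b_{l-1}$. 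Collecting the prefactors $\tfrac{\tilde{C}}{2}$ and $w^{-\frac{1}{2}(d_{l-1}-d_l)} = 2^{\frac{1}{2}(d_{l-1}-d_l)(l-1)}\tilde{\sigma}^{d_{l-1}-d_l}$ reproduces the stated constant $C$.

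The main obstacle is rigor in the contour manipulations rather than the algebra. I would need to justify the Fubini interchange of the $u$- and $s$-integrations (absolute convergence along a suitable vertical contour $\mathcal{L}$, using the rapid decay of $\Gamma(s)$ and of the inner Meijer-G integrand), and to verify that the poles of $\Gamma(s)$ and of the factors $\Gamma\!\big(b_j+\tfrac{1}{2}(d_{l-1}-d_l)+s\big)$ admit a separating contour --- i.e.\ that the well-definedness condition $a_i-b_j\notin\mathbb{Z}_{>0}$ for the resulting $G^{l,0}_{0,l}$ is compatible with the parameters inherited from the induction hypothesis. These convergence and pole-separation conditions are exactly those guaranteed for the restricted signature $G^{l,0}_{0,l}$ with real parameters used throughout (cf.\ the validity remarks accompanying Proposition~\ref{prop:mellin_transf_g}), so the interchange is licensed; alternatively one could bypass the explicit contour argument entirely by invoking the standard closed-form identity for the Mellin convolution of two Meijer-G functions. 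What remains is then purely the bookkeeping of multiplicative constants, confirming they assemble into $C = \tfrac{1}{2}\tilde{C}\,2^{\frac{1}{2}(d_{l-1}-d_l)(l-1)}\tilde{\sigma}^{d_{l-1}-d_l}$.
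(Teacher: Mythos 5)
Your proposal is correct and follows essentially the same route as the paper's proof: reduction to a radial integral via spherical coordinates (with the angular constant $\tilde{C}$), the substitution $u=r^2$, rewriting the exponential as $G^{1,0}_{0,1}$, inserting its Mellin--Barnes representation, swapping the order of integration, and closing the inner integral with the Mellin transform of the Meijer-G function (Proposition~\ref{prop:mellin_transf_g}) to reassemble the contour integral as the claimed $G^{l,0}_{0,l}$ with the stated constant. The only cosmetic difference is that the paper normalizes the substitution to $x = r^2/(2^{l-1}\tilde{\sigma}^2)$ while you keep $u=r^2$ and absorb the factor $(2^{l-1}\tilde{\sigma}^2)^{\frac{1}{2}(d_{l-1}-d_l)+s}$ afterwards; your remarks on Fubini and pole separation are, if anything, slightly more careful than the paper's appeal to Tonelli.
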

\end{tcolorbox}
\begin{proof}
The proof is based in two steps: in the first steps, we will write the integral in hyper-spherical coordinates. In the second step, we will apply a useful substitution and the properties of the Meijer-G function to solve the integral.
\paragraph{1. Hyper-spherical coordinates} Apply the following substitution:
\begin{align}
    & f_1^{(l-1)} = r\cos(\gamma_1) \\
    & f_2^{(l-1)} = r\sin(\gamma_1)\cos(\gamma_2) \\
    & \cdots \\
    & f_{d_{l-1}}^{(l-1)} = r \sin(\gamma_1)\cdots \sin(\gamma_{d_{l-1} - 1}) ,
\end{align}
where $r \in \mathbb{R}_{\geq 0}$ is the radius and $\gamma_1, \dots \gamma_{d_{l-1} -2} \in [0, \pi]$ and $\gamma_{d_{l-1} - 1} \in [0, 2\pi]$. 
The Jacobian is: 
\begin{scriptsize}
\begin{equation}
    J_n = \begin{pmatrix}
    \cos(\gamma_1) & -r\sin(\gamma_1) & 0 & 0 & \dots & 0 \\
    \sin(\gamma_1)\cos(\gamma_2) & r\cos(\gamma_1)\cos(\gamma_2) & - r\sin(\gamma_1)\sin(\gamma_2) & 0 & \dots & 0 \\
    \dots & & & & \dots & \dots \\
    r \sin(\gamma_1)\cdots \sin(\gamma_{d_{l-1} - 1}) & & & & & r \sin(\gamma_1)\cdots \cos(\gamma_{d_{l-1} - 1})
    \end{pmatrix}
\end{equation}
\end{scriptsize},
where it can be shown that its determinant is:
\begin{equation}
    |J_n| = r^{d_{l-1}-1}\sin^{d_{l-1} - 2}(\gamma_1) \sin^{d_{l-1}-3}(\gamma_2) \cdots \sin(\gamma_{d_{l-1} - 2}) .
\end{equation}
Therefore:
\begin{equation}
    \prod_{i=1}^{d_{l-1}}df_i^{(l-1)} = r^{d_{l-1}-1}\sin^{d_{l-1} - 2}(\gamma_1) \sin^{d_{l-1}-3}(\gamma_2) \cdots \sin(\gamma_{d_{l-1} - 2}) dr d\gamma_1 \cdots d\gamma_{d_{l-1} - 1} .
\end{equation}
By noting that the integral we are trying to solve depends only on $|| \bm{f}^{(l-1)} ||^2 = r^2$, we have that the density is, up to a normalization constant independent of $\bm{f}^{(l)}$:

\begin{align}
    I &= \tilde{C}\int r^{d_{l-1}-d_l-1} \text{e}^{-\frac{||\bm{f}^{(l)}||^2}{2 \sigma_w^2 r^2}} G_{0,l-1}^{l-1,0}\left(\frac{r^2}{2^{l-1}\tilde{\sigma}^{2} } \bigg\rvert b_1, \dots, b_{l-1}\right) dr \\
    &= \tilde{C}\int r^{d_{l-1}-d_l-1} G_{0,1}^{1,0}\left(\frac{|| \bm{f}^{(l)}||^2}{2\sigma_w^2 r^2} \bigg\rvert 0\right) G_{0,l-1}^{l-1,0}\left(\frac{r^2}{2^{l-1}\tilde{\sigma}^{2} } \bigg\rvert b_1, \dots, b_{l-1}\right) dr  ,
\end{align}
where we call $\tilde{C}$ the angular constant due to the integration of the angle-related terms (that do not depend on $r$, but only on $d_{l-1}$). We compute the angular constant in Lemma \ref{lemma:linear_net_distr_norm_const}.  In the last step we have applied the identity between the exponential function and the Meijer-G function as in Proposition \ref{prop:meijer_g_identities}.

\paragraph{2: Substitution and Meijer-G properties}
Defining $d = \frac{1}{2}(d_{l-1} - d_l)$, and applying the substitution $x = \frac{r^2}{2^{l-1}\tilde{\sigma}^{2}}$:
\begin{align}
     I &= \tilde{C}  \int (2^{\frac{l-1}{2}}\tilde{\sigma} x^{\frac{1}{2}})^{2d-1} G_{0,1}^{1,0}\left(\frac{|| \bm{f}^{(l)}||^2}{2^{l}\sigma_w^2 \tilde{\sigma}^{2} x} \bigg\rvert 0\right) G_{0,l-1}^{l-1,0}\left(x \bigg\rvert b_1, \dots, b_{l-1}\right) 2^{\frac{l-3}{2}}\tilde{\sigma} x^{-\frac{1}{2}}dx \\
    &= \frac{1}{2}\tilde{C}2^{d(l-1)}\tilde{\sigma}^{2d} \int  x^{d-1} G_{0,1}^{1,0}\left(\frac{|| \bm{f}^{(l)}||^2}{2^l\sigma_w^2\tilde{\sigma}^2 x} \bigg\rvert 0\right) G_{0,l-1}^{l-1,0}\left(x \bigg\rvert b_1,\dots, b_{l-1}\right)dx .
\end{align}

Defining $\sigma^2 = \sigma_w^2 \tilde{\sigma}^2$, $a^2:=\frac{|| \bm{f}^{(l)}||^2}{2^l\sigma^2 }$ and $C:= \frac{1}{2}\tilde{C}2^{d(l-1)}\tilde{\sigma}^{2d}$, and expanding the $G_{0,1}^{1,0}$ term according to the definition, we get:

\begin{align}
\label{eq:star_thm_linear_network}
    I &= C \int  x^{d-1} \frac{1}{2\pi i} \int \Gamma(s) \left(\frac{a^2}{x}\right)^{-s} G_{0,l-1}^{l-1,0}\left(x \bigg\rvert b_1,\dots, b_{l-1}\right)ds dx \\
    &= C \frac{1}{2\pi i} \int \Gamma(s) a^{-2s}  \int x^{s + d - 1} G_{0,l-1}^{l-1,0}\left(x \bigg\rvert b_1,\dots, b_{l-1}\right)dx ds  .
\end{align}
where we can change the order of integration due to the fact that the integrand is positive in the integration region (Tonelli's theorem).
Now by using Proposition \ref{prop:mellin_transf_g}, the inner integral has the following solution:
\begin{align}
    \int x^{s + d - 1} G_{0,l-1}^{l-1,0}\left(x \bigg\rvert b_1, \dots b_{l-1}\right)dx &= \prod_{i=1}^{l-1} \Gamma(d + b_i +s) \\
    &= \prod_{i=1}^{l-1} \Gamma\left(\frac{1}{2}(d_{l-1} - d_l) + b_i +s\right) .
\end{align}

Therefore we can conclude that:
\begin{align}
    I &= C \frac{1}{2\pi i} \int \Gamma(s) \prod_{i=1}^{l-1} \Gamma( \frac{1}{2}(d_{l-1} - d_l) + b_i +s) a^{-2s} ds \\
    &= C G_{0,l}^{l,0}\left(a^2 \bigg\rvert 0, \frac{1}{2}(d_{l-1} - d_l) + b_1 , \dots, \frac{1}{2}(d_{l-1} - d_l) + b_{l-1} \right) \\
    &= C G_{0,l}^{l,0}\left(\frac{|| \bm{f}^{(l)}||^2}{2^{l}\sigma^2 } \bigg\rvert0, \frac{1}{2}(d_{l-1} - d_l) + b_1 , \dots, \frac{1}{2}(d_{l-1} - d_l) + b_{l-1} \right) ,
\end{align}
where we have simply applied the definition of the Meijer-G function.

\end{proof}

\subsection{Probability Density Function for linear networks}
\label{sec:proof_thm_linear_net_any_width}
We proof the result on the probability density function for a linear network in the following.
\begin{tcolorbox}[boxrule=0pt, sharp corners]
\begin{theorem*}
Suppose $l \geq 1$, and the input has dimension $d_0$. Then, the joint marginal density of the random vector $\bm{f}^{(l)}$ (i.e.\ the density of the $l$-th layer pre-activations) is proportional to:
\begin{equation}
    p(\bm{f}^{(l)}) \propto G^{l, 0}_{0,l}\left(\frac{||\bm{f}^{(l)}||^2}{2^l  \sigma^2} \bigg \rvert  0, \frac{1}{2}\left(d_1 - d_l\right), \dots, \frac{1}{2}\left( d_{l-1} - d_l\right) \right) ,
\end{equation}
where $\sigma^2 = \prod_{i=1}^l \sigma_{i}^2$ .
\end{theorem*}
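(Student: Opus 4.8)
The plan is to prove Theorem~\ref{thm:distr_linear_network_any_width} by induction on the depth $l$. The base case $l=1$ is already settled in the main text: conditioned on the normalized input, the first-layer pre-activations are i.i.d.\ $\mathcal{N}(0,\sigma_1^2)$, and their joint density is exactly $G^{1,0}_{0,1}\!\left(\|\bm{f}^{(1)}\|^2/(2\sigma_1^2)\,\middle|\,0\right)$ up to a constant, which is the claimed form with $\sigma^2=\sigma_1^2$ and no additional lower parameters (the index range $i=1,\dots,l-1$ is empty). For the inductive step I would assume the statement at layer $l-1$, namely
\[
    p(\bm{f}^{(l-1)})\propto G^{l-1,0}_{0,l-1}\!\left(\frac{\|\bm{f}^{(l-1)}\|^2}{2^{l-1}\tilde{\sigma}^2}\,\middle|\,0,\tfrac{1}{2}(d_1-d_{l-1}),\dots,\tfrac{1}{2}(d_{l-2}-d_{l-1})\right),
\]
with $\tilde{\sigma}^2=\prod_{i=1}^{l-1}\sigma_i^2$, and derive the statement at layer $l$.

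The first step is to identify the conditional law $p(\bm{f}^{(l)}\mid\bm{f}^{(l-1)})$. Since the network is linear, the post-activations equal the pre-activations, so $f^{(l)}_k=\sum_{j=1}^{d_{l-1}}W^{(l)}_{jk}\,f^{(l-1)}_j$; conditioned on $\bm{f}^{(l-1)}$ this is a sum of independent centered Gaussians, hence $\bm{f}^{(l)}\mid\bm{f}^{(l-1)}\sim\mathcal{N}\!\big(\bm{0},\sigma_l^2\|\bm{f}^{(l-1)}\|^2\bm{I}\big)$. The key structural observation is that this conditional density depends on $\bm{f}^{(l-1)}$ only through $\|\bm{f}^{(l-1)}\|^2$ and carries the normalization factor $(\|\bm{f}^{(l-1)}\|^2)^{-d_l/2}$, so that
\[
    p(\bm{f}^{(l)}\mid\bm{f}^{(l-1)})\propto (\|\bm{f}^{(l-1)}\|^2)^{-\frac{d_l}{2}}\exp\!\left(-\frac{\|\bm{f}^{(l)}\|^2}{2\sigma_l^2\|\bm{f}^{(l-1)}\|^2}\right).
\]
Marginalizing via the law of total probability, $p(\bm{f}^{(l)})=\int_{\mathbb{R}^{d_{l-1}}}p(\bm{f}^{(l)}\mid\bm{f}^{(l-1)})\,p(\bm{f}^{(l-1)})\,d\bm{f}^{(l-1)}$, and substituting the induction hypothesis, the integrand is exactly the one appearing in Lemma~\ref{lemma:integral_for_induction} with $\sigma_w^2=\sigma_l^2$ and lower parameters $b_1=0$, $b_{j}=\tfrac{1}{2}(d_{j-1}-d_{l-1})$ for $j=2,\dots,l-1$.

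Applying Lemma~\ref{lemma:integral_for_induction} then yields a single $G^{l,0}_{0,l}$ evaluated at $\|\bm{f}^{(l)}\|^2/(2^l\sigma^2)$ with $\sigma^2=\sigma_l^2\tilde{\sigma}^2=\prod_{i=1}^{l}\sigma_i^2$, as required, and with lower parameters $0$ together with $\tfrac{1}{2}(d_{l-1}-d_l)+b_j$ for $j=1,\dots,l-1$. It remains to verify that these telescope to the claimed list: the shift of $b_1=0$ gives $\tfrac{1}{2}(d_{l-1}-d_l)$, while the shift of $b_j=\tfrac{1}{2}(d_{j-1}-d_{l-1})$ gives $\tfrac{1}{2}(d_{l-1}-d_l)+\tfrac{1}{2}(d_{j-1}-d_{l-1})=\tfrac{1}{2}(d_{j-1}-d_l)$ for $j=2,\dots,l-1$. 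The $d_{l-1}$ contributions cancel, leaving the set $\{\tfrac{1}{2}(d_1-d_l),\dots,\tfrac{1}{2}(d_{l-1}-d_l)\}$, which, using the permutation-invariance of the lower arguments of the Meijer-G function, coincides with $0,\tfrac{1}{2}(d_1-d_l),\dots,\tfrac{1}{2}(d_{l-1}-d_l)$. The multiplicative constant produced by the Lemma is absorbed into the proportionality, closing the induction.

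The genuine analytic difficulty — converting the radial convolution of a Gaussian kernel against a Meijer-G density into a single Meijer-G function — is entirely packaged inside Lemma~\ref{lemma:integral_for_induction} (via hyperspherical coordinates and the Mellin-transform identity of Proposition~\ref{prop:mellin_transf_g}), which I may invoke directly. At the level of the theorem, the only points requiring care are (i) correctly extracting the $(\|\bm{f}^{(l-1)}\|^2)^{-d_l/2}$ normalization of the conditional Gaussian so that the marginalization integral matches the Lemma's hypotheses verbatim, and (ii) checking that the recursive parameter shift telescopes as above; both are routine bookkeeping once the Lemma is in hand.
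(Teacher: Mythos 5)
Your proposal is correct and follows essentially the same route as the paper's own proof: induction on depth with the Gaussian base case, marginalization via the law of total probability against the conditional Gaussian $\mathcal{N}(\bm{0},\sigma_l^2\|\bm{f}^{(l-1)}\|^2\bm{I})$, and an application of Lemma~\ref{lemma:integral_for_induction} followed by the same telescoping of the shifted lower parameters. Your explicit appeal to permutation-invariance of the Meijer-G lower arguments is a point the paper leaves implicit, but otherwise the two arguments coincide.
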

\end{tcolorbox}
\begin{proof}
We proof by induction. For the base case, consider $l=1$. We have shown that 
\begin{equation}
    \bm{f}^{(1)} \sim \mathcal{N}(0, \sigma_1^2I) .
\end{equation}
Therefore we can re-write its density as:
\begin{align}
    p(f^{(1)}, \dots f^{(d)}) &= \frac{1}{\left(2\pi \sigma_1^2\right)^{\frac{d_1}{2}}} \exp\left(-\frac{||\bm{f}^{(1)}||^2}{2 \sigma_1^2} \right) \\
    &= \frac{1}{\left(2\pi \sigma_1^2\right)^{\frac{d_1}{2}}} G^{1,0}_{0,1}\left(\frac{||\bm{f}^{(1)}||^2}{2 \sigma_1^2} \bigg \rvert 0 \right) ,
\end{align} 
where we have used the identity between the exponential function and the Meijer-G function (Proposition \ref{prop:meijer_g_identities}).

Now let $\tilde{\sigma}^2 = \prod_{i=1}^{l-1}\sigma_i^2$. Assume that 
\begin{equation}
    p(f^{(l)}_1, \dots, f^{(l-1)}_{d_{l-1}}) \propto G^{l-1, 0}_{0,l-1}\left(\frac{||\bm{f}^{(l-1)}||^2}{2^{l-1}  \tilde{\sigma}^2} \bigg \rvert  0, \frac{1}{2}\left(d_1 - d_{l-1}\right), \dots, \frac{1}{2}\left( d_{l-2} - d_{l-1}\right) \right) .
\end{equation} 

Now we can use the fact that the the units of the $l$-th layer are conditionally independent given the previous' layer units. Furthermore the conditional distribution is Gaussian due to the fact that the weights are i.i.d Gaussian. Therefore we can write:
\begin{align}
    p(f_1^{(l)}, \dots f_{d_l}^{(l)}) &= \int_{\mathbb{R}^{d_{l-1}}} p(f_1^{(l)}, \dots f_{d_l}^{(l)} | f_1^{(l-1)}, \dots f_{d_{1}}^{(l-1)} ) p(f_1^{(l-1)}, \dots f_{d_1}^{(l-1)}) d\bm{f}^{(l-1)}\\
    &\propto \int_{\mathbb{R}^{d_{l-1}}} \frac{1}{(2\pi ||\bm{f}^{(l-1)}||^2)^{\frac{d_l}{2}}} \text{e}^{-\frac{||\bm{f}^{(l)}||^2}{2 \sigma_l^2||\bm{f}^{(l-1)}||^2}} \\ 
    & G_{0,l-1}^{l-1,0}\left(\frac{|| \bm{f}^{(l-1)}||^{2}}{2^{l-1}\tilde{\sigma}^2 } \bigg\rvert  0, \frac{1}{2}\left(d_1 - d_{l-1}\right), \dots, \frac{1}{2}\left( d_{l-2} - d_{l-1}\right)\right) d\bm{f}^{(l-1)} .
\end{align}
In the first step we have marginalized out the units of the $l-1$ layer, and applied the product rule of probabilities.  In the second step we have applied the induction hypothesis. 

The integral is in the form of Lemma \ref{lemma:integral_for_induction}. For the coefficients of the Meijer-G function $b_2 = \frac{1}{2}\left(d_1 - d_{l-1}\right)$, $\dots$, $b_{l-1} = \frac{1}{2}\left(d_{l-2} - d_{l-1}\right)$, note that:
\begin{equation}
    \frac{1}{2}\left(d_i - d_{l-1}\right) + \frac{1}{2}\left(d_{l-1} - d_l \right) = \frac{1}{2}\left(d_i - d_l \right)
\end{equation}
holds for all $i \in [d_{l-2}]$ and clearly $b_1 + \frac{1}{2}\left(d_{l-1} - d_l \right) = \frac{1}{2}\left(d_{l-1} - d_l \right)$ as $b_1 = 0$ in our case.
Therefore by Lemma \ref{lemma:integral_for_induction} we can conclude that:
\begin{align}
    p(f_1^{(l)}, \dots f_{d_l}^{(l)}) &\propto G_{0,l}^{l,0}\left(\frac{|| \bm{f}^{(l)}||^2}{2^{l}\sigma^{2} } \bigg\rvert0, \frac{1}{2}(d_1 - d_{l}) \dots, \frac{1}{2}(d_{l-1} - d_{l}) \right) .
\end{align}
\end{proof}
\subsection{CDF of prior predictive}
We also derive the CDF of the linear network in the following theorem and proceed to prove it.
\begin{tcolorbox}[boxrule=0pt, sharp corners]
\begin{theorem}[CDF of prior predictive]
    Let $f^{l}$ be the output of a of a linear network of $l$ layers. We assume the final layer is one dimensional. Then the the cdf is $F_l(t) := 1 - P(f^{l} > t)$, $t > 0$. We have that
    \begin{equation}
         P(f^{l} > t) = \frac{t}{2C}G^{l+1,0}_{1,l+1}\left(\omega t^2 \bigg \rvert \begin{smallmatrix}\frac{1}{2} & & & &\\ -\frac{1}{2} & 0 & b_1 & \dots & b_{l-1}  \end{smallmatrix} \right)  ,
    \end{equation}
    where $b_i = \frac{1}{2}(d_i - 1)$, $i \in [l-1]$, $C$ is the normalization constant and $\omega = \frac{1}{2^l  \sigma^{2}}$.
\end{theorem}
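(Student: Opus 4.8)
The plan is to integrate the output density provided by Theorem~\ref{thm:distr_linear_network_any_width} and reduce the resulting tail integral to a single application of the Meijer-G integration identity \eqref{eq:meijer_g_property_for_cdf}. Since the final layer is one-dimensional, I set $d_l = 1$, so that $\|\bm{f}^{(l)}\|^2 = (f^{(l)})^2$ and the density reads $p(f) = \frac{1}{C} G^{l,0}_{0,l}\left(\omega f^2 \bigg\rvert 0, b_1, \dots, b_{l-1}\right)$ with $\omega = \frac{1}{2^l \sigma^2}$ and $b_i = \frac{1}{2}(d_i - 1)$, which matches the coefficients in the statement; here $C$ is the normalization constant of the density. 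The tail probability is then
\[
    P(f^{(l)} > t) = \frac{1}{C} \int_t^\infty G^{l,0}_{0,l}\left(\omega f^2 \bigg\rvert 0, b_1, \dots, b_{l-1}\right) df .
\]

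First I would bring this integral into the canonical form of \eqref{eq:meijer_g_property_for_cdf} by the substitution $f = t\sqrt{x}$, i.e.\ $x = f^2/t^2$, under which $f^2 = t^2 x$, $df = \tfrac{t}{2} x^{-1/2}\,dx$, and the limits $f \in [t, \infty)$ become $x \in [1, \infty)$. This yields
\[
    P(f^{(l)} > t) = \frac{t}{2C} \int_1^\infty x^{-1/2}\, G^{l,0}_{0,l}\left(\omega t^2 x \bigg\rvert 0, b_1, \dots, b_{l-1}\right) dx .
\]

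Next I would match this against \eqref{eq:meijer_g_property_for_cdf} with $\alpha = \omega t^2$, $\rho = \tfrac{1}{2}$, and $\sigma = 1$, so that the factor $(x-1)^{\sigma-1}$ degenerates to $1$ and $\Gamma(\sigma) = \Gamma(1) = 1$. For our signature $m = l$, $n = 0$, $p = 0$, $q = l$ with lower parameters $0, b_1, \dots, b_{l-1}$, the identity raises both indices by one, appends the new upper parameter $\rho = \tfrac{1}{2}$ and the new lower parameter $\rho - \sigma = -\tfrac{1}{2}$, producing exactly
\[
    P(f^{(l)} > t) = \frac{t}{2C}\, G^{l+1,0}_{1,l+1}\left(\omega t^2 \bigg\rvert \begin{smallmatrix}\frac{1}{2} & & & &\\ -\frac{1}{2} & 0 & b_1 & \dots & b_{l-1}\end{smallmatrix}\right) ,
\]
as claimed.

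The computation itself is short, so the main thing to verify carefully is that the hypotheses of \eqref{eq:meijer_g_property_for_cdf} are satisfied for the specific functions at hand — namely the parameter non-overlap conditions and the convergence of the integral — which holds for the real-parameter class $G^{l,0}_{0,l}$ used throughout, as noted right after \eqref{eq:meijer_g_property_for_cdf}. A secondary, purely bookkeeping point is the normalization constant $C$: it is the same constant normalizing $p(f)$, and because both the substitution and the identity are exact, it propagates unchanged into the final expression. As an optional sanity check one may confirm consistency of the formula as $t \to 0^+$ against $P(f^{(l)} > 0) = \tfrac{1}{2}$, which follows from the symmetry of the density.
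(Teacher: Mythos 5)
Your proof is correct and follows essentially the same route as the paper: integrate the density from Theorem~\ref{thm:distr_linear_network_any_width} over $(t,\infty)$, apply the substitution $x = f^2/t^2$ to map the domain to $[1,\infty)$, and invoke \eqref{eq:meijer_g_property_for_cdf} with $\rho = \tfrac{1}{2}$, $\sigma = 1$, $\alpha = \omega t^2$. The only additions are your explicit checks of the identity's validity conditions and the $t \to 0^+$ sanity check, which the paper omits but which are welcome.
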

\end{tcolorbox}
\begin{proof}
    Let $X = f^{l}$.
    \begin{align}
        P(X > t) &= \frac{1}{C}\int_t^{\infty} G^{l,0}_{0,l}\left(\omega x^2 \bigg \rvert 0,  b_1, \dots, b_{l-1} \right)dx \\
        &= \frac{t}{2C} \int_{1}^{\infty} y^{-\frac{1}{2}} G^{l,0}_{0,l}\left(\omega t^2 y \bigg \rvert 0,  b_1, \dots, b_{l-1} \right)dx \\
        &= \frac{t}{2C}G^{l+1,0}_{1,l+1}\left(\omega t^2 \bigg \rvert \begin{smallmatrix}\frac{1}{2} & & & &\\ -\frac{1}{2} & 0 & b_1 & \dots & b_{l-1}  \end{smallmatrix} \right),
    \end{align}
    where in the first step we have used the result of \ref{thm:distr_linear_network_any_width}, in the second step we have applied the substitution $y = \frac{x^2}{t^2}$, and in the last step we have used Equation \ref{eq:meijer_g_property_for_cdf} with $\rho=\frac{1}{2}$, $\sigma=1$ and $\alpha=\omega t^2$
\end{proof}

\subsection{Resulting Moments for Linear Networks}
 
\label{sec:proof_moments_linear}

Define $\omega = \frac{1}{2^l  \sigma^{2}}$. Denote by $\tilde{p}$ the unnormalized measure and
define the random variable
\begin{equation}
    Z = ||\bm{f}^{(l)}||_2^2 .
\end{equation}

We are interested in the k-th moment of $Z$. Using spherical coordinates and the properties of the Meijer-G function in a similar way as the proofs above, we get:

\begin{align}
        \mathbb{E}\left[Z^k\right] &= \frac{1}{C}\int_{\mathbb{R}^{m}}||\bm{z}||_2^{2k}G_{0,l}^{l,0} | \left(\omega ||\bm{z}||_2^{2}\bigg\rvert  0, \frac{1}{2}\left(d_1 - d_l\right), \dots, \frac{1}{2}\left( d_{l-1} - d_l \right)\right)d\bm{z}\\
        &= \frac{\tilde{C}_l}{C} \int_{0}^{\infty}r^{2k + d_l - 1}G_{0,l}^{l,0}\left(\omega r^{2}\bigg\rvert  0, \frac{1}{2}\left(d_1 - d_l\right), \dots, \frac{1}{2}\left( d_{l-1} - d_l \right) \right) dr \\
        &= \frac{\tilde{C}_l}{2C}\int_{0}^{\infty}x^{k + \frac{d_l}{2} - 1}G_{0,l}^{l,0}\left(\omega x \bigg\rvert  0, \frac{1}{2}\left(d_1 - d_l\right), \dots, \frac{1}{2}\left( d_{l-1} - d_l \right)\right)dx \\
        &= \frac{\tilde{C}_l}{2C} \omega^{-k -\frac{d_l}{2}}\prod_{i=1}^{l}\Gamma\left(\frac{d_i}{2}+k\right) \\
        &= \omega^{-k} \frac{\prod_{i=1}^{l}\Gamma\left(\frac{d_i}{2}+k\right)}{\prod_{i=1}^{l}\Gamma\left(\frac{d_i}{2}\right)} \\
        &= \left(2^{l}\sigma^{2}\right)^k \prod_{i=1}^{l} \frac{\Gamma\left(\frac{d_i}{2}+k\right)}{\Gamma\left(\frac{d_i}{2}\right)} .
\end{align}
Note that it can be equivalently written as:
\begin{align}
     \mathbb{E}\left[Z^k\right] &= (2^l \sigma^2)^{k-1} \prod_{i=1}^{l}\left(\frac{ \Gamma\left(\frac{d_i}{2}+k - 1\right)}{\Gamma\left(\frac{d_i}{2}\right)}\right) (2^l \sigma^2) \prod_{i=1}^l \left(\frac{d_i}{2} + k - 1\right) \\
     &= \mathbb{E}\left[Z^{k-1}\right] \sigma^2 \prod_{i=1}^l \left(d_i + 2(k - 1)\right) ,
\end{align} 
so the kurtosis is:
\begin{align}
    \kappa = \frac{\prod_{i=1}^l \left(d_i + 2(2 - 1)\right)}{\prod_{i=1}^l d_i} = \prod_{i=1}^l \frac{d_i + 2}{d_i}
\end{align}
If $d_1 = \dots = d_{l-1} = m$, and $d_l = 1$.
For instance the variance ($k=1$) is\footnote{By symmetry, all the odd moments are zero}:
\begin{equation}
\label{eq:variance_linear_net}
    \left(2^{l}\sigma^{2}\right) \left(\frac{\Gamma(\frac{m}{2} + 1)}{\Gamma(\frac{m}{2})} \right)^{l-1} \frac{\Gamma(1 + \frac{1}{2})}{\Gamma(\frac{1}{2})} = (2^l  \sigma^{2}) \frac{m^{l-1}}{2^{l-1}} \frac{1}{2} = \sigma^{2}m^{l-1}.
\end{equation}
\subsection{Infinite width and depth limit}
We also present the infinite-width and infinite-depth result for the linear case. Due to the linear nature, the proof simplifies significantly compared to the ReLU case. 

\begin{tcolorbox}[boxrule=0pt, sharp corners]
\begin{lemma}
\label{moments_linear}
Consider the distribution of the output $p(f^{(L)})$, as defined in Thm.~\ref{thm:distr_linear_network_any_width}. Denote  $X \sim \mathcal{N}(0,1)$, $Y \sim \mathcal{LN}(-\frac{\gamma}{2}, \frac{\gamma}{2})$ and the normal log-normal mixture $Z=XY$. \\[2mm]For fixed depth $L \in \mathbb{N}$, under NTK parametrization , it holds that 
\begin{equation}
    p^m(f^{(L)}) \xrightarrow[]{d} X \hspace{4mm} \text{for } m \xrightarrow[]{} \infty
\end{equation}

In contrast, for growing depth $L=\gamma m$, we have the following convergence of the moments
\begin{equation}
    \mathbb{E}\left[\left(f^{(L)}\right)^{2k}\right] \xrightarrow[]{m \xrightarrow[]{}\infty} \mathbb{E}[Z^{2k}] = e^{\gamma k(k-1)}(2k-1)!!
\end{equation}

where $(2k-1)!! = (2k-1) \dots 3 \cdot 1$ denotes the double factorial.
\end{lemma}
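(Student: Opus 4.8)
The plan is to reduce everything to the closed-form moment formula for $\|\bm{f}^{(l)}\|_2^2$ established in Section~\ref{sec:proof_moments_linear}, namely
\begin{equation}
\mathbb{E}\big[\|\bm{f}^{(l)}\|_2^{2k}\big] = (2^l \sigma^2)^k \prod_{i=1}^{l}\frac{\Gamma\!\left(\frac{d_i}{2}+k\right)}{\Gamma\!\left(\frac{d_i}{2}\right)},
\end{equation}
and then simply take the two limits. Since the output is one-dimensional ($d_L = 1$), we have $\|\bm{f}^{(L)}\|_2^{2k} = (f^{(L)})^{2k}$, so this formula directly delivers all even moments of the scalar output, while the odd moments vanish because the symmetric Gaussian weights give $f^{(L)} \stackrel{d}{=} -f^{(L)}$. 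First I would specialise to the equal-width case $d_1 = \dots = d_{L-1} = m$, $d_L = 1$, and substitute the (variance-preserving) NTK parametrization of the linear network, $\sigma_1^2 = 1$ and $\sigma_i^2 = \frac{1}{m}$ for $i \geq 2$, so that $\sigma^2 = m^{-(L-1)}$.

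The next step is to simplify the two types of Gamma ratios. The output factor is elementary, $\frac{\Gamma(\frac{1}{2}+k)}{\Gamma(\frac{1}{2})} = \frac{(2k-1)!!}{2^k}$, and each hidden-layer factor is a rising factorial $\frac{\Gamma(\frac{m}{2}+k)}{\Gamma(\frac{m}{2})} = \prod_{j=0}^{k-1}\left(\frac{m}{2}+j\right) = \left(\frac{m}{2}\right)^k \prod_{j=0}^{k-1}\left(1 + \frac{2j}{m}\right)$. Substituting these and cancelling the powers of $2$ and of $m$ collapses the whole expression to the clean form
\begin{equation}
\mathbb{E}\big[(f^{(L)})^{2k}\big] = (2k-1)!! \left(\prod_{j=0}^{k-1}\Big(1 + \frac{2j}{m}\Big)\right)^{L-1},
\end{equation}
from which both regimes follow by inspecting the bracketed product raised to the power $L-1$.

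For fixed depth $L$, each factor $1+\frac{2j}{m}\to 1$ as $m\to\infty$, hence the moment converges to $(2k-1)!! = \mathbb{E}[X^{2k}]$, and the odd moments are identically zero. Since the standard normal is determined by its moments (Theorem~30.2 in \citet{Bill86}), convergence of all moments yields $p^m(f^{(L)}) \xrightarrow[]{d} X$. For the growing-depth regime $L = \gamma m$, the bracketed product still tends to $1$ but is raised to a power diverging like $\gamma m$, so I would pass to logarithms: $(\gamma m - 1)\sum_{j=0}^{k-1}\log\!\left(1 + \frac{2j}{m}\right) \to \gamma \sum_{j=0}^{k-1} 2j = \gamma k(k-1)$, using $\log(1+x)=x+O(x^2)$ and that $k$ is fixed so the $O(m^{-1})$ corrections vanish. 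Thus the $2k$-th moment tends to $e^{\gamma k(k-1)}(2k-1)!!$. Finally I would verify that this matches $\mathbb{E}[Z^{2k}]$: by independence $\mathbb{E}[Z^{2k}] = \mathbb{E}[X^{2k}]\,\mathbb{E}[Y^{2k}]$, and the log-normal moment formula gives $\mathbb{E}[Y^{2k}] = e^{2k(-\frac{\gamma}{2}) + \frac{(2k)^2}{2}\frac{\gamma}{2}} = e^{\gamma k(k-1)}$, so $\mathbb{E}[Z^{2k}] = e^{\gamma k(k-1)}(2k-1)!!$ as claimed.

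The main obstacle is the $1^{\infty}$ indeterminacy in the growing-depth limit: the careful bookkeeping needed to confirm that only the first-order term of $\log(1+\frac{2j}{m})$ survives the exponentiation by $L-1 = \gamma m - 1$, while the quadratic and higher corrections die. I would also emphasise that for $L = \gamma m$ one can only conclude convergence of moments, not convergence in distribution, since — in contrast to the Gaussian case — the normal log-normal mixture $Z$ is not known to be moment-determinate.
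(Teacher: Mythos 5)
Your proposal is correct and follows essentially the same route as the paper's proof: it starts from the same closed-form moment formula, specialises to the equal-width NTK setting, reduces everything to the asymptotics of the Gamma ratio $\Gamma(\frac{m}{2}+k)/\Gamma(\frac{m}{2})$ raised to the power $L-1$, invokes moment determinacy of the Gaussian (Billingsley, Thm.\ 30.2) for fixed depth, and matches the limiting moments of the $L=\gamma m$ regime to the log-normal mixture exactly as the paper does. The only difference is a pleasant technical shortcut: where the paper expands the Gamma ratio as a polynomial in $m$ and tracks its two leading coefficients (Lemma \ref{coef_poly}, then Lemma \ref{exp-limit} for the $1^{\infty}$ limit), you write it exactly as the rising factorial $\prod_{j=0}^{k-1}\bigl(1+\tfrac{2j}{m}\bigr)$, obtaining an exact finite-$m$ identity for the moments, and then handle the $1^{\infty}$ indeterminacy by taking logarithms — which avoids carrying $\mathcal{O}(1/m^2)$ error terms until the final limiting step.
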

\end{tcolorbox}
\begin{proof}
Recall that the moments of $||\bm{f}^{(l)}{||}_2$ are given by
\begin{equation}
    \mathbb{E}\left[||\bm{f}^{(l)}{||}_2^{2k}\right] = \left(2^{l}\sigma^{2}\right)^k \prod_{i=1}^{l} \frac{\Gamma\left(\frac{d_i}{2}+k\right)}{\Gamma\left(\frac{d_i}{2}\right)} ,
\end{equation}

where $\sigma^2 = \prod_{i=1}^{l}\sigma_i^2$. 
Assuming $d_1 = \dots d_{l-1}=m$ and $d_l=1$ and the NTK parametrization $\sigma_1^2 =1$ and $ \sigma_2^2=\dots= \sigma_{l}^2 = \frac{1}{m}$ simplifies this to
\begin{align}
        \mathbb{E}\left[\left({f}^{(l)}\right)^{2k}\right] &= \left(\frac{2^{l}}{m^{l-1}}\right)^k \frac{\Gamma(\frac{m}{2} + k)^{l-1}}{\Gamma(\frac{m}{2})^{l-1}}\frac{\Gamma(\frac{1}{2} + k)}{\Gamma(\frac{1}{2})} \\&= \left(\frac{2^{l}}{m^{l-1}}\right)^k \frac{\Gamma(\frac{m}{2} + k)^{l-1}}{\Gamma(\frac{m}{2})^{l-1}}2^{-k}(2k-1)!! \\
        &=  \left(\frac{2^{k}}{m^{k}}\frac{\Gamma(\frac{m}{2} + k)}{\Gamma(\frac{m}{2})}\right)^{l-1}(2k-1)!!\\
        &=  \left(\frac{2^{k}}{m^{k}}\left(\frac{m}{2} + k-1\right) \dots \left(\frac{m}{2} +1\right) \frac{m}{2}\right)^{l-1}(2k-1)!! .
\end{align}
Define the $k$-th order polynomial $p(m) = \left(\frac{m}{2} + k-1\right) \dots \left(\frac{m}{2} +1\right) \frac{m}{2}$. Denote its coefficients by $\alpha_i$ for $i=1, \dots, k$. We know that $\alpha_k = \frac{1}{2^{k}}$ and from Lemma~\ref{coef_poly} that
\begin{equation}
    \alpha_{k-1} = \frac{1}{2^{k-1}}\sum_{i=1}^{k}(k-i) = \frac{1}{2^{k-1}}\left(k^2 - \frac{k(k+1)}{2}\right) =  \frac{k^2-k}{2^k} .
\end{equation}

Assuming constant depth, performing the division by $m^{k}$ thus leads to
\begin{equation}
    \begin{split}
        & \left(2^k\left(\alpha_k + \alpha_{k-1}\frac{1}{m} + \dots + \mathcal{O}\left(\frac{1}{m^2}\right)\right) \right)^{l-1}(2k-1)!! \\
        &= \left(1 + \frac{k(k-1)}{m} + \mathcal{O}\left(\frac{1}{m^2}\right)\right)^{l-1}(2k-1)!!\\
        &= \left(1 + \frac{(l-2)\left((k-1)k\right)}{m} + \mathcal{O}\left(\frac{1}{m^2}\right)\right)(2k-1)!! .
    \end{split}
\end{equation}
Now we can easily see that 
\begin{equation}
    \mathbb{E}\left[\left({f}^{(l)}\right)^{2k}\right] \xrightarrow[]{m \xrightarrow[]{}\infty}(2k-1)!! .
\end{equation}
Recall that for $X \sim \mathcal{N}(0,1)$ we have the same moments $\forall k \in \mathbb{N}$: $\mathbb{E}\left[X^{2k}\right] = (2k-1)!!$, whereas the odd moments vanish for both distributions due to symmetry. The convergence of the moments, due to \citet{Bill86} and the identifiability of the Gaussian distribution implies convergence in distribution. \\[3mm]
On the other hand, if we assume that depth grows proportional to width, i.e.\ $l-1 = \gamma m$ for $\gamma >0$, we arrive at a different limit given by
\begin{equation}
    \begin{split}
        \left(1 + \frac{k(k-1)}{m} + \mathcal{O}\left(\frac{1}{m^2}\right)\right)^{\gamma m}(2k-1)!! \xrightarrow[]{m \xrightarrow[]{}\infty} e^{\gamma k(k-1)}(2k-1)!! .
    \end{split}
\end{equation}
Consider the random variable $Z = X Y$ where $X \sim \mathcal{N}(0,1)$ and $Y \sim \mathcal{LN}(s, t^2)$ are two independent variables. For $k \in \mathbb{N}$, we can compute the moments as
\begin{equation}
    \mathbb{E}\left[Z^{n}\right]=\mathbb{E}\left[X^{n}Y^{n}\right] = \mathbb{E}\left[X^{n}\right]\mathbb{E}\left[Y^{n}\right] = \begin{cases}0  \hspace{33mm} $n$ \text{ odd}\\
    (2k-1)!!e^{2ks + 2k^2t^2} \hspace{5mm} $n = 2k$\end{cases}
\end{equation}
Choosing $s = -\frac{\gamma}{2}$ and $t^2 = \frac{\gamma}{2}$ hence recovers the moments exactly.
\end{proof}

\subsection{Normalization Constant and Angular Constant}
\label{sec:normalization_constant}

We complete the picture by calculating the normalization constant of the resulting distribution.
\begin{tcolorbox}[boxrule=0pt, sharp corners]
\begin{lemma}[normalization constant]
\label{lemma:linear_net_distr_norm_const}
Under the conditions of Theorem \ref{thm:distr_linear_network_any_width}, the normalization constant $C$ for the density of the l-th layer can be computed as:
\begin{equation}
    C = \frac{1}{2} \tilde{C}_l\left(\frac{1}{2^{l}\sigma^{2}}\right)^{-\frac{d_l}{2}}\prod_{i=1}^l\Gamma\left(\frac{d_i}{2}\right) ,
\end{equation}
or, expanding $\tilde{C}_l$ according to Lemma \ref{LEMMA:ANGLULAR_CONSANT_LINEAR_NET}:
\begin{equation}
    \frac{\pi^{\frac{d_l}{2}}}{\Gamma\left(\frac{d_l}{2} \right)} \left(\frac{1}{2^{l}\sigma^{2}}\right)^{-\frac{d_l}{2}}\prod_{i=1}^l\Gamma\left(\frac{d_i}{2}\right) .
\end{equation}
\end{lemma}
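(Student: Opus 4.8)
The plan is to recognize that the normalization constant is simply the total mass of the unnormalized density, i.e. the $k=0$ instance of the moment computation already carried out in Appendix~\ref{sec:proof_moments_linear}. Concretely, I would start from
\[
C = \int_{\mathbb{R}^{d_l}} G^{l, 0}_{0,l}\left(\frac{\|\bm{f}^{(l)}\|^2}{2^l \sigma^2} \,\Big|\, 0, \tfrac{1}{2}(d_1 - d_l), \dots, \tfrac{1}{2}(d_{l-1} - d_l)\right) d\bm{f}^{(l)},
\]
which is exactly the quantity that forces $\tfrac{1}{C}G^{l,0}_{0,l}(\cdots)$ from Theorem~\ref{thm:distr_linear_network_any_width} to integrate to one.

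Since the integrand depends on $\bm{f}^{(l)}$ only through $\|\bm{f}^{(l)}\|^2$, the first step is to pass to hyperspherical coordinates in $\mathbb{R}^{d_l}$, exactly as in the proof of Lemma~\ref{lemma:integral_for_induction}. The angular integration decouples from the radial part and contributes the surface-area factor $\tilde{C}_l$ (the angular constant of Lemma~\ref{LEMMA:ANGLULAR_CONSANT_LINEAR_NET}), leaving
\[
C = \tilde{C}_l \int_0^\infty r^{d_l - 1} G^{l,0}_{0,l}\!\left(\omega r^2 \,\big|\, \bm{b}\right) dr,
\]
with $\omega = \tfrac{1}{2^l \sigma^2}$ and $\bm{b} = \big(0, \tfrac{1}{2}(d_1-d_l), \dots, \tfrac{1}{2}(d_{l-1}-d_l)\big)$.

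Next I would apply the substitution $x = r^2$, which turns the radial integral into $\tfrac{1}{2}\int_0^\infty x^{d_l/2 - 1} G^{l,0}_{0,l}(\omega x \mid \bm{b})\, dx$, precisely a Mellin transform with exponent $s = d_l/2$. Invoking Proposition~\ref{prop:mellin_transf_g} (whose admissibility conditions hold here, since $p=n=0$, $m=q=l$, and all coefficients are real) evaluates it as $\tfrac{1}{2}\omega^{-d_l/2}\prod_{j=1}^l \Gamma(b_j + \tfrac{d_l}{2})$. The key simplification is the index bookkeeping already used for the moments: $b_1 + \tfrac{d_l}{2} = \tfrac{d_l}{2}$, while for $j \geq 2$ one has $b_j + \tfrac{d_l}{2} = \tfrac{1}{2}(d_{j-1} - d_l) + \tfrac{d_l}{2} = \tfrac{d_{j-1}}{2}$, so that $\prod_{j=1}^l \Gamma(b_j + \tfrac{d_l}{2}) = \prod_{i=1}^l \Gamma(\tfrac{d_i}{2})$ after relabelling. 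Assembling the pieces gives $C = \tfrac{1}{2}\tilde{C}_l\,\omega^{-d_l/2}\prod_{i=1}^l \Gamma(\tfrac{d_i}{2})$, i.e. the first claimed form since $\omega^{-d_l/2} = \big(\tfrac{1}{2^l\sigma^2}\big)^{-d_l/2}$.

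Finally, to obtain the fully explicit second expression, I would substitute the value of the angular constant from Lemma~\ref{LEMMA:ANGLULAR_CONSANT_LINEAR_NET}, namely the surface area of the unit sphere in $\mathbb{R}^{d_l}$, $\tilde{C}_l = \tfrac{2\pi^{d_l/2}}{\Gamma(d_l/2)}$, whence $\tfrac{1}{2}\tilde{C}_l = \tfrac{\pi^{d_l/2}}{\Gamma(d_l/2)}$ and the result follows. The proof is entirely a matter of routine reductions; the only mild obstacle is the verification that Proposition~\ref{prop:mellin_transf_g} applies at $s = d_l/2$ — which is immediate from positivity of the integrand and the admissible parameter class — together with the careful Gamma-index relabelling described above, which is what collapses the $\bm{b}$-parameters into the clean product $\prod_{i=1}^l \Gamma(\tfrac{d_i}{2})$.
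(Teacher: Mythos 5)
Your proposal is correct and follows essentially the same route as the paper's own proof: integrate the unnormalized density in hyperspherical coordinates, factor out the angular constant $\tilde{C}_l$, substitute $x = r^2$ to expose a Mellin transform with exponent $s = \frac{d_l}{2}$, apply Proposition~\ref{prop:mellin_transf_g}, and collapse the Gamma factors via the index identity $b_j + \frac{d_l}{2} = \frac{d_{j-1}}{2}$ before inserting $\tilde{C}_l = \frac{2\pi^{d_l/2}}{\Gamma(d_l/2)}$. Your writeup is, if anything, slightly more explicit than the paper's in spelling out the Gamma-index relabelling and the admissibility of the Mellin transform.
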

\end{tcolorbox}
\begin{proof}[proof of lemma \ref{lemma:linear_net_distr_norm_const}]
The normalization constant has the following form:
\begin{align}
    C=\int_{\mathbb{R}^{d_l}}\tilde{p}({\bm{f}^{(l)}})d\bm{f}^{(l)} &= \int_{\mathbb{R}^{d_l}}G_{0,l}^{l,0}\left(\frac{|| \bm{f}^{(l)}||^2}{2^{l}\sigma^{2} } \bigg\rvert0, \frac{1}{2}(d_1 - d_{l}) \dots, \frac{1}{2}(d_{l-1} - d_{l}) \right)d\bm{f}^{(l)} \\
    &= \tilde{C}_l \int_{0}^{\infty} r^{d_l-1}G_{0,l}^{l,0}\left(\frac{r^2}{2^{l}\sigma^{2} } \bigg\rvert0, \frac{1}{2}(d_1 - d_{l}) \dots, \frac{1}{2}(d_{l-1} - d_{l}) \right)dr \\
    &= \frac{1}{2}\tilde{C}_l\int_{0}^{\infty}x^{\frac{d_l}{2}-1}G_{0,l}^{l,0}\left(\frac{x}{2^{l}\sigma^{2} } \bigg\rvert0, \frac{1}{2}(d_1 - d_{l}) \dots, \frac{1}{2}(d_{l-1} - d_{l}) \right)dx\\
    &=\frac{1}{2} \tilde{C}_l\left(\frac{1}{2^{l}\sigma^{2}}\right)^{-\frac{d_l}{2}}\prod_{i=1}^l\Gamma\left(\frac{d_i}{2}\right) ,
\end{align}

where we used spherical coordinates and the substitution $x=r^2$. We denote the angular constant by $\tilde{C}_l$, and according to Lemma \ref{LEMMA:ANGLULAR_CONSANT_LINEAR_NET} has solution: $\tilde{C}_l = \frac{2\pi^\frac{d_l}{2}}{\Gamma(\frac{d_l}{2})}$.

\end{proof}
\subsection{Angular constant}
\begin{tcolorbox}[boxrule=0pt, sharp corners]
\begin{lemma}
\label{LEMMA:ANGLULAR_CONSANT_LINEAR_NET}
The angular constant:
\begin{equation}
    \tilde{C_l} = \int_{0}^{2\pi} d\gamma_{d_l-1} \int_{0}^\pi \sin^{d_l - 2}(\gamma_{1}) d\gamma_1 \int_{0}^\pi \sin^{d_l - 3}(\gamma_2) d\gamma_2 \dots \int_{0}^\pi \sin(\gamma_{d_l - 2}) d\gamma_{d_l - 2} 
\end{equation}
has solution:
\begin{equation}
    \tilde{C_l} = \frac{2 \pi^{\frac{d_l}{2}}}{\Gamma\left(\frac{d_l}{2} \right)} .
\end{equation}
\end{lemma}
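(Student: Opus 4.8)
The plan is to recognize $\tilde{C}_l$ as nothing more than the total angular mass of the hyperspherical coordinate system on $\mathbb{R}^{d_l}$, i.e.\ the surface area of the unit sphere $S^{d_l-1}$. The quickest route I would take is the Gaussian-integral trick. Starting from the identity $\int_{\mathbb{R}^{d_l}} e^{-\|\bm{y}\|^2}\, d\bm{y} = \pi^{d_l/2}$ and passing to the same hyperspherical coordinates used in the proof of Lemma~\ref{lemma:proof_integral_for_induction}, the angular part factors out exactly as $\tilde{C}_l$, leaving
\begin{equation}
\pi^{d_l/2} = \tilde{C}_l \int_0^\infty e^{-r^2} r^{d_l - 1}\, dr .
\end{equation}
The radial integral is evaluated by the substitution $x = r^2$, giving $\tfrac{1}{2}\Gamma(d_l/2)$, and rearranging yields $\tilde{C}_l = 2\pi^{d_l/2}/\Gamma(d_l/2)$ immediately.

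For a self-contained proof that mirrors the structure of the stated integral, I would instead evaluate each factor directly. Each polar integral is a Beta integral,
\begin{equation}
\int_0^\pi \sin^m(\gamma)\, d\gamma = \sqrt{\pi}\, \frac{\Gamma\left(\frac{m+1}{2}\right)}{\Gamma\left(\frac{m}{2}+1\right)} .
\end{equation}
Applying this with $m = d_l - 1 - j$ for $j = 1, \dots, d_l - 2$ and multiplying by the azimuthal factor $\int_0^{2\pi} d\gamma_{d_l-1} = 2\pi$, the product of Gamma-ratios telescopes: all arguments running from $\tfrac{3}{2}$ up to $\tfrac{d_l-1}{2}$ cancel between consecutive numerators and denominators, leaving only $\Gamma(1)/\Gamma(d_l/2)$ together with a factor $(\sqrt{\pi})^{d_l-2}$. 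Collecting the $\pi$-powers with the $2\pi$ then gives $\tilde{C}_l = 2\pi^{d_l/2}/\Gamma(d_l/2)$, matching the first route.

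No step here is genuinely hard; the only place requiring care is the bookkeeping of the telescoping product --- correctly matching the exponents $d_l-2, d_l-3, \dots, 1$ to the Gamma arguments and verifying that the cancellation leaves precisely $1/\Gamma(d_l/2)$. I would double-check the edge cases $d_l = 1$ and $d_l = 2$ (where the polar product is empty and the formula must still reduce to the expected circle/point measure), and confirm that the hyperspherical coordinate ranges match those fixed in Lemma~\ref{lemma:proof_integral_for_induction}, so that $\tilde{C}_l$ as used there is literally this angular integral. Given how standard the surface-area-of-the-sphere computation is, I would favor presenting the Gaussian-integral argument as the main proof and relegating the telescoping version to a remark.
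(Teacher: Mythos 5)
Your proposal is correct, and your preferred route is genuinely different from the paper's. The paper proves the lemma the "hard" way: it first establishes a reduction formula $\int_0^\pi \sin^k(x)\,dx = \frac{\Gamma(k)}{2^k\Gamma(k/2)\Gamma(k/2+1)}(2\pi)$ by repeated integration by parts (Lemma~\ref{LEMMA:INT_POWERS_SIN}), substitutes this into the product, and then needs the Legendre duplication formula $\Gamma(z)\Gamma(z+\tfrac12)=2^{1-2z}\sqrt{\pi}\,\Gamma(2z)$ to massage the result into a telescoping product of Gamma ratios, which collapses to $1/\Gamma(d_l/2)$ after some bookkeeping of powers of $2$ and $\pi$. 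Your secondary route is essentially this same telescoping argument, but streamlined: by invoking the Beta-integral identity $\int_0^\pi\sin^m(\gamma)\,d\gamma = \sqrt{\pi}\,\Gamma(\tfrac{m+1}{2})/\Gamma(\tfrac{m}{2}+1)$ directly, you get the telescoping form immediately and skip the duplication-formula step entirely. Your main route (the Gaussian-integral trick) is a different argument altogether: it never evaluates a single sine integral, instead extracting $\tilde{C}_l$ as the ratio of $\pi^{d_l/2}$ to the radial integral $\tfrac12\Gamma(d_l/2)$; this is shorter and less error-prone, though it implicitly relies on the Jacobian factorization into radial and angular parts, which the paper has already set up in Lemma~\ref{lemma:proof_integral_for_induction}, so nothing is missing. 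One point worth flagging if you adopt the Gaussian route: the paper's convention for the degenerate case $d_l=1$ is to set $\tilde{C}=1$ (since no spherical change of variables is performed there), whereas the closed-form expression evaluates to $2$ at $d_l=1$; also note that the ReLU analysis later integrates angles over $[0,\tfrac{\pi}{2}]$ only, giving a different constant, so your check that the coordinate ranges match the ones actually used is exactly the right caution.
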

\end{tcolorbox}
\begin{proof}
The angular constant $\tilde{C}_l$ can be calculated as follows (for $d_l \geq 2$):
\begin{align*}
    \tilde{C_l} &= \int_{0}^{2\pi} d\gamma_{d_l-1} \int_{0}^\pi \sin^{d_l - 2}(\gamma_{1}) d\gamma_1 \int_{0}^\pi \sin^{d_l - 3}(\gamma_2) d\gamma_2 \dots \int_{0}^\pi \sin(\gamma_{d_l - 2}) d\gamma_{d_l - 2} \\
    &= 2\pi \prod_{k=1}^{d_l - 2} \int_{0}^\pi \sin^{d_l - k - 1}(\gamma_{k}) d\gamma_{k} \\
    &= 2\pi \prod_{k=1}^{d_l - 2} \frac{\Gamma(d_l - k -1)}{2^{d_l - k -1} \Gamma\left(\frac{d_l -k - 1}{2}\right)\Gamma\left(\frac{d_l -k - 1}{2} + 1\right)}(2\pi) \\
    &= \frac{(2\pi)^{d_l - 1}}{2^{\frac{1}{2}(d_l - 2)(d_l -1)}} \prod_{k=1}^{d_l - 2} \frac{\Gamma(d_l - k -1)}{ \Gamma\left(\frac{d_l -k - 1}{2}\right)\Gamma\left(\frac{d_l -k - 1}{2} + 1\right)} ,
\end{align*}

where we have used Lemma \ref{LEMMA:INT_POWERS_SIN} to compute the integrals. If $d_l = 1$, then there is no need to write the integral in spherical coordinates and we can simply set $\tilde{C} = 1$.

Now we can apply the Legendre duplication formula:
\begin{equation}
    \Gamma(z)\Gamma(z + \frac{1}{2}) = 2^{1 - 2z} \sqrt{\pi} \Gamma(2z)
\end{equation}
to the numerator term $2z = d_l - k - 1$ and get:
\begin{equation}
    \tilde{C_l} = \frac{(2\pi)^{d_l - 1}}{\pi^{\frac{d_l - 2}{2}}2^{\frac{1}{2}(d_l - 2)(d_l -1)}} \prod_{k=1}^{d_l - 2} \frac{\cancel{\Gamma\left(\frac{d_l -k - 1}{2}\right)} \Gamma(\frac{d_l - k -1}{2} + \frac{1}{2})2^{d_l - k - 2}}{ \cancel{\Gamma\left(\frac{d_l -k - 1}{2}\right)}\Gamma\left(\frac{d_l -k - 1}{2} + 1\right)} .
\end{equation}
Note that that the product:
\begin{equation}
    \prod_{k=1}^{d_l - 2} \frac{ \Gamma(\frac{d_l - k -1}{2} + \frac{1}{2})}{ \Gamma\left(\frac{d_l -k - 1}{2} + 1\right)} = \frac{\Gamma(\frac{d_l - 1}{2})}{\Gamma(\frac{d_l}{2})}\cdot \frac{\Gamma(\frac{d_l - 2}{2})}{\Gamma(\frac{d_l - 1}{2})} \cdots \frac{1}{\Gamma(\frac{1}{2})} = \frac{1}{\Gamma(\frac{d_l}{2})}
\end{equation}
Finally, we have the product
\begin{equation}
    \prod_{k=1}^{d_l - 2} 2^{d_l - k - 2} = 2^{d_l(d_l-2) - \frac{(d_l - 2)(d_l - 1)}{2} - 2(d_l - 2)} ,
\end{equation}
from which we can conclude, after some elementary algebraic manipulations:
\begin{equation}
    \tilde{C}_l = \frac{2 \pi^{\frac{d_l}{2}}}{\Gamma\left(\frac{d_l}{2} \right)} .
\end{equation}
\end{proof}
Finally we prove the technical Lemma that we used in the previous proof.
\begin{tcolorbox}[boxrule=0pt, sharp corners]
\begin{lemma}
\label{LEMMA:INT_POWERS_SIN}
\begin{equation}
    \int_{0}^\pi \sin^{k}(x) dx = \frac{\Gamma(k)}{2^k \Gamma\left(\frac{k}{2}\right)\Gamma\left(\frac{k}{2} + 1\right)}(2\pi) .
\end{equation}
\end{lemma}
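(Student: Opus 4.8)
The plan is to first reduce the integral to a standard Beta-function evaluation and then reshape the resulting Gamma factors into the stated form via the Legendre duplication formula. Since $\sin$ is symmetric about $x = \pi/2$ on $[0,\pi]$, I would begin by writing $\int_0^\pi \sin^k(x)\,dx = 2\int_0^{\pi/2}\sin^k(x)\,dx$. The half-interval integral is a classical instance of the Beta integral: for parameters with positive real part,
\[
\int_0^{\pi/2}\sin^{2a-1}(x)\cos^{2b-1}(x)\,dx = \tfrac{1}{2}B(a,b) = \frac{\Gamma(a)\Gamma(b)}{2\,\Gamma(a+b)},
\]
which itself follows from the substitution $t = \sin^2(x)$ turning the left-hand side into $\tfrac12\int_0^1 t^{a-1}(1-t)^{b-1}\,dt$.

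Specializing to $2a-1 = k$ and $2b-1 = 0$, that is $a = \tfrac{k+1}{2}$ and $b = \tfrac12$, and using $\Gamma(\tfrac12) = \sqrt{\pi}$, I would obtain
\[
\int_0^\pi \sin^k(x)\,dx = \frac{\sqrt{\pi}\,\Gamma\!\left(\tfrac{k+1}{2}\right)}{\Gamma\!\left(\tfrac{k}{2}+1\right)}.
\]
This already establishes the lemma in its most natural form, but the stated identity requires the equivalent expression in terms of $\Gamma(k)$ and $\Gamma(\tfrac{k}{2})$. To bridge the two, I would invoke the Legendre duplication formula $\Gamma(z)\Gamma(z+\tfrac12) = 2^{1-2z}\sqrt{\pi}\,\Gamma(2z)$ at $z = \tfrac{k}{2}$, which gives $\Gamma(\tfrac{k}{2})\Gamma(\tfrac{k+1}{2}) = 2^{1-k}\sqrt{\pi}\,\Gamma(k)$, hence $\Gamma(\tfrac{k+1}{2}) = 2^{1-k}\sqrt{\pi}\,\Gamma(k)/\Gamma(\tfrac{k}{2})$. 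Substituting this and cancelling one factor of $\sqrt{\pi}$ yields $\tfrac{2^{1-k}\pi\,\Gamma(k)}{\Gamma(k/2)\Gamma(k/2+1)}$, which equals $\tfrac{\Gamma(k)}{2^k\Gamma(k/2)\Gamma(k/2+1)}(2\pi)$ exactly, as claimed.

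The argument is essentially mechanical, so there is no genuine analytic obstacle; the only places requiring care are the parameter identification in the Beta integral (ensuring the cosine exponent vanishes, i.e.\ $b = \tfrac12$) and correctly tracking the power of $2$ through the duplication step. It is worth phrasing the conclusion in precisely this $\Gamma(k)$-form rather than the cleaner $\Gamma(\tfrac{k+1}{2})$-form, because the same duplication identity is reused downstream in the angular-constant computation of Lemma~\ref{LEMMA:ANGLULAR_CONSANT_LINEAR_NET}, where the telescoping product relies on the Gamma factors appearing in exactly this arrangement.
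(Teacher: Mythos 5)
Your proof is correct, and it takes a genuinely different route from the paper's. The paper proves this lemma by integration by parts, deriving the Wallis-type reduction formula $\int_0^\pi \sin^k(x)\,dx = \tfrac{k-1}{k}\int_0^\pi \sin^{k-2}(x)\,dx$, unrolling the recursion down to the base cases $k$ even ($\int_0^\pi dx = \pi$) and $k$ odd ($\int_0^\pi \sin(x)\,dx = 2$), and then verifying separately in each parity case that the resulting product of ratios matches the stated Gamma expression (the odd case is only sketched there). Your Beta-function argument replaces all of this with a single closed-form evaluation: the substitution $t=\sin^2(x)$ gives $\int_0^{\pi/2}\sin^k(x)\,dx = \tfrac12 B\bigl(\tfrac{k+1}{2},\tfrac12\bigr)$, and one application of the Legendre duplication formula at $z=\tfrac{k}{2}$ lands exactly on the claimed form; I checked the powers of $2$ and the factor of $2\pi$, and they work out. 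What your approach buys is uniformity — no parity case split, no recursion to unroll, and validity for non-integer exponents — at the cost of invoking the Beta integral as a known identity rather than deriving everything from elementary integration by parts. Notably, the duplication formula is no extra machinery here: the paper itself uses the same identity immediately afterwards in the angular-constant computation (Lemma~\ref{LEMMA:ANGLULAR_CONSANT_LINEAR_NET}), so your proof arguably streamlines the appendix by consolidating where that identity enters. Your closing observation — that the lemma should be stated in the $\Gamma(k)$-form rather than the cleaner $\sqrt{\pi}\,\Gamma\bigl(\tfrac{k+1}{2}\bigr)/\Gamma\bigl(\tfrac{k}{2}+1\bigr)$ form because the downstream telescoping product in the angular constant relies on that arrangement — is accurate and matches how the paper uses it.
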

\end{tcolorbox}
\begin{proof}
By integrating by parts, and using some algebraic manipulation, it is easy to see that:
\begin{equation}
    \int \sin^{k}(x) dx = -\frac{1}{k} \sin^{k-1}x \cos x + \frac{k-1}{k}\int \sin^{k-2} x dx .
\end{equation}
Evaluating the integral between $0$ and $\pi$, we get:
\begin{equation}
    \int_{0}^\pi \sin^{k}(x) dx = \frac{k-1}{k}\int_{0}^\pi \sin^{k-2} x dx .
\end{equation}
By unrolling the recursion:
\begin{equation}
    \int_{0}^\pi \sin^{k}(x) dx = \frac{(k-1)(k-3)\cdots}{k(k-2)\cdots} \begin{cases}
    \int_{0}^\pi dx = \pi & \text{if $k$ is even} \\
    \int_{0}^\pi \sin(x)dx = 2 & \text{if $k$ is odd}
    \end{cases} .
\end{equation}
The following expression includes both the even and the odd case:
\begin{equation}
        \int_{0}^\pi \sin^{k}(x) dx = \frac{\Gamma(k)}{2^k \Gamma\left(\frac{k}{2}\right)\Gamma\left(\frac{k}{2} + 1\right)}(2\pi)
\end{equation}
In fact, if $k$ is even, then:
\begin{align*}
     \frac{\Gamma(k)}{2^k \Gamma\left(\frac{k}{2}\right)\Gamma\left(\frac{k}{2} + 1\right)}(2\pi) &=  \frac{(k-1)(k-2)\cdots}{2^k \frac{k}{2}(\frac{k}{2} - 1)^2 (\frac{k}{2} -2 )^2\cdots} (2\pi)\\ 
     &= \frac{(k-1)(k-3)\cdots}{k(k-2)\cdots} \pi
\end{align*}
If $k$ is odd, we use the identity:
\begin{equation}
    \Gamma\left(\frac{k}{2}\right) = \frac{(k-2)!! \sqrt{\pi} }{2^{\frac{k-1}{2}}} ,
\end{equation}
where $k!!$ is the double factorial.
Following a very similar procedure, we get the desired result.
\end{proof}

\paragraph{Important Remark:} In the ReLU case, we will see that that the integral is from $0$ to $\frac{\pi}{2}$. In that case, we get:
\begin{equation}
    \int_{0}^{\frac{\pi}{2}} \sin^{k}(x) dx = \frac{\Gamma(k)}{2^k \Gamma\left(\frac{k}{2}\right)\Gamma\left(\frac{k}{2} + 1\right)}\pi .
\end{equation}
Therefore the angular constant is:

\begin{align}
    \tilde{C_l} &= \int_{0}^{\frac{\pi}{2}} d\gamma_{d_l-1} \int_{0}^{\frac{\pi}{2}} \sin^{d_l - 2}(\gamma_{1}) d\gamma_1 \int_{0}^{\frac{\pi}{2}} \sin^{d_l - 3}(\gamma_2) d\gamma_2 \dots \int_{0}^{\frac{\pi}{2}} \sin(\gamma_{d_l - 2}) d\gamma_{d_l - 2}  \\
    &= \frac{\pi}{2} \prod_{k=1}^{d_l - 2} \int_{0}^{\frac{\pi}{2}} \sin^{d_l - k - 1}(\gamma_{k}) d\gamma_{k} \\
    &= \frac{\pi^{d_l -1}}{2} \prod_{k=1}^{d_l - 2} \frac{1}{2^{d_l - k -1}}  \frac{\Gamma(d_l - k -1)}{ \Gamma\left(\frac{d_l -k - 1}{2}\right)\Gamma\left(\frac{d_l -k - 1}{2} + 1\right)} \\
    &= \frac{\pi^{d_l -1}}{2 \pi^{\frac{d_l - 2}{2}}} \prod_{k=1}^{d_l - 2} \frac{\Gamma(\frac{d_l - k -1}{2} + \frac{1}{2})2^{- 1}}{ \Gamma\left(\frac{d_l -k - 1}{2} + 1\right)} \\
    &= \frac{\pi^{d_l -1}}{2 \pi^{\frac{d_l - 2}{2}}} \frac{2^{-(d_l - 2)}}{\Gamma(\frac{d_l}{2})} \\
    &= \frac{ \pi^{\frac{d_l}{2}}}{2^{d_l - 1}\Gamma(\frac{d_l}{2})} .
\end{align}

\subsection{Kurtosis of Linear Networks}

Using the closed form expressions from Section \ref{sec:proof_moments_linear}, we can describe the kurtosis of the output $f^{(l)}$ as
\begin{equation}
\label{eq:kurtosis_linear_net}
    \kappa_{\text{lin}} := \frac{3  \sigma_w^{2l}(m+2)^{l-1}}{ \sigma_w^{2l} m^{l-1}} = 3\left(\frac{m+2}{m}\right)^{l-1} .
\end{equation}
In particular, the distribution is always more heavy-tailed than a Gaussian, for which $\kappa = 3$ (i.e.\ the distribution is leptokurtic). The second obvious conclusion is that depth increases the heavy-tailedness exponentially, which is in-line with the theoretical results of \citep{vladimirova2019understanding}. On the contrary, the width has the effect of "normalizing" the distribution, in particular in the limit of large width we have that:
\begin{equation}
    \lim_{m \rightarrow \infty} \kappa = 3 ,
\end{equation}
which is the kurtosis of the Gaussian distribution, as anticipated from Lemma~\ref{moments_linear}. 
\section{Proofs for ReLU networks and Derivation of their Moments}

Now, we extend these results to ReLU networks.
We need the following additional notation: we call $\delta(x - x_0)$ the Dirac delta function centered at $x_0$, and $\mathbbm{1}_A := \begin{cases} 1 & x \in A \\ 0 & \text{else} \end{cases}$ the indicator function. Also, we indicate with $\mathcal{S}$ the set of indices $1, \dots, d$ that index $d$ random variables, and with $\Omega$ its power set, i.e.\, the set of all possible subsets of $\mathcal{S}$. Note that $|\Omega| = 2^{d}$.
We will use the following lemma, which explains what happens to a joint density when the marginals are transformed by the ReLU function. 

\label{relu_proofs}
\subsection{Effect of ReLU activation function on the joint density}
\label{sec:proof_lemma_relu_transform_density}
\begin{tcolorbox}[boxrule=0pt, sharp corners]
\begin{lemma}
\label{lemma:relu_transform_density}
Let $p(f_1, \dots f_d)$ be the joint density of the $d$ random variables $f_1, \dots f_d$.\footnote{We use the same notation for the random variable and the corresponding dummy variable in the density function.} Assume $p(f_1, \dots f_d)$ is symmetric around zero. When we apply the transformation $g_i = ReLU(f_i)$ to each $i \in [d]$, then the joint density of the transformed variables has the following form:
\begin{equation}
    p_{\text{ReLU}}(g_1, \dots g_d) = \sum_{A \in \Omega} \frac{1}{2^{|A|}} p(\bm{g}_{\mathcal{S} \setminus A}) \prod_{j \in \mathcal{S} \setminus A} \mathbbm{1}_{g_j > 0}\prod_{i \in A} \delta(g_i) ,
\end{equation}
where $p(\bm{g}_{\mathcal{S} \setminus A})$ is the marginal density of the random variables whose indexes are in $\mathcal{S} \setminus A$. 
\end{lemma}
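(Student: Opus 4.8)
The plan is to establish the identity \emph{weakly}, i.e.\ by testing both sides against an arbitrary bounded continuous function $\phi:\mathbb{R}^d\to\mathbb{R}$ and checking that $\mathbb{E}[\phi(g_1,\dots,g_d)]$ equals the integral of $\phi$ against the proposed density. This sidesteps the difficulty of manipulating the Dirac masses directly: the pushforward of $p$ under the coordinate-wise ReLU is a genuinely mixed measure, carrying atoms on every coordinate subspace $\{g_i=0\}$, and the factors $\delta(g_i)$ are merely the density-level bookkeeping of those atoms. Working against a test function keeps all such manipulations rigorous.

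First I would write $\mathbb{E}[\phi(\bm g)]=\int_{\mathbb{R}^d}\phi(\mathrm{ReLU}(f_1),\dots,\mathrm{ReLU}(f_d))\,p(\bm f)\,d\bm f$ and partition $\mathbb{R}^d$ according to the sign pattern of $\bm f$: for each $A\in\Omega$ set $R_A=\{\bm f: f_i\le 0\ \forall i\in A,\ f_j>0\ \forall j\in\mathcal S\setminus A\}$. These regions tile $\mathbb{R}^d$ up to a Lebesgue-null set, and on $R_A$ the map collapses the coordinates in $A$ to zero while leaving those in $\mathcal S\setminus A$ fixed. Since $\phi(\mathrm{ReLU}(\bm f))$ then depends only on $(f_j)_{j\notin A}$ (with the $A$-coordinates sent to $0$), I would integrate out $(f_i)_{i\in A}$ over the negative half-lines, producing the partial integral $h_A\big((f_j)_{j\notin A}\big):=\int_{\{f_i\le 0,\,i\in A\}}p(\bm f)\prod_{i\in A}df_i$.

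The crux is evaluating $h_A$. Here I use that the densities in question depend on $\bm f$ only through $\|\bm f\|^2$ (cf.\ Thm.~\ref{thm:distr_linear_network_any_width}), hence are invariant under flipping the sign of each coordinate independently; this is the precise sense of ``symmetric around zero'' that the argument requires. Under this invariance the $2^{|A|}$ orthants of the $A$-variables contribute equally, so $h_A=\tfrac{1}{2^{|A|}}\,p(\bm g_{\mathcal S\setminus A})$, with $p(\bm g_{\mathcal S\setminus A})$ the full marginal. Substituting back, recognizing the restriction $f_j>0$ as $\prod_{j\notin A}\mathbbm 1_{g_j>0}$ and the collapse of the $A$-coordinates as $\prod_{i\in A}\delta(g_i)$, and summing over $A$, yields exactly $\int\phi(\bm g)\sum_{A}2^{-|A|}\prod_{i\in A}\delta(g_i)\,p(\bm g_{\mathcal S\setminus A})\prod_{j\notin A}\mathbbm 1_{g_j>0}\,d\bm g$. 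As $\phi$ was arbitrary, this identifies the density.

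The main obstacle is the symmetry step: plain central symmetry $p(\bm f)=p(-\bm f)$ does \emph{not} suffice to split the negative-orthant integral into the clean factor $2^{-|A|}$ — one genuinely needs invariance under sign flips of \emph{individual} coordinates, which I would justify from the norm-dependence of $p$. A secondary technical point is making the $\delta$/indicator manipulation legitimate; framing everything weakly against $\phi$ renders this routine, with the extreme term $A=\mathcal S$ correctly accounting for the atom of mass $2^{-d}$ that the ReLU places at the origin.
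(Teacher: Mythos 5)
Your proof is correct and, at its core, follows the same combinatorial skeleton as the paper's: both decompose over the subset $A$ of coordinates that are negative (hence clipped to zero), and both hinge on the same key evaluation $\int_{\mathbb{R}^{|A|}_{<0}} p(\bm{f}_A, \bm{g}_{\mathcal{S}\setminus A})\, d\bm{f}_A = 2^{-|A|}\, p(\bm{g}_{\mathcal{S}\setminus A})$. The difference is one of formalism, and it buys you something real. The paper conditions on $\bm{f}$, writes $p(g_i \mid f_i) = \delta(g_i)\mathbbm{1}_{f_i<0} + \delta(g_i-f_i)\mathbbm{1}_{f_i\geq 0}$, expands the product of these mixed densities, and manipulates the Dirac masses inside ordinary integrals; your weak formulation against a test function $\phi$, with the explicit partition into sign regions $R_A$, rigorizes exactly those manipulations, since the pushforward of $p$ under coordinate-wise ReLU is a mixed measure and testing against $\phi$ is the clean way to handle its atoms. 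More importantly, you correctly flag that the paper's stated hypothesis --- $p$ ``symmetric around zero'' --- is insufficient if read as central symmetry $p(\bm{f}) = p(-\bm{f})$: the factor $2^{-|A|}$ requires the conditional law of $\bm{f}_A$ given $\bm{f}_{\mathcal{S}\setminus A}$ to put equal mass on all $2^{|A|}$ orthants, i.e.\ invariance under sign flips of \emph{individual} coordinates (a centrally symmetric density concentrated near the diagonal $f_1 = f_2$ violates the claimed identity). Your fix --- deriving sign-flip invariance from the fact that the densities actually used here depend on $\bm{f}$ only through $\|\bm{f}\|^2$, cf.\ Thm.~\ref{thm:distr_linear_network_any_width} --- is exactly what is needed, and is implicitly what the paper relies on without saying so.
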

\end{tcolorbox}
\begin{proof}
Again we use conditional independence: the activations $g_1, \dots, g_d$ are independent given the pre-activations $f_1, \dots, f_d$. So we can write:
\begin{align}
    p_{\text{ReLU}}(g_1, \dots, g_d) &= \int_{\mathbb{R}^d} p(g_1, \dots, g_d | f_1, \dots, f_d) p(f_1, \dots, f_d) d\bm{f} \\ 
    &=  \int_{\mathbb{R}^d} \prod_{i=1}^d p(g_i | f_i) p(f_1, \dots, f_d) d\bm{f} .
\end{align}
Now, $p(g_i | f_i) = \begin{cases} \delta(g_i) &  f_i < 0 \\ \delta(g_i - f_i) & f_i \geq 0  \end{cases} = \delta(g_i) \mathbbm{1}_{f_i < 0} + \delta(g_i-f_i)\mathbbm{1}_{f_i \geq 0}$. So we can write:

\begin{align}
    p_{\text{ReLU}}(g_1, \dots, g_d) &= \int_{\mathbb{R}^d} \prod_{i=1}^d \left( \delta(g_i) \mathbbm{1}_{f_i < 0} + \delta(g_i-f_i)\mathbbm{1}_{f_i \geq 0} \right) p(f_1, \dots, f_d) d\bm{f} \\
    &= \int_{\mathbb{R}^d} \prod_{i=1}^d \left( \delta(g_i) \mathbbm{1}_{f_i < 0} + \delta(g_i-f_i)\mathbbm{1}_{f_i \geq 0} \right) p(f_1, \dots, f_d) d\bm{f} \\
    &= \int_{\mathbb{R}^d} \sum_{A \in \Omega} \left( \prod_{i \in A} \delta(g_i)\mathbbm{1}_{f_i < 0} \prod_{j \in \mathcal{S} \setminus A} \delta(g_j - f_j)\mathbbm{1}_{f_j \geq 0} \right) p(f_{1}, \dots, f_{d}) d\bm{f} \\
    &= \sum_{A \in \Omega} \int_{\mathbb{R}^{|A|}_{< 0}} \int_{\mathbb{R}^{d - |A|}} \prod_{i \in A} \delta(g_i) \prod_{j \in \mathcal{S} \setminus A} \delta(g_j - f_j)\mathbbm{1}_{f_j \geq 0} p(f_{1}, \dots, f_{d}) d\bm{f} \\
    &= \sum_{A \in \Omega} \prod_{i \in A} \delta(g_i) \int_{\mathbb{R}^{|A|}_{< 0}} \int_{\mathbb{R}^{d - |A|}}  \prod_{j \in \mathcal{S} \setminus A} \delta(g_j - f_j)\mathbbm{1}_{f_j \geq 0} p(f_{1}, \dots, f_{d}) d\bm{f} \\
    &= \sum_{A \in \Omega} \prod_{i \in A} \delta(g_i) \int_{\mathbb{R}^{|A|}_{< 0}} p(\bm{f}_A, \bm{g}_{\mathcal{S}\setminus A}) d\bm{f}_A \prod_{j \in \mathcal{S} \setminus A}\mathbbm{1}_{g_j > 0} \\
    &= \sum_{A \in \Omega} \frac{1}{2^{|A|}} \prod_{i \in A} \delta(g_i) p( \bm{g}_{\mathcal{S}\setminus A}) \prod_{j \in \mathcal{S} \setminus A}\mathbbm{1}_{g_j > 0} ,
\end{align}
where in the second to last step we have used the well known property of the Delta function $\int_{-\infty}^{\infty} f(x)\delta(x - x_0)dx = f(x_0)$ and in the last step we used the fact that the density $p$ is symmetric around 0.
\end{proof}

\subsection{Proof of Theorem \ref{THM:DISTR_RELU_NETWORK_ANY_WIDTH}}
\label{sec:proof_thm_distr_relu_any_width}
\begin{tcolorbox}[boxrule=0pt, sharp corners]
\begin{theorem*}
Suppose $l \geq 2$, and the input has dimension $d_0$. Define the multi-index set $\mathcal{R} = [d_1] \times \dots \times [d_{l-1}]$ and introduce the vector $\bm{u}^{\bm{r}} \in \mathbb{R}^{l-1}$ through its components $\bm{u}^{\bm{r}}_i = \frac{1}{2}(r_i-d_l)$.
\begin{equation}
    p(\bm{f}_{\text{ReLU}}^{(l)}) = \sum_{\bm{r} \in \mathcal{R}}q_{\bm{r}}G^{l, 0}_{0,l}\left(\frac{||\bm{f}_{\text{ReLU}}^{(l)}||^2}{2^l  \sigma^{2}} \bigg \rvert 0, \bm{u}^{\bm{r}} \right) + q_0 \delta(\bm{f}_{\text{ReLU}}^{(l)}) ,
\end{equation}
where $\sigma^2 = \prod_{i=1}^l\sigma_i^2$ and the individual weights are given by
\begin{equation}
    q_{\bm{r}} =\pi^{-\frac{d_l}{2}}2^{-\frac{l}{2}d_l} (\sigma^2)^{-\frac{d_l}{2}}\prod_{i=1}^{l-1}\binom{d_{i}}{r_i} \frac{1}{2^{d_{i}} \Gamma\left( \frac{r_i}{2}\right)}  ,
\end{equation}
and 
\begin{equation}
    q_0 = 1 - \prod_{i=1}^{l-1}\frac{2^{d_i}-1}{2^{d_i}}
\end{equation}
\end{theorem*}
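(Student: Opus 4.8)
The plan is to mirror the inductive argument behind the linear Theorem~\ref{thm:distr_linear_network_any_width}, but to insert the ReLU nonlinearity through Lemma~\ref{lemma:relu_transform_density} at every layer. I induct on $l$, anchoring the induction at $l=1$, where $\bm f^{(1)}\sim\mathcal N(\bm 0,\sigma_1^2\bm I)$ is the single-term ($\mathcal R=\emptyset$) instance $p(\bm f^{(1)})\propto G^{1,0}_{0,1}(\|\bm f^{(1)}\|^2/(2\sigma_1^2)\mid 0)$. The structural fact I will use repeatedly is that $p(\bm f^{(l)}\mid\bm g^{(l-1)})=\mathcal N(\bm 0,\sigma_l^2\|\bm g^{(l-1)}\|^2\bm I)$ depends on the post-activations only through $\|\bm g^{(l-1)}\|^2$; consequently every marginal pre-activation density is radial and symmetric about the origin, which is exactly the hypothesis required to invoke Lemma~\ref{lemma:relu_transform_density}.

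\textbf{Inductive step: ReLU decomposition and marginalization.} For the step $l-1\to l$ I write $p(\bm f^{(l)})=\int p(\bm f^{(l)}\mid\bm g^{(l-1)})\,p(\bm g^{(l-1)})\,d\bm g^{(l-1)}$ and expand $p(\bm g^{(l-1)})$ via Lemma~\ref{lemma:relu_transform_density}. Each resulting term is indexed by the inactive set $A\subseteq[d_{l-1}]$: the factors $\delta(g_i)$, $i\in A$, collapse the integral onto the $a:=d_{l-1}-|A|$ active coordinates, leaving the marginal of $\bm f^{(l-1)}$ on those units, integrated over the positive orthant $(\mathbb R_{>0})^a$. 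The key observation that avoids a fresh Meijer-G coordinate integral is that the marginal of $\bm f^{(l-1)}$ on any $a$ coordinates is distributed exactly as the full pre-activation vector of the sub-network obtained by shrinking the last hidden width from $d_{l-1}$ to $a$ (the relevant last-layer columns are i.i.d.\ $\mathcal N(0,\sigma_{l-1}^2)$ regardless, and the first $l-2$ layers are shared). Hence the induction hypothesis applies verbatim with $d_{l-1}\mapsto a$, representing this marginal as a $G^{l-1,0}_{0,l-1}$ with parameters $0,\tfrac12(d_i-r_i-a)$.

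\textbf{Propagation integral and parameter shift.} Substituting this marginal into the layer-$l$ integral and passing to hyperspherical coordinates reduces the problem to the radial identity of Lemma~\ref{lemma:integral_for_induction}; the only change from the linear proof is that the angular prefactor is now the positive-orthant constant $\tilde C^{+}_a=\pi^{a/2}/(2^{a-1}\Gamma(a/2))$ arising from integrating over $[0,\tfrac{\pi}{2}]$ (cf.\ the Important Remark following Lemma~\ref{LEMMA:INT_POWERS_SIN}), rather than the full-sphere constant. Applied with integration dimension $a$ and output dimension $d_l$, this step prepends a new ``$0$'' parameter and shifts every inherited parameter by $\tfrac12(a-d_l)$. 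Using $a=d_{l-1}-r_{l-1}$ one checks the telescoping $\tfrac12(d_i-r_i-a)+\tfrac12(a-d_l)=\tfrac12(d_i-r_i-d_l)$ for $i\le l-2$, while the marginal's own leading ``$0$'' is promoted to $\tfrac12(a-d_l)=\tfrac12(d_{l-1}-r_{l-1}-d_l)$, i.e.\ the new component $\bm u^{\bm r}_{l-1}$. Summing over all $A$ of a fixed size contributes the multiplicity $\binom{d_{l-1}}{a}=\binom{d_{l-1}}{r_{l-1}}$, extending the multi-index set from $\mathcal R'=[d_1]\times\cdots\times[d_{l-2}]$ to $\mathcal R$.

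\textbf{Constants, degenerate terms, and the main obstacle.} It then remains to accumulate the scalar prefactors into the stated $q_{\bm r}$: the $2^{-|A|}=2^{-r_{l-1}}$ from Lemma~\ref{lemma:relu_transform_density}, the factor $1/\Gamma(\tfrac{d_{l-1}-r_{l-1}}{2})$ together with the powers of $2$ and $\pi$ from $\tilde C^{+}_a$, the normalizing constant of Lemma~\ref{lemma:integral_for_induction}, and the leading factors from the final-layer $\chi^2_{d_l}$ normalization. I expect this constant bookkeeping---verifying that the per-layer contributions collect precisely into $\binom{d_i}{r_i}2^{-d_i}/\Gamma(\tfrac{d_i-r_i}{2})$ and the global factor $\pi^{-d_l/2}2^{-l d_l/2}(\sigma^2)^{-d_l/2}$---to be the most laborious part, and the uniformity of the shift (so that the single width $d_l$ appears in every component of $\bm u^{\bm r}$) to be the main conceptual point to pin down. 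A pleasant by-product of the sub-network viewpoint is that the all-inactive layers ($r_i=d_i$, i.e.\ $a=0$) are suppressed automatically, since $1/\Gamma(0)=0$; these correspond to the point mass at $\bm f^{(l)}=\bm 0$ that is absent from the continuous density, explaining why the multi-index may formally range over $[d_i]$ while its top value contributes nothing.
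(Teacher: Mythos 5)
Your proposal follows essentially the same route as the paper's own proof: induction over layers, Lemma~\ref{lemma:relu_transform_density} to decompose over inactive sets $A$, the sub-network identification so that the induction hypothesis applies to the marginal on the $a=d_{l-1}-|A|$ active coordinates (the paper asserts this via the remark after Lemma~\ref{lemma:second_layer_relu}), Lemma~\ref{lemma:integral_for_induction} with the positive-orthant angular constant, the same parameter telescoping with the old leading $0$ promoted to $\bm{u}^{\bm{r}}_{l-1}$, and the binomial multiplicity $\binom{d_{l-1}}{|A|}$ over sets of equal cardinality. The only piece you leave unexecuted is the mechanical collection of prefactors into $q_{\bm{r}}$, which the paper carries out explicitly; your added observations (the radial-symmetry justification for invoking Lemma~\ref{lemma:relu_transform_density}, and the vanishing of the all-inactive terms via $1/\Gamma(0)=0$) are correct and consistent with the paper.
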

\end{tcolorbox}
\begin{proof}
Before starting the proof, note that there is a special case the has to be handled separately: the case in which \emph{all units} are inactive, i.e., the ReLU activation sets to zero all the pre-activation in a layer. This will be handled at end of the proof. First, let's assume that there is at least one active unit per layer.

The proof is again by induction. The base case ($l=2$) is stated in Lemma \ref{lemma:second_layer_relu}. For the general case, we use again an identical approach as in Theorem \ref{thm:distr_linear_network_any_width}. We expand the coefficients and write $c^l_{r_l} := \binom{d_l}{r_l}\frac{1}{2^{d_l}\Gamma\left( \frac{d_{l} - r_l}{2}\right)}$, where $l > 0$ is the layer index.
Induction step: assume that the pre-nonlinearities have the following form:
\begin{align}
p(f^{(l-1)}_1, \dots, f^{(l-1)}_{d_{l-1}}) &= \sum_{r_1=0}^{d_1-1} c^{1}_{r_1} \dots \sum_{r_{l-2}=0}^{d_{l-2}-1} c^{l-2}_{r_{l-2}}\pi^{-\frac{d_{l-1}}{2}}2^{-\frac{(l-1)d_{l-1}}{2}}(\tilde{\sigma}^2)^{-\frac{d_{l-1}}{2}} \\
& G^{l-1, 0}_{0,l-1}\left(\frac{||\bm{f}^{(l-1)}||^2}{2^{l-1}  \tilde{\sigma}^{2}} \bigg \rvert  0, \frac{1}{2}\left(d_1 - r_1 - d_{l-1}\right), \dots, \frac{1}{2}\left( d_{l-2} - r_{l-2} - d_{l-1}\right) \right) ,
\end{align}
where $\tilde{\sigma}^2 = \prod_{i=1}^{l-1}\sigma_i^2$. 
We know from Lemma \ref{lemma:relu_transform_density}, that the activations $g^{(l-1)}_1, \dots, g^{(l-1)}_{d_{l-1}}$ have the following density:
\begin{align}
    p_{\text{ReLU}}(g^{(l-1)}_1, \dots, g^{(l-1)}_{d_{l-1}}) &= \sum_{A \in \Omega} \frac{1}{2^{|A|}}\prod_{i \in A} \delta(g^{(l-1)}_i) p(\bm{g}^{(l-1)}_{\mathcal{S} \setminus A}) \prod_{j \in \mathcal{S} \setminus A} \mathbbm{1}_{g_j > 0} \\
    &= \sum_{A \in \Omega} \frac{1}{2^{|A|}} \sum_{r_1=0}^{d_1-1} c^1_{r_1} \dots \sum_{r_{l-2}=0}^{d_{l-2}-1} c^{l-2}_{r_{l-2}} \pi^{-\frac{d_{l-1}}{2}}2^{-\frac{(l-1)d_{l-1}}{2}}(\tilde{\sigma}^2)^{-\frac{d_{l-1}}{2}} \\
    & G^{l-1, 0}_{0,l-1}\left(\frac{||\bm{g}^{(l-1)}_{\mathcal{S} \setminus A}||^2}{2^{l-1}  \tilde{\sigma}^{2}} \bigg \rvert  0, \frac{1}{2}\left(d_1 - r_1 - d_A\right), \dots, \frac{1}{2}\left( d_{l-2} - r_{l-2} - d_A \right) \right) \\
    & \prod_{i \in A} \delta(g^{(l-1)}_i) \prod_{j \in \mathcal{S} \setminus A} \mathbbm{1}_{g_j > 0},
\end{align}
where $d_A := |\mathcal{S} \setminus A| = d_{l-1} - |A|$. Also, here we abuse the notation and consider that $\mathcal{S}$ is not in the power set, i.e., $\mathcal{S} \not \in \Omega$. This is to be consistent with the fact that we are handling the case in which at least one unit is active after the ReLU activation is applied. Now following a similar procedure as in Lemma \ref{lemma:second_layer_relu}, we have
\begin{align}
    p(f_1^{(l)}, \dots f_{d_l}^{(l)}) &= \int_{\mathbb{R}_{\geq 0}^{d_{l-1}}} p(f_1^{(l)}, \dots f_{d_l}^{(l)} | g_1^{(l-1)}, \dots g_{d_{1}}^{(l-1)} ) p_{\text{ReLU}}(g_1^{(l-1)}, \dots g_{d_1}^{(l-1)}) d\bm{g}^{(l-1)} ,
\end{align}
which is equal to:
\begin{align}
    & \int_{\mathbb{R}_{\geq 0}^{d_{l-1}}} \frac{1}{(2\pi\sigma_l^2 ||\bm{g}^{(l-1)}||^2)^{\frac{d_l}{2}}} \text{e}^{-\frac{||\bm{f}^{(l)}||^2}{2 \sigma_l^2||\bm{g}^{(l-1)}||^2}} \sum_{A \in \Omega} \frac{1}{2^{|A|}} \sum_{r_1=0}^{d_1-1} c^1_{r_1} \dots \sum_{r_{l-2}=0}^{d_{l-2}-1} c^{l-2}_{r_{l-2}}\\ 
    & \pi^{-\frac{d_{l-1} - |A|}{2}}2^{-\frac{(l-1)(d_{l-1}-|A|)}{2}}(\tilde{\sigma}^2)^{-\frac{d_{l-1}-|A|}{2}} \\
    & G^{l-1, 0}_{0,l-1}\left(\frac{||\bm{g}^{(l-1)}_{\mathcal{S} \setminus A}||^2}{2^{l-1}  \tilde{\sigma}^{2}} \bigg \rvert  0, \frac{1}{2}\left(d_1 - r_1 - d_A\right), \dots, \frac{1}{2}\left( d_{l-2} - r_{l-2} - d_A \right) \right) \\
    & \prod_{i \in A} \delta(g^{(l-1)}_i) \prod_{j \in \mathcal{S} \setminus A} \mathbbm{1}_{g_j > 0} d\bm{g}^{(l-1)} \\
    &= \sum_{A \in \Omega} \frac{\pi^{-\frac{d_{l-1} - |A|}{2}}2^{-\frac{(l-1)(d_{l-1}-|A|)}{2}}(\tilde{\sigma}^2)^{-\frac{d_{l-1}-|A|}{2}}}{2^{|A|} (2\pi\sigma_l^2)^{\frac{d_l}{2}}}\sum_{r_1=0}^{d_1-1} c^1_{r_1} \dots \sum_{r_{l-2}=0}^{d_{l-2}-1} c^{l-2}_{r_{l-2}} \\
    & \int_{\mathbb{R}_{> 0}^{d_{l-1}}} \frac{1}{( ||\bm{g}_{\mathcal{S}\setminus A}^{(l-1)}||^2)^{\frac{d_l}{2}}} \text{e}^{-\frac{||\bm{f}^{(l)}||^2}{2 \sigma_l^2||\bm{g}_{\mathcal{S}\setminus A}^{(l-1)}||^2}} \\
    & G^{l-1, 0}_{0,l-1}\left(\frac{||\bm{g}^{(l-1)}_{\mathcal{S} \setminus A}||^2}{2^{l-1}  \tilde{\sigma}^{2}} \bigg \rvert  0, \frac{1}{2}\left(d_1 - r_1 - d_A\right), \dots, \frac{1}{2}\left( d_{l-2} - r_{l-2} - d_A \right) \right) d\bm{g}_{\mathcal{S} \setminus A}^{(l-1)} 
\end{align}
For each set $A$ we have a $(d_{l-1} - |A|)$- dimensional integral that can be solved using once again Lemma \ref{lemma:integral_for_induction} \footnote{see proof of Lemma \ref{lemma:second_layer_relu} for a small but important detail of this integral}. Note that for the new Meijer-G coefficients of Lemma \ref{lemma:integral_for_induction}:
\begin{equation}
    \frac{1}{2}\left(d_i - r_i - d_{l-1} + |A| \right) + \frac{1}{2}\left(d_{l-1} - |A| - d_l \right) = \frac{1}{2}\left(d_i - r_i - d_l \right)
\end{equation}
holds for all $i \in [d_{l-2}]$. Therefore the solution of each integral is equal to 
\begin{align}
    & \frac{1}{2}\tilde{C}_A 2^{\frac{1}{2}(d_{l-1} - |A| - d_l)(l-1)} \tilde{\sigma}^{(d_{l-1} - |A| - d_l)} \\ & G^{l, 0}_{0,l}\left(\frac{||\bm{f}^{(l)}||^2}{2^l  \tilde{\sigma}^{2}} \bigg \rvert  0, \frac{1}{2}\left(d_1 - r_1 - d_l\right), \dots, \frac{1}{2}\left( d_{l-1} - r_{l-1} - d_l\right) \right).
\end{align}
The new coefficient for every set $A$ is:
\begin{align}
    c_A &= \frac{\pi^{-\frac{d_{l-1} - |A|}{2}}2^{-\frac{(l-1)(d_{l-1}-|A|)}{2}}(\tilde{\sigma}^2)^{-\frac{d_{l-1}-|A|}{2}}}{2^{|A|} (2\pi\sigma_l^2)^{\frac{d_l}{2}}} \frac{1}{2}\tilde{C}_A 2^{\frac{1}{2}(d_{l-1} - |A| - d_l)(l-1)} \tilde{\sigma}^{(d_{l-1} - |A| - d_l)} \\
    &= \frac{\cancel{\pi^{-\frac{d_{l-1}-|A|}{2}}}\cancel{2^{-\frac{(l-1)(d_{l-1}-|A|)}{2}}(\tilde{\sigma}^2)^{-\frac{d_{l-1}-|A|}{2}}}}{\cancel{2^{|A|}} (2\pi\sigma_l^2)^{\frac{d_l}{2}}} \frac{1}{\cancel{2}}\frac{\cancel{\pi^{\frac{d_{l-1} - |A|}{2}}}}{2^{d_{l-1} - \cancel{|A|} - \cancel{1}} \Gamma\left( \frac{d_{l-1} - |A|}{2}\right)} \\
    & \tilde{C}_A 2^{\frac{1}{2}(\cancel{d_{l-1} - |A|} - d_l)(l-1)} \tilde{\sigma}^{(\cancel{d_{l-1} - |A|} - d_l)}  \\
    &= \frac{\pi^{-\frac{d_l}{2}}2^{-\frac{ld_l}{2}}(\sigma^2)^{-\frac{d_l}{2}}}{2^{d_{l-1}}\Gamma\left( \frac{d_{l-1} - |A|}{2}\right)}
\end{align} 
Therefore, because the dependence on $A$ is only through its cardinality $r^{l-1}_{l-1} := |A|$, we define:
\begin{equation}
    c^{l-1}_{r_{l-1}} :=  \binom{d_{l-1}}{r_{l-1}} \frac{1}{2^{d_{l-1}} \Gamma\left( \frac{d_{l-1} - r_{l-1}}{2}\right)}
\end{equation}
So the solution is:
\begin{align}
    p(f^{(l)}_1, \dots, f^{(l)}_{d_l}) &= \sum_{r_1=0}^{d_1-1} c^1_{r_1} \dots \sum_{r_{l-1}=0}^{d_{l-1}-1} c^{l-1}_{r_{l-1}} \pi^{-\frac{d_l}{2}}2^{-\frac{ld_l}{2}}(\sigma^2)^{-\frac{d_l}{2}} \\ 
    & G^{l, 0}_{0,l}\left(\frac{||\bm{f}^{(l)}||^2}{2^l  \sigma^{l}} \bigg \rvert  0, \frac{1}{2}\left(d_1 - r_1 - d_l\right), \dots, \frac{1}{2}\left( d_{l-1} - r_{l-1} - d_l\right) \right) .
\end{align}

The final form of this equation stated in the theorem is obtained by grouping all the coefficients not involving the Meijer-G function, and substituting $r_i \leftarrow d_i - r_i$ and use the property $\binom{d_i}{r_i} = \binom{d_i}{d_i-r_i}$.

\paragraph{Special case: all units are inactive} If at least in one layer it happens that all post-activations are zero, then the distribution of the network is a point mass at 0. Let's call this event $E$, and its probability $q_0$. The probability of its complement $\bar{E}$ is the probability that for all the intermediate layers, at least one unit is active. These are $l-1$ independent events, the probability of each being $\frac{2^{d_i} - 1}{2^{d_i}}$ (one unit is active in $2^{d_i} -1$ cases out of the all possible combinations of units). Therefore we can conclude that:
\begin{align}
    q_0 = 1 - \prod_{i=1}^{l-1}\frac{2^{d_i - 1}}{2^{d_i}}
\end{align}
\end{proof}

\subsection{Base case for ReLU nets}
\begin{tcolorbox}[boxrule=0pt, sharp corners]
\begin{lemma}[second layer pre-activations density]
\label{lemma:second_layer_relu}
Let $\sigma = \sigma_1 \sigma_2$.
Conditioned on the event that at least one unit of the first layer is active ($g_j^{(1)} \neq 0$ for at least one $j \in [d_1]$), the density of the second layer's pre-activations $f_1^{(2)}, \dots f_{d_2}^{(2)}$ is the following linear combination of Meijer-G functions:
\begin{equation}
    p(\textbf{f}_1^{(2)}) = \sum_{r=0}^{d_1-1}c_r 2^{-d_2}(\sigma^2)^{-\frac{d_2}{2}}\pi^{\frac{-d_2}{2}} G^{2,0}_{0,2}\left(\frac{||\bm{f}^{(2)} ||^2}{4\sigma^2} \bigg \rvert 0, \frac{1}{2}((d_1 - r ) - d_2)\right) + q_0 \delta(\textbf{f}_1^{(2)}) ,
\end{equation}
where $c_r :=\binom{d_1}{r} \frac{1}{2^{d_1} \Gamma\left( \frac{d_1 - r}{2}\right)}$ and $q_0 := 1 - \frac{2^{d_1-1}}{2^{d_i}}$.
\end{lemma}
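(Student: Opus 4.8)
The plan is to obtain the second-layer pre-activation density by composing three ingredients already at hand: the first-layer Gaussian law, the ReLU-transform lemma, and the induction integral lemma. First I would start from $\bm{f}^{(1)} \sim \mathcal{N}(\bm{0}, \sigma_1^2 \bm{I})$ (the base case established in the main text), whose density is symmetric around zero, so that Lemma~\ref{lemma:relu_transform_density} applies directly to the post-activations $\bm{g}^{(1)} = \mathrm{ReLU}(\bm{f}^{(1)})$, giving
\begin{equation}
    p_{\text{ReLU}}(\bm{g}^{(1)}) = \sum_{A \in \Omega} \frac{1}{2^{|A|}} \prod_{i \in A} \delta(g_i^{(1)}) \, p(\bm{g}^{(1)}_{\mathcal{S} \setminus A}) \prod_{j \in \mathcal{S} \setminus A} \mathbbm{1}_{g_j^{(1)} > 0} ,
\end{equation}
where each marginal $p(\bm{g}^{(1)}_{\mathcal{S} \setminus A})$ is a Gaussian on the active coordinates. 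Using the exponential identity $G^{1,0}_{0,1}(z|0) = e^{-z}$ from Proposition~\ref{prop:meijer_g_identities}, I would rewrite this marginal as $G^{1,0}_{0,1}(\|\bm{g}^{(1)}_{\mathcal{S}\setminus A}\|^2/(2\sigma_1^2) \mid 0)$, up to its Gaussian normalization.

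Next, conditioning the second layer on $\bm{g}^{(1)}$ gives $p(\bm{f}^{(2)} \mid \bm{g}^{(1)}) = \mathcal{N}(\bm{0}, \sigma_2^2 \|\bm{g}^{(1)}\|^2 \bm{I})$, where crucially $\|\bm{g}^{(1)}\|^2 = \|\bm{g}^{(1)}_{\mathcal{S}\setminus A}\|^2$ because the inactive units vanish. Marginalizing over $\bm{g}^{(1)}$ with the law of total probability, the $\delta(g_i^{(1)})$ factors collapse the inactive coordinates, leaving for each $A$ a $(d_1 - |A|)$-dimensional integral over the positive orthant against the product of the conditional Gaussian and the $G^{1,0}_{0,1}$ factor. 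This integral is precisely of the form treated by Lemma~\ref{lemma:integral_for_induction} with $l=2$ and $b_1 = 0$, so the radial computation returns
\begin{equation}
    G^{2,0}_{0,2}\!\left(\frac{\|\bm{f}^{(2)}\|^2}{4\sigma^2} \,\bigg\rvert\, 0, \tfrac{1}{2}(d_1 - |A| - d_2)\right),
\end{equation}
with $\sigma^2 = \sigma_1^2 \sigma_2^2$; setting $r = |A|$ matches the claimed second parameter.

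The hard part will be the constant bookkeeping, together with one genuinely non-routine subtlety. Because the activations are non-negative, the integral runs over $\mathbb{R}^{d_1 - |A|}_{>0}$ rather than all of $\mathbb{R}^{d_1 - |A|}$; passing to hyperspherical coordinates, the angular integrals then range over $[0, \pi/2]$ instead of $[0,\pi]$ (resp.\ $[0,2\pi]$), so I must replace the angular constant used in Lemma~\ref{lemma:integral_for_induction} by its positive-orthant version $\tilde{C}_A = \pi^{d_A/2}/(2^{d_A - 1}\Gamma(d_A/2))$, with $d_A = d_1 - |A|$, as computed in the Important Remark following Lemma~\ref{LEMMA:INT_POWERS_SIN}. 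This is the "small but important detail" flagged in the main proof. Finally I would assemble the prefactor from the $1/2^{|A|}$ of the ReLU lemma, the two Gaussian normalizations $(2\pi\sigma_2^2)^{-d_2/2}$ and $(2\pi\sigma_1^2)^{-d_A/2}$, and the constant $C_A = \tfrac{1}{2}\tilde{C}_A 2^{(d_A-d_2)/2}\sigma_1^{d_A-d_2}$ from the lemma. Since the integrand depends on $A$ only through $|A|$, I would group the $\binom{d_1}{r}$ subsets of cardinality $r$ and verify, after cancellation of the $\pi$- and $\sigma_1$-powers and collection of the powers of two, that the coefficient collapses to $c_r \, 2^{-d_2}(\sigma^2)^{-d_2/2}\pi^{-d_2/2}$ with $c_r = \binom{d_1}{r} 2^{-d_1}/\Gamma((d_1 - r)/2)$, which is exactly the claimed statement.
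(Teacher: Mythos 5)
Your proposal is correct and follows essentially the same route as the paper's own proof: decompose via Lemma~\ref{lemma:relu_transform_density} into a sum over active sets $A$, collapse the inactive coordinates with the delta functions, solve each $(d_1-|A|)$-dimensional positive-orthant integral via Lemma~\ref{lemma:integral_for_induction} with the $[0,\pi/2]$ angular constant from the Important Remark, and regroup by cardinality $r=|A|$ using $\binom{d_1}{r}$. The constant bookkeeping you outline (including $C_A = \tfrac{1}{2}\tilde{C}_A 2^{(d_A-d_2)/2}\sigma_1^{d_A-d_2}$ and the positive-orthant $\tilde{C}_A$) matches the paper's computation exactly.
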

\end{tcolorbox}
\begin{proof}
\begin{align}
    p(f_1^{(2)}, \dots f_{d_2}^{(2)}) &= \int_{\mathbb{R}_{\geq 0}^{d_1}} p(f_1^{(2)}, \dots f_{d_2}^{(2)} | g_1^{(1)}, \dots g_{d_1}^{(1)} ) p_{\text{ReLU}}\left(g_1^{(1)}, \dots g_{d_1}^{(1)}\right) d\bm{g}^{(1)}\\
    &= \int_{\mathbb{R}_{\geq 0}^{d_1}} \prod_{k=1}^{d_2}p(f_k^{(2)} | \bm{g}^{(1)}) \prod_{k'=1}^{d_1}p_{ReLU}\left(g_{k'}^{(1)}\right) d\bm{g}^{(1)}\\
    &= \int_{\mathbb{R}_{\geq 0}^{d_1}} \frac{1}{(2\pi\sigma_2^2 ||\bm{g}^{(1)}||^2)^{\frac{d_2}{2}}} \exp \left({-\frac{||\bm{f}^{(2)}||^2}{2 \sigma_2^2||\bm{g}^{(1)}||^2}}\right) \cdot \\
    & \sum_{A \in \Omega} \frac{1}{2^{|A|}} \prod_{i \in A} \delta(g_i^{(1)}) \frac{1}{(2 \pi \sigma_1^2)^{\frac{d_1 - |A|}{2}}} \exp\left(- \frac{\sum_{i \in \mathcal{S} \setminus A} (g_i^{(1)})^2}{2 \sigma_1^2} \right) \prod_{i \in \mathcal{S} \setminus A} \mathbbm{1}_{g_i^{(1)} > 0} d\bm{g}^{(1)} \\
    &= \sum_{A \in \Omega} \frac{1}{2^{|A|}}\frac{1}{(2 \pi \sigma_1^2)^{\frac{d_1 - |A|}{2}} (2\sigma_2^2)^{\frac{d_2}{2}}} \int_{\mathbb{R}_{\geq 0}^{d_1}} \frac{1}{( ||\bm{g}^{(1)}||^2)^{\frac{d_2}{2}}} \exp \left({-\frac{||\bm{f}^{(2)}||^2}{2 \sigma_2^2||\bm{g}^{(1)}||^2}}\right) \\
    & \prod_{i \in A} \delta(g_i^{(1)}) \exp\left(- \frac{\sum_{i \in \mathcal{S} \setminus A} (g_i^{(1)})^2}{2 \sigma_1^2} \right) \prod_{i \in \mathcal{S} \setminus A} \mathbbm{1}_{g_i^{(1)} > 0} d\bm{g}^{(1)} ,
\end{align}
where we can exchange sum and integration due to non-negativeness of the integration variables (Tonelli's theorem). Also, here we abuse the notation and consider that $\mathcal{S}$ is not in the power set, i.e., $\mathcal{S} \not \in \Omega$. This is to be consistent with the fact that we are conditioning on the event in which at least one unit is active after the ReLU activation is applied in the first layer. Now we can use the property of the delta function $\int f(x)\delta(x - x_0)dx = f(x_0)$ and the property of the indicator function $\int_A f(x)\mathbbm{1}_{x \in B} dx = \int_B f(x) dx$ and get:
\begin{align}
    \sum_{A \in \Omega} & \frac{1}{2^{|A|}}\frac{1}{(2 \pi \sigma_1^2)^{\frac{d_1 - |A|}{2}} (2\sigma_2^2)^{\frac{d_2}{2}}} \\
    & \int_{\mathbb{R}_{> 0}^{d_1}} \frac{1}{( ||\bm{g}^{(1)}_{\mathcal{S}\setminus A}||^2)^{\frac{d_2}{2}}} \exp \left({-\frac{||\bm{f}^{(2)}||^2}{2 \sigma_2^2||\bm{g}^{(1)}_{\mathcal{S}\setminus A}||^2}}\right) \exp\left(- \frac{||\bm{g}^{(1)}_{\mathcal{S}\setminus A}||^2}{2 \sigma_1^2} \right)  d\bm{g}^{(1)}_{\mathcal{S}\setminus A}.
\end{align}
Note that the above integral is $(d_1 - |A|)$ dimensional due to the effect of the delta. Now the integral(s) above can be solved in an equivalent manner as in the previous section using Lemma \ref{lemma:integral_for_induction}\footnote{Note that the integral is only for the positive reals. Lemma \ref{lemma:integral_for_induction} can still be used because when switching to spherical coordinates, we are interested in the radius part, while the angular constant can still be calculated, but now we the angles are all from 0 to $\frac{\pi}{2}$}, and they are equal to 
\begin{align}
    & \frac{1}{2}\tilde{C}_{A}(2 \sigma_1^2)^{\frac{1}{2}(d_1 - |A| - d_2)} G^{2,0}_{0,2}\left(\frac{||\bm{f}^{(2)} ||^2}{4\sigma^2} \bigg \rvert 0, \frac{1}{2}((d_1 - |A|) - d_2)\right) 
\end{align}
So we can conclude that:
\begin{align}
   p(f_1^{(2)}, \dots f_{d_2}^{(2)}) &= \sum_{A \in \Omega} \frac{1}{2^{|A|}}\frac{1}{2}\tilde{C}_{A}(2 \sigma_1^2)^{\frac{1}{2}(d_1 - |A| - d_2)}\frac{1}{(2 \pi \sigma_1^2)^{\frac{d_1 - |A|}{2}} (2\sigma_2^2)^{\frac{d_2}{2}}} \\ & G^{2,0}_{0,2}\left(\frac{||\bm{f}^{(2)} ||^2}{4\sigma^2} \bigg \rvert 0, \frac{1}{2}((d_1 - |A|) - d_2)\right) \\
   &= \pi^{-\frac{d_1}{2}}\sum_{r=0}^{d_1-1}\binom{d_1}{r} \frac{\tilde{C}_{r}}{2^{r + 1} (2\sigma_1^2)^\frac{d_2}{2}(2\sigma_2^2)^\frac{d_2}{2} \pi^{\frac{- r + d_2}{2}}} \\ & G^{2,0}_{0,2}\left(\frac{||\bm{f}^{(2)} ||^2}{4\sigma^2} \bigg \rvert 0, \frac{1}{2}((d_1 - r) - d_2)\right) ,
\end{align}
where we have used the fact that the expression depends on the set $A \in \Omega$ only through $|A|$, and therefore we can use the fact that the number of subsets with $r$ elements is given by the binomial coefficient $\binom{d_1}{r}$. 
Define:
\begin{align}
    c_r &:= \binom{d_1}{r} \pi^{-\frac{d_1}{2}} \frac{\tilde{C}_{r}}{2^{r + 1}} \pi^{\frac{r}{2}} \\
    &= \binom{d_1}{r} \pi^{-\frac{d_1}{2}}\frac{\pi^{\frac{d_1 - r}{2}}}{2^{d_1 - r - 1} \Gamma\left( \frac{d_1 - r}{2}\right)} \frac{1}{2^{r + 1}} \pi^{\frac{r}{2}} \\
    &= \binom{d_1}{r} \frac{1}{2^{d_1} \Gamma\left( \frac{d_1 - r}{2}\right)} .
\end{align}
So we can conclude:
\begin{equation}
    p(f_1^{(2)}, \dots f_{d_2}^{(2)}) = \sum_{r=0}^{d_1-1}c_r 2^{-d_2}(\sigma^2)^\frac{-d_2}{2}\pi^{\frac{-d_2}{2}} G^{2,0}_{0,2}\left(\frac{||\bm{f}^{(2)} ||^2}{4\sigma^2} \bigg \rvert 0, \frac{1}{2}((d_1 - r ) - d_2)\right) .
\end{equation}

Finally, there is the special case where all the units are inactive (set to zero). This happens with probability $q_0 = \frac{1-2^{d_i-1}}{2^{d_i}}$. 
\end{proof}
\paragraph{Remark}
Any non empty subset of $d < d_2$ units has the same distribution (with terms involving $d_2$ replaced by $d$).

\subsection{Resulting moments}
\label{sec:proofs_resulting_moments}
Let $d_l = 1$, $d_1, \dots, d_{l-1} = m$, and $b_i = \frac{1}{2}(r_i - 1)$, $i = 1, \dots, l-1$. 
\begin{equation}
    \begin{split}
        \mathbb{E}\left[Z^{2k}\right] &= \int_{\mathbb{R}} z^{2k}\sum_{r_1=1}^{m} c^1_{r_1} \dots \sum_{r_{l-1}=1}^{m} c^{l-1}_{r_{l-1}} \pi^{-\frac{1}{2}}2^{\frac{l}{2}}(\sigma^2)^{-\frac{1}{2}}G^{l, 0}_{0,l}\left(\frac{z^2}{2^l  \sigma^2} \bigg \rvert  0, b_1, \dots, b_{l-1} \right)dz \\
        &=  \sum_{r_1=1}^{m} c^1_{r_1} \dots \sum_{r_{l-1}=1}^{m} c^{l-1}_{r_{l-1}} \pi^{-\frac{1}{2}}2^{\frac{l}{2}}(\sigma^2)^{-\frac{1}{2}}  \int_{0}^{\infty}r^{2k}G^{l, 0}_{0,l}\left(\frac{r^2}{2^l  \sigma^2} \bigg \rvert  0, b_1, \dots, b_{l-1} \right)dr \\
        &= \sum_{r_1=1}^{m} c^1_{r_1} \dots \sum_{r_{l-1}=1}^{m} c^{l-1}_{r_{l-1}} \pi^{-\frac{1}{2}}2^{\frac{l}{2}}(\sigma^2)^{-\frac{1}{2}}  \int_{0}^{\infty}x^{k - \frac{1}{2}}G^{l, 0}_{0,l}\left(\frac{x}{2^l  \sigma^2} \bigg \rvert  0, b_1, \dots, b_{l-1} \right)dx \\
        &= \sum_{r_1=1}^{m} c^1_{r_1} \dots \sum_{r_{l-1}=1}^{m} c^{l-1}_{r_{l-1}} \pi^{-\frac{1}{2}}2^{\frac{l}{2}}(\sigma^2)^{-\frac{1}{2}} \left(\frac{1}{2^l \sigma^2}\right)^{-k -\frac{1}{2}}\Gamma\left(k + \frac{1}{2}\right)\prod_{i=1}^{l-1}\Gamma\left(k + \frac{1}{2} + b_i\right) \\
        &= \pi^{-\frac{1}{2}} 2^{\frac{l}{2}}(\sigma^2)^{-\frac{1}{2}} \left(\frac{1}{2^l \sigma^2}\right)^{-k -\frac{1}{2}}\Gamma\left(k + \frac{1}{2}\right) \sum_{r_1=1}^{m} c^1_{r_1} \dots \sum_{r_{l-1}=1}^{m} c^{l-1}_{r_{l-1}} \prod_{i=1}^{l-1}\Gamma\left(k + \frac{1}{2} + b_i\right) \\
        &= (2k-1)!! 2^{k(l-1)}\sigma^{2k}\left(\frac{1}{2^m}\sum_{r=1}^{m}\binom{m}{r} \frac{\Gamma\left(k+\frac{r}{2}\right)}{\Gamma\left(\frac{r}{2}\right)}\right)^{l-1} .
    \end{split} 
\end{equation}

For instance, for the variance ($k=1$) the sum becomes:
\begin{align}
    &\sum_{r_1=1}^{m} c^1_{r_1} \dots \sum_{r_{l-1}=1}^{m} \binom{m}{r_{l-1}} \frac{1}{2^{m} \Gamma\left( \frac{r_{l-1}}{2}\right)}  \prod_{i=1}^{l-1}\frac{1}{2}r_{i} \Gamma\left(\frac{1}{2}r_i\right) \\
    &= \sum_{r_1=1}^{m} \binom{m}{r_{1}} \frac{r_{1}}{2^{m+1}} \dots \sum_{r_{l-1}=1}^{m} \binom{m}{r_{l-1}} \frac{r_{l-1}}{2^{m+1}}  .
\end{align}
Now each sum can be solved independently:
\begin{align}
    \sum_{r_{i}=1}^{m} \binom{m}{r_{i}} \frac{ r_{i}}{2^{m+1}} &= \frac{1}{2^{m+1}}\left[ \sum_{r_{i}=1}^{m} \binom{m}{r_{i}}r_i \right] \\
    &= \frac{1}{2^{m+1}}\left[m2^{m-1} \right] \\
    &= \frac{m}{4} .
\end{align}
Therefore the variance is:
\begin{align}
    \mathbb{V}[Z] &=  \pi^{-\frac{1}{2}} 2^{\frac{l}{2}}(\sigma^2)^{-\frac{1}{2}} \left(\frac{1}{2^l \sigma^2}\right)^{-1 -\frac{1}{2}}\Gamma\left(1 + \frac{1}{2}\right) \frac{m^{l-1}}{2^{2(l-1)}} \\
    &=\frac{1}{2} 2^{\frac{l}{2}}(\sigma^2)^{-\frac{1}{2}} \left(\frac{1}{2^{-\frac{3}{2}l} (\sigma^{2})^{-\frac{3}{2}}}\right) \frac{m^{l-1}}{2^{2(l-1)}} \\
    &= \frac{1}{2} 2^{l} \sigma^2 \frac{m^{l-1}}{2^{2(l-1)}} \\
    &= \frac{\sigma^2 m^{l-1}}{2^{l-1}} .
\end{align}

Note how the variance of a ReLU net is significantly reduced if compared with the variance of a linear network of the same depth (compare with Eq. \ref{eq:variance_linear_net}). 
Similarly, one can get the fourth moment:
\begin{equation}
    \mathbb{E}[Z^4] = \frac{3 (\sigma^2)^{2}(m+5)^{l-1}m^{l-1}}{2^{2(l-1)}} .
\end{equation}

Therefore the kurtosis is:
\begin{equation}
    \kappa = 3 \left(\frac{m+5}{m}\right)^{l-1} .
\end{equation}

Note how ReLU nets are more heavy-tailed than linear nets.

To calculate the asymptotic moments we need three technical Lemmas that express the quantities encountered in a better form. First we describe the coefficients of a factorized polynomial:
\begin{lemma}
\label{coef_poly}
Consider coefficients $a_{1}, \dots, a_m \in \mathbb{R}$. Define the polynomial
\begin{equation}
    p(x) = \prod_{i=1}^{m}(x+a_i) = \sum_{i=1}^{m}\alpha_i x^{i} .
\end{equation}

Then it holds that $\alpha_m = 1$ and $\alpha_{m-1} = \sum_{i=1}^{m}a_i$.
\end{lemma}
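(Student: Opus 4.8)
The plan is to recognize the claim as the two top-order cases of Vieta's formulas, and to establish it by a direct expansion argument (an equivalent short induction would also work). First I would expand the product $\prod_{i=1}^m (x + a_i)$ via the distributive law: every monomial produced in the expansion is obtained by selecting, from each of the $m$ factors, either the term $x$ or the term $a_i$. If one selects $a_i$ from exactly the factors indexed by a subset $S \subseteq [m]$ and $x$ from the remaining $m - |S|$ factors, the resulting monomial is $\left(\prod_{i \in S} a_i\right) x^{m - |S|}$. Summing over all subsets gives the identity
\begin{equation}
    p(x) = \prod_{i=1}^m (x + a_i) = \sum_{j=0}^m \left( \sum_{\substack{S \subseteq [m] \\ |S| = j}} \prod_{i \in S} a_i \right) x^{m-j} = \sum_{j=0}^m e_j(a_1, \dots, a_m)\, x^{m-j},
\end{equation}
where $e_j$ denotes the $j$-th elementary symmetric polynomial in the $a_i$.

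From this identity the two claims would read off immediately. The coefficient $\alpha_m$ of $x^m$ corresponds to $j = 0$, i.e.\ choosing $x$ from every factor; there is a unique such choice and the empty product equals $1$, so $\alpha_m = e_0 = 1$. The coefficient $\alpha_{m-1}$ of $x^{m-1}$ corresponds to $j = 1$, i.e.\ choosing $a_i$ from exactly one factor and $x$ from all the others; summing over which factor contributes its $a_i$ yields $\alpha_{m-1} = e_1(a_1, \dots, a_m) = \sum_{i=1}^m a_i$, which is exactly the assertion.

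I expect no genuine obstacle here, as the statement is an elementary fact about symmetric functions. If one prefers to avoid invoking elementary symmetric polynomials explicitly, I would instead run a two-line induction on $m$: the base case $m = 1$ gives $p(x) = x + a_1$ directly; and assuming $\prod_{i=1}^{m-1}(x + a_i) = x^{m-1} + \big(\sum_{i=1}^{m-1} a_i\big) x^{m-2} + \cdots$, multiplying by $(x + a_m)$ and collecting the coefficients of $x^m$ and $x^{m-1}$ produces $1$ and $\sum_{i=1}^{m-1} a_i + a_m = \sum_{i=1}^m a_i$, respectively. The only point needing minor care is the indexing convention in the statement, where the written sum $\sum_{i=1}^m \alpha_i x^i$ is understood to reference the top-degree coefficients; this does not affect either of the two coefficients in question.
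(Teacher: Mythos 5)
Your proof is correct. There is, however, nothing in the paper to compare it against: the paper states Lemma~\ref{coef_poly} without any proof, treating it as an elementary fact and immediately invoking it in the proof of Lemma~\ref{ratio-of-gammas} (and again implicitly in Lemma~\ref{moments_linear} and Theorem~\ref{relu_limit}). Your expansion argument --- selecting $x$ or $a_i$ from each factor, so that the coefficient of $x^{m-j}$ is the $j$-th elementary symmetric polynomial, whence $\alpha_m = e_0 = 1$ and $\alpha_{m-1} = e_1 = \sum_{i=1}^m a_i$ --- is the standard justification (Vieta's formulas restricted to the top two coefficients), and the two-line induction you sketch as an alternative is equally valid. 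You are also right to flag the indexing convention: as written, the sum $\sum_{i=1}^{m}\alpha_i x^{i}$ omits the constant term $\alpha_0$, which is a typo in the paper's statement; it is immaterial for the two coefficients the lemma asserts, and for the downstream uses, where only the behavior of the two leading coefficients under division by $m^k$ matters.
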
Next we use Lemma~\ref{coef_poly} to write the ratio of Gamma functions as a polynomial:
\begin{lemma}
\label{ratio-of-gammas}
Fix $k \in \mathbb{N}$ and $x \in \mathbb{R}$. Then we can express the fraction of Gamma functions as follows:
\begin{equation}
    \frac{\Gamma(k + \frac{x}{2})}{\Gamma(\frac{x}{2})} = P_{k}(x) = \sum_{i=0}^{k}\alpha_i x^{i} ,
\end{equation}

where $P_k$ is a $k$-th order polynomial with  coefficients $\alpha_k = 2^{-k}$ and $\alpha_{k-1} = \frac{k^2-k}{2^k}$.
\end{lemma}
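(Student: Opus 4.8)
The plan is to collapse the ratio of Gamma functions into a finite product via the recurrence $\Gamma(z+1)=z\Gamma(z)$, and then to read off the two top coefficients by invoking Lemma~\ref{coef_poly}.

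First I would iterate the functional equation $\Gamma(z+1)=z\Gamma(z)$ a total of $k$ times, starting from $z=\tfrac{x}{2}$. This telescopes to $\Gamma(k+\tfrac{x}{2}) = \big(\prod_{j=0}^{k-1}(j+\tfrac{x}{2})\big)\,\Gamma(\tfrac{x}{2})$, so that after dividing by $\Gamma(\tfrac{x}{2})$ one obtains
\[
\frac{\Gamma(k+\frac{x}{2})}{\Gamma(\frac{x}{2})} = \prod_{j=0}^{k-1}\Big(j+\frac{x}{2}\Big) = \frac{1}{2^{k}}\prod_{j=0}^{k-1}(x+2j).
\]
This shows at once that the ratio is a polynomial in $x$ of degree exactly $k$, establishing the claimed form $P_k(x)=\sum_{i=0}^{k}\alpha_i x^i$.

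Next I would match $\prod_{j=0}^{k-1}(x+2j)$ to the setting of Lemma~\ref{coef_poly} under the identification $a_i = 2(i-1)$ for $i=1,\dots,k$. That lemma tells me the monic product has leading coefficient $1$ and next coefficient $\sum_{i=1}^{k}a_i$. Re-inserting the prefactor $2^{-k}$ then gives $\alpha_k = 2^{-k}$ directly, and for $\alpha_{k-1}$ I would evaluate $\sum_{i=1}^{k}a_i = \sum_{i=1}^{k}2(i-1) = k(k-1)$, which yields $\alpha_{k-1} = \frac{k^2-k}{2^{k}}$.

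I do not anticipate any genuine obstacle: the statement is a direct consequence of the telescoping Gamma recurrence combined with the already-established Lemma~\ref{coef_poly}. The only point requiring care is the index bookkeeping — aligning the product index $j\in\{0,\dots,k-1\}$ with the convention $i\in\{1,\dots,k\}$ of Lemma~\ref{coef_poly} via $a_i=2(i-1)$ — together with the elementary evaluation of the arithmetic sum $\sum_{i=1}^{k}2(i-1)=k(k-1)$.
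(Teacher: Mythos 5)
Your proposal is correct and follows essentially the same route as the paper: both expand the ratio via the Gamma recurrence into the finite product $\prod_{j=0}^{k-1}\left(\frac{x}{2}+j\right)$ and then read off the top two coefficients using Lemma~\ref{coef_poly}. Your write-up is in fact more explicit than the paper's (which leaves the telescoping step implicit), and your index bookkeeping $a_i = 2(i-1)$ with $\sum_{i=1}^{k}2(i-1)=k(k-1)$ reproduces the paper's computation $\frac{1}{2^{k-1}}\sum_{i=1}^{k}(k-i)=\frac{k^2-k}{2^k}$ exactly.
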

\begin{proof}
The leading coefficient can easily be obtained from multiplying together the terms $\frac{m}{2}$. From Lemma~\ref{coef_poly} we conclude that 
\begin{equation}
    \alpha_{k-1} = \frac{1}{2^{k-1}}\sum_{i=1}^{k}(k-i) = \frac{1}{2^{k-1}}\left(k^2 - \frac{k(k+1)}{2}\right) =  \frac{k^2-k}{2^k} .
\end{equation}

\end{proof}
Next we need to control the sums involving the factorials. Since we just expressed the ratio of Gamma functions as a polynomial, we essentially need to know how to control sums of the type
\begin{equation}
    \frac{1}{2^m}\sum_{r=1}^{m}\binom{m}{r}r^k  ,
\end{equation}

which amounts to controlling the moments of a binomial distribution with fault probability $p=\frac{1}{2}$. We do this as follows:
\begin{lemma}
\label{binomial-moments}
Fix $k,m \in \mathbb{N}$. Then we can express the following sum as a polynomial $\forall k \in \mathbb{N}$:
\begin{equation}
    \frac{1}{2^m}\sum_{r=0}^{m} \binom{m}{r}r^k = \frac{m}{2^{k}}Q_{k-1}(m) .
\end{equation}

where $Q_{k-1}$ is a $k-1$-th order polynomial. Moreover, writing $Q_l$ in monomial basis
\begin{equation}
    Q_{l}(m) = \sum_{i=0}^{l}\alpha_i m^{i}  ,
\end{equation}

it holds that $\alpha_{l} = 1$ and $\alpha_{l-1} = \frac{l(l+1)}{2}$ $\forall l \in \mathbb{N}$. 
\end{lemma}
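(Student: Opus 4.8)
The plan is to read the left-hand side as the $k$-th raw moment $M_k(m) := \mathbb{E}[X^k]$ of a binomial variable $X \sim \operatorname{Bin}(m,\tfrac12)$, and then to run an induction on $k$ driven by a moment recursion. The recursion follows from the elementary identity $r\binom{m}{r} = m\binom{m-1}{r-1}$: writing $r^k = r^{k-1}\cdot r$ and shifting the summation index to $s=r-1$ gives
\begin{equation}
    M_k(m) = \frac{1}{2^m}\sum_{r=0}^m \binom{m}{r} r^k = \frac{m}{2}\,\mathbb{E}\big[(Y+1)^{k-1}\big], \qquad Y \sim \operatorname{Bin}(m-1,\tfrac12).
\end{equation}
Expanding $(Y+1)^{k-1} = \sum_{j=0}^{k-1}\binom{k-1}{j}Y^j$ turns this into $M_k(m) = \frac{m}{2}\sum_{j=0}^{k-1}\binom{k-1}{j}M_j(m-1)$, which expresses $M_k$ through the lower moments $M_0,\dots,M_{k-1}$ evaluated at $m-1$. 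Note $M_0(m)=1$ and $M_1(m)=m/2$, so the base case $k=1$ reads $Q_0 \equiv 1$ (degree $0$, leading coefficient $1$, with the subleading claim vacuous).

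For the inductive step I assume that for every $j<k$ one has $M_j(m)=\frac{m}{2^j}Q_{j-1}(m)$ with $Q_{j-1}$ of degree $j-1$, leading coefficient $1$, and subleading coefficient $\tfrac{(j-1)j}{2}$. Substituting into the recursion, each term $M_j(m-1)$ is a polynomial in $m$ of degree $j$, so the bracketed sum has degree $k-1$ and $M_k(m)$ has degree $k$ and vanishes at $m=0$; factoring out $\frac{m}{2^k}$ therefore defines a polynomial $Q_{k-1}$ of degree $k-1$, which already settles the polynomial claim. The degree-$k$ term of $M_k$ receives a contribution only from the $j=k-1$ summand $M_{k-1}(m-1)=\frac{m-1}{2^{k-1}}Q_{k-2}(m-1)$, whose leading term is $\frac{m^{k-1}}{2^{k-1}}$; multiplying by $\frac{m}{2}$ yields leading term $\frac{m^k}{2^k}$, so $Q_{k-1}$ has leading coefficient $1$, i.e.\ $\alpha_{k-1}=1$.

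The subleading coefficient is the delicate step and the main obstacle. The coefficient of $m^{k-1}$ in $M_k(m)$ collects contributions from two summands: from $j=k-1$, where one must expand $(m-1)Q_{k-2}(m-1)$ and feed in the inductive value $\tfrac{(k-2)(k-1)}{2}$ for the subleading coefficient of $Q_{k-2}$ together with the shift $m\mapsto m-1$; and from $j=k-2$, where $\binom{k-1}{k-2}M_{k-2}(m-1)=(k-1)M_{k-2}(m-1)$ contributes through its leading term. Collecting both contributions into a single bracket and simplifying, all lower-order debris cancels and the bracket reduces to $\tfrac{k(k-1)}{2}$; dividing by $\frac{m}{2^k}$ then gives the coefficient of $m^{k-2}$ in $Q_{k-1}$ as exactly $\tfrac{(k-1)k}{2}=\tfrac{l(l+1)}{2}$ with $l=k-1$, closing the induction. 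As a sanity check one can verify $M_2(m)=\tfrac{m}{4}(m+1)$ directly, giving $Q_1(m)=m+1$ as predicted.

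As an alternative one could bypass the recursion using factorial moments: since $\mathbb{E}[(X)_j]=(m)_j\,2^{-j}$ for the binomial and $x^k=\sum_j S(k,j)(x)_j$ with $S(k,j)$ the Stirling numbers of the second kind, one obtains $M_k(m)=\sum_{j=0}^k S(k,j)(m)_j\,2^{-j}$, and the same two coefficients follow from $S(k,k)=1$, $S(k,k-1)=\binom{k}{2}$, and the coefficient $-\tfrac{(k-1)k}{2}$ of $m^{k-1}$ in the falling factorial $(m)_k$. The bookkeeping in this route is comparable, and the recursive argument is preferred for being fully self-contained.
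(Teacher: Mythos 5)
Your proof is correct, and it reaches the paper's conclusion by a genuinely different and more self-contained route. The paper does not derive its recursion at all: it cites external references for the identity $Q_{k}(m) = 2mQ_{k-1}(m) - (m-1)Q_{k-1}(m-1)$ and then extracts both coefficients by comparison, solving the scalar recursions $\alpha^{(k)}_{k} = \alpha^{(k-1)}_{k-1}$ and $\alpha^{(k)}_{k-1} = \alpha^{(k-1)}_{k-2} + k$. You instead build everything from the absorption identity $r\binom{m}{r} = m\binom{m-1}{r-1}$, which yields the strong-induction recursion
\begin{equation}
    M_k(m) = \frac{m}{2}\sum_{j=0}^{k-1}\binom{k-1}{j}M_j(m-1),
\end{equation}
coupling $M_k$ to \emph{all} lower moments at width $m-1$; the leading coefficient then comes from the $j=k-1$ summand alone and the subleading one from the $j=k-1$ and $j=k-2$ summands. (The two recursions are close relatives: the paper's follows from the alternative splitting $r\binom{m}{r} = m\binom{m}{r} - m\binom{m-1}{r}$ via Pascal's rule.) What your route buys is that the polynomiality, the degree, the divisibility by $m$, and both coefficient values are all proved rather than partially outsourced to the literature, at the price of carrying the explicit coefficient values in the induction hypothesis. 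I verified your delicate step: the coefficient of $m^{k-2}$ inside your bracket is $\frac{1}{2^{k-1}}\left(\frac{(k-2)(k-1)}{2}-(k-1)\right)+\frac{k-1}{2^{k-2}}=\frac{k(k-1)}{2^{k}}$, so after multiplying by $\frac{m}{2}$ and rescaling by $\frac{2^{k}}{m}$ the subleading coefficient of $Q_{k-1}$ is indeed $\frac{(k-1)k}{2}$; note that your phrase ``the bracket reduces to $\frac{k(k-1)}{2}$'' is off by a factor of $2^{k-1}$, though the conclusion you draw from it is the right one. Your Stirling-number alternative is also valid and would be an equally legitimate, fully explicit route.
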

\begin{proof}
For a proof of the recursion, we refer to \citet{boros_moll_2004, binomial_moments}. Moreover the polynomials satisfy the recursion
\begin{equation}
    Q_{k}(m) = 2mQ_{k-1}(m) - (m-1)Q_{k-1}(m-1)  .
\end{equation}

Denote by $\alpha^{(k)}$ the coefficients of $Q_k$, so $\alpha^{(k)}_0, \dots, \alpha^{(k)}_k$. Notice that the leading coefficient of $Q_k$ is thus $\alpha^{(k)}_k$ and for $Q_{k-1}$ it is $\alpha^{(k-1)}_{k-1}$. Using the recursion and performing a comparison of coefficients we see that
\begin{equation}
    \alpha^{(k)}_k = 2 \alpha^{(k-1)}_{k-1} - \alpha^{(k-1)}_{k-1} = \alpha^{(k-1)}_{k-1} .
\end{equation}

Using the fact that for $k=1$
\begin{equation}
    \frac{1}{2^m}\sum_{r=0}^{m} \binom{m}{r}r = \frac{m}{2} = \frac{m}{2^{1}}Q_{0}(m) ,
\end{equation}

we conclude that $\alpha_0^{0} = 1$ and thus $\alpha_k = 1$ $\forall k \in \mathbb{N}$. For the second coefficient, namely $\alpha_{k-1}^{(k)}$ for $Q_k$ and $\alpha_{k-2}^{(k-1)}$ for $Q_{k-1}$, we will again use the recursion. Let us first again express the polynomials in monomial bases, i.e.\ 
\begin{equation}
    Q_{k}(m) = \sum_{i=0}^{k}\alpha_i^{(k)} m^{i} \hspace{2mm}, \hspace{5mm} Q_{k-1}(m) = \sum_{i=0}^{k-1}\alpha_i^{(k-1)} m^{i}
\end{equation}

Using the recursion we thus see that
\begin{equation}
    \sum_{i=0}^{k}\alpha_i^{(k)} m^{i} = \sum_{i=0}^{k-1}2\alpha_i^{(k-1)} m^{i+1} - \sum_{i=0}^{k-1}\alpha_i^{(k-1)} (m-1)^{i+1} .
\end{equation}

We have to understand the terms involving $m^{k-1}$. Thus we need to expand $(m-1)^{k}$ which we can do with the help of Lemma \ref{coef_poly}:
\begin{equation}
    (m-1)^{k} = m^k - km^{k-1} + \dots
\end{equation}
We also need to expand the next polynomial as follows:
\begin{equation}
    (m-1)^{k-1} = m^{k-1} + \dots
\end{equation}

Collecting all the coefficients, we end up with the following recursion for the second coefficient:
\begin{equation}
    \begin{split}
        \alpha_{k-1}^{(k)} &= 2 \alpha_{k-2}^{(k-1)} - \alpha_{k-2}^{(k-1)} + k\alpha_{k-1}^{(k-1)} \\
        &= \alpha_{k-2}^{(k-1)} + k .
    \end{split}
\end{equation}
Using the fact that
\begin{equation}
    Q_1(m) = m + 1 .
\end{equation}

Thus $\alpha_{0}^{(1)} = 1$, we conclude that
\begin{equation}
    \alpha_{k-1}^{(k)} = 1 + \sum_{i=2}^{k}i =  \frac{k(k+1)}{2} .
\end{equation}

\end{proof}
Finally, we need a result on exponential functions and their limit definition:
\begin{lemma}
\label{exp-limit}
Fix $c \in \mathbb{R}$ and $\gamma \in \mathbb{R}_{+}$. Then we have the following limit:
\begin{equation}
   \lim_{m \xrightarrow[]{}\infty}\left(1 + \frac{c}{m} + \mathcal{O}\left(\frac{1}{m^2}\right)\right)^{(\gamma m)} = e^{\gamma c} .
\end{equation}

Moreover, it holds that 
\begin{equation}
    \lim_{m \xrightarrow[]{}\infty}\left(1 + \frac{c}{m}+ \mathcal{O}\left(\frac{1}{m^2}\right)\right)^{m^{\beta}} = \begin{cases}\infty \hspace{5mm} \text{if } \beta>1 \\
    1 \hspace{7mm} \text{if }\beta <1\end{cases} 
\end{equation}
\end{lemma}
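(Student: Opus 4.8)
The plan is to reduce both limits to a single logarithmic computation. Writing $a_m := 1 + \frac{c}{m} + \mathcal{O}\left(\frac{1}{m^2}\right)$, I would first note that $a_m \to 1$, so for $m$ large enough $a_m > 0$ and $\log a_m$ is well-defined. The entire argument then rests on controlling $\log a_m$ to first order and multiplying by the respective exponent.

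First I would expand the logarithm. Setting $u_m := \frac{c}{m} + \mathcal{O}\left(\frac{1}{m^2}\right)$, so that $a_m = 1 + u_m$ with $u_m \to 0$, the Taylor expansion $\log(1+u) = u - \frac{u^2}{2} + \mathcal{O}(u^3)$ gives $\log a_m = u_m + \mathcal{O}(u_m^2)$. Since $u_m = \frac{c}{m} + \mathcal{O}\left(\frac{1}{m^2}\right)$ and $u_m^2 = \mathcal{O}\left(\frac{1}{m^2}\right)$, all the error terms collapse into a single $\mathcal{O}\left(\frac{1}{m^2}\right)$ and I obtain the clean estimate $\log a_m = \frac{c}{m} + \mathcal{O}\left(\frac{1}{m^2}\right)$.

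For the first limit, multiplying by the exponent $\gamma m$ yields $\gamma m \log a_m = \gamma c + \mathcal{O}\left(\frac{1}{m}\right) \to \gamma c$, and exponentiating by continuity of $\exp$ gives $\lim_{m\to\infty} a_m^{\gamma m} = e^{\gamma c}$. For the second limit, multiplying by $m^{\beta}$ gives $m^{\beta} \log a_m = c\, m^{\beta - 1} + \mathcal{O}\left(m^{\beta - 2}\right)$. When $\beta < 1$ the leading term $c\, m^{\beta-1} \to 0$ and the remainder $\to 0$, so the log tends to $0$ and the limit is $e^{0} = 1$; when $\beta > 1$ the term $c\, m^{\beta - 1}$ diverges and dominates the remainder, driving the log, and hence the power itself, to $+\infty$.

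The main obstacle I anticipate is bookkeeping rather than conceptual: justifying that the anonymous $\mathcal{O}\left(\frac{1}{m^2}\right)$ error inside $a_m$ is transported correctly through the logarithm and then through multiplication by a growing exponent, so that it never competes with the leading term. Concretely, I need $\gamma m \cdot \mathcal{O}\left(\frac{1}{m^2}\right) = \mathcal{O}\left(\frac{1}{m}\right)$ and $m^{\beta} \cdot \mathcal{O}\left(\frac{1}{m^2}\right) = \mathcal{O}\left(m^{\beta-2}\right)$ to be genuinely negligible against the respective leading terms, which holds precisely because the exponents grow at most polynomially. One caveat worth flagging is the sign of $c$ in the $\beta > 1$ case: divergence to $+\infty$ (rather than to $0$) requires $c > 0$, which is exactly the regime in which the lemma is applied, since there $c = k(k-1) > 0$ for $k \ge 2$.
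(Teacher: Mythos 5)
Your proof is correct, and it is worth noting that the paper itself does not actually prove this lemma: its entire ``proof'' is a citation to standard analysis texts (Rudin). Your log-expansion argument --- writing $a_m = 1+u_m$, establishing $\log a_m = \frac{c}{m} + \mathcal{O}\!\left(\frac{1}{m^2}\right)$, multiplying by the exponent, and exponentiating by continuity --- is precisely the standard argument such a reference would supply, so you have filled in what the paper leaves implicit rather than diverged from it. Your closing caveat is also a genuine catch: as literally stated, with $c \in \mathbb{R}$ arbitrary, the $\beta > 1$ conclusion is false --- for $c < 0$ the expression tends to $0$, not $\infty$, and for $c = 0$ the behaviour is controlled by the anonymous $\mathcal{O}\!\left(\frac{1}{m^2}\right)$ term (harmless for $\beta \le 2$, but indeterminate for $\beta > 2$). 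The lemma should really hypothesize $c > 0$, and you correctly observe that this is the regime in which it is invoked, since in the proof of the ReLU limit theorem the relevant constant is $\frac{5}{2}k(k-1)$, which is strictly positive for $k \ge 2$ (note that for $k=1$ this constant vanishes, so even in the paper's application the claimed divergence of \emph{all} even moments for $\beta > 1$ silently excludes the second moment). In short: your proof is complete and more careful than the source.
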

\begin{proof}
This can be found in standard analysis books such as \citet{rudin}.
\end{proof}
\subsection{Proof of Theorem \ref{relu_limit}}
We can now prove the convergence of the moments as follows.

\begin{tcolorbox}
\begin{theorem*}
Consider the distribution of the output $p\left(f_{\text{ReLU}}^{(L)}\right)$, as defined in Thm.~\ref{THM:DISTR_RELU_NETWORK_ANY_WIDTH}. Denote  $X \sim \mathcal{N}(0,1)$, $Y \sim \mathcal{LN}(-\frac{5}{4}\gamma, \frac{5}{4}\gamma)$ for $X \perp Y$ and the resulting normal log-normal mixture by $Z=XY$, for $\gamma >0$. Let the depth grow as $L = c + \gamma m^{\beta}$ where $\beta \geq 0$ and $c \in \mathbb{N}$ fixed. Then it holds that for $k>1$
\begin{equation}
    \mathbb{E}\left[\left(f^{(L)}_{\text{ReLU}}\right)^{2k}\right] \xrightarrow[]{m \xrightarrow[]{}\infty} \begin{cases}
\mathbb{E}[X^{2k}] = (2k-1)!! \hspace{18mm} \text{if } \beta < 1\\[2mm]
\mathbb{E}[Z^{2k}] = e^{\frac{5}{2}\gamma k(k-1)}(2k-1)!! \hspace{3mm} \text{ if } \beta = 1\\[2mm]
\infty \hspace{44mm} \text{if } \beta>1
\end{cases}
\end{equation}
where $(2k-1)!! = (2k-1) \dots 3 \cdot 1$ denotes the double factorial (by symmetry, odd moments are zero). Moreover, for $\beta < 1$ it holds that 
\begin{equation}
    p(f^{(L)}_{\text{ReLU}}) \xrightarrow[]{d} X \hspace{4mm} \text{for } m \xrightarrow[]{} \infty
\end{equation}

\end{theorem*}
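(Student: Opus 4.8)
The plan is to reduce everything to the closed-form finite-width moment expression derived in Section~\ref{sec:proofs_resulting_moments} and then perform a careful asymptotic expansion in $m$. Starting from
\begin{equation}
\mathbb{E}\left[\left(f^{(L)}_{\text{ReLU}}\right)^{2k}\right] = (2k-1)!!\, 2^{k(l-1)}\sigma^{2k}\left(\frac{1}{2^m}\sum_{r=0}^m \binom{m}{r}\frac{\Gamma\left(k+\frac{m-r}{2}\right)}{\Gamma\left(\frac{m-r}{2}\right)}\right)^{l-1},
\end{equation}
I would first substitute the NTK parametrization $\sigma^2 = \left(\tfrac{2}{m}\right)^{l-1}$, which collapses the prefactor so that $2^{k(l-1)}\sigma^{2k} = (4^k/m^k)^{l-1}$, giving $\mathbb{E}[(f^{(L)}_{\text{ReLU}})^{2k}] = (2k-1)!!\,(4^k S/m^k)^{l-1}$, where $S$ denotes the binomial sum. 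Everything then hinges on the two leading coefficients of $S$ viewed as a polynomial in $m$.

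The core computation is as follows. Using Lemma~\ref{ratio-of-gammas} I would write the Gamma ratio as $P_k(m-r) = \sum_{i=0}^k \alpha_i (m-r)^i$ with $\alpha_k = 2^{-k}$ and $\alpha_{k-1} = k(k-1)/2^k$. By symmetry of the binomial coefficients, $S = \sum_{i=0}^k \alpha_i \tfrac{1}{2^m}\sum_r \binom{m}{r} r^i$, and Lemma~\ref{binomial-moments} evaluates each inner sum as $\tfrac{m}{2^i}Q_{i-1}(m)$, with $Q_{i-1}$ monic of degree $i-1$ and second coefficient $i(i-1)/2$. Collecting the $m^k$ term (which arises only from $i=k$) and the $m^{k-1}$ term (which arises from both $i=k$ and $i=k-1$) yields $S = 4^{-k}\bigl(m^k + \tfrac{5}{2}k(k-1)m^{k-1} + \mathcal{O}(m^{k-2})\bigr)$, so that
\begin{equation}
\frac{4^k S}{m^k} = 1 + \frac{5k(k-1)}{2m} + \mathcal{O}\left(\frac{1}{m^2}\right).
\end{equation}
The delicate part, and the main obstacle, is verifying the constant $\tfrac{5}{2}$: it is the sum $\tfrac{1}{2}+2$ of the contribution of the second coefficient of $Q_{k-1}$ (scaled by $\alpha_k$) and of the leading term of $Q_{k-2}$ (scaled by $\alpha_{k-1}$), so getting this bookkeeping exactly right across the two summation indices, while confirming that the indices $i<k-1$ and the $\mathcal{O}(m^{-2})$ remainder do not contribute at these orders, is where care is required.

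For the limits I would write $l-1 = c-1+\gamma m^{\beta}$; the fixed additive part contributes a factor tending to $1$, so the behaviour is governed by raising the displayed expression to the power $\gamma m^{\beta}$. Applying Lemma~\ref{exp-limit} with constant $\tfrac{5}{2}k(k-1)$ gives $1$ for $\beta<1$, hence $(2k-1)!!$; gives $e^{\frac{5}{2}\gamma k(k-1)}$ for $\beta=1$; and diverges for $\beta>1$. To identify the $\beta=1$ value with the normal log-normal mixture, I would compute directly $\mathbb{E}[Z^{2k}] = \mathbb{E}[X^{2k}]\mathbb{E}[Y^{2k}] = (2k-1)!!\, e^{2k\mu + 2k^2 s^2}$ for $Y\sim\mathcal{LN}(\mu,s^2)$, and substitute $\mu = -\tfrac{5}{4}\gamma$, $s^2 = \tfrac{5}{4}\gamma$ to recover exactly $(2k-1)!!\,e^{\frac{5}{2}\gamma k(k-1)}$.

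Finally, for the convergence in distribution when $\beta<1$, the limiting moments $(2k-1)!!$ are precisely those of $X\sim\mathcal{N}(0,1)$, and the standard Gaussian is uniquely determined by its moment sequence; the method of moments (Theorem~30.2 in \citet{Bill86}) therefore upgrades the established moment convergence to $f^{(L)}_{\text{ReLU}} \xrightarrow{d} X$. I would close by remarking that no analogous identifiability result is available for $Z$, which is precisely why the $\beta=1$ case can only be asserted at the level of moments.
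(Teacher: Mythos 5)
Your proposal is correct and follows essentially the same route as the paper's own proof: the same closed-form finite-width moment expression under NTK parametrization, the same expansion via Lemma~\ref{ratio-of-gammas} and Lemma~\ref{binomial-moments} to extract the $1 + \frac{5k(k-1)}{2m} + \mathcal{O}(m^{-2})$ behaviour (your bookkeeping of the $\tfrac12 + 2$ contributions matches the paper's computation exactly), the same application of Lemma~\ref{exp-limit} for the three regimes, the same log-normal moment identification, and the same method-of-moments argument via \citet{Bill86} for convergence in distribution when $\beta<1$. The only (harmless) refinement is that you explicitly split off the constant part of $L = c + \gamma m^{\beta}$ when raising to the power $l-1$, a detail the paper glosses over.
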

\end{tcolorbox}
\begin{proof}
Recall that we arrived at 
\begin{equation}
    \mathbb{E}\left[\left(f^{(l)}_{\text{ReLU}}\right)^{2k}\right] = (2k-1)!! 2^{k(l-1)}\sigma_{w}^{2kl}\left(\frac{1}{2^m}\sum_{r=1}^{m}\binom{m}{r} \frac{\Gamma\left(k+\frac{r}{2}\right)}{\Gamma\left(\frac{r}{2}\right)}\right)^{l-1} .
\end{equation}

Using the NTK parametrization for ReLU, i.e.\ $\sigma_1^2 = 1$ and $\sigma_2^2 = \dots = \sigma_l^2 = \frac{2}{m}$, this amounts to
\begin{equation}
    \mathbb{E}\left[\left(f^{(l)}_{\text{ReLU}}\right)^{2k}\right] = (2k-1)!! \left(\frac{2^{2k}}{m^{k}2^m}\sum_{r=1}^{m}\binom{m}{r} \frac{\Gamma\left(k+\frac{r}{2}\right)}{\Gamma\left(\frac{r}{2}\right)}\right)^{l-1} .
\end{equation}
We thus essentially need to understand the term
\begin{equation}
    M(m) = \frac{2^{2k}}{2^mm^{k}}\sum_{r=1}^{m}\binom{m}{r} \frac{\Gamma\left(k+\frac{r}{2}\right)}{\Gamma\left(\frac{r}{2}\right)} 
\end{equation}

We first use Lemma~\ref{ratio-of-gammas} to expand the ratio $\frac{\Gamma\left(k+\frac{r}{2}\right)}{\Gamma\left(\frac{r}{2}\right)}$ as a polynomial. Denote the coefficients by $\beta_i$ for $i=1, \dots, k$ ($i\not=0$ because the polynomial has no intercept). We then swap the two sums:
\begin{equation}
    \begin{split}
        M(m) &= \frac{2^{2k}}{2^mm^{k}}\sum_{r=1}^{m}\binom{m}{r} \sum_{i=1}^{k}r^{i}\beta_i = \frac{2^{2k}}{m^{k}} \sum_{i=1}^{k}\beta_i \frac{1}{2^m}\sum_{r=1}^{m}\binom{m}{r}r^{i} .
    \end{split}
\end{equation}
Now we can apply Lemma~\ref{binomial-moments} to expand the inner sum for each $i$, denoting the corresponding polynomials again by $Q_i$: 
\begin{equation}
    \begin{split}
        M(m) &= \frac{2^{2k}}{m^{k}} \sum_{i=1}^{k}\beta_i \frac{m}{2^{i}}Q_{i-1}(m) .
    \end{split}
\end{equation}
Notice that $mQ_{i-1}(m)$ is a polynomial of order $i$. For large $m$, the factor $\frac{1}{m^k}$ dominates all such polynomials except for the one with $i=k$. Thus in the large-width limit it holds
\begin{equation}
    M(m) \xrightarrow[]{m \xrightarrow[]{}\infty} 2^{2k}\beta_{k}\frac{1}{2^{k}} = 1 .
\end{equation}
where we used that the leading coefficient of $Q_{k-1}$ is $1$. For fixed depth $l \in \mathbb{N}$ or depth growing as $l=m^{\beta}$ for $\beta <1$, we can pull the limit $\lim_{m \xrightarrow[]{}\infty}M(m)^{l-1}$ inside and conclude that 
\begin{equation}
    \mathbb{E}\left[\left(f^{(l)}_{\text{ReLU}}\right)^{2k}\right] \xrightarrow[]{m \xrightarrow[]{}\infty} (2k-1)!! .
\end{equation}
If $\beta > 1$, we obtain a divergence of the even moments for $k>1$ to infinity as $m$ increases as the exponent grows faster than $m$. Note that for $k=1$ however, we have that $\mathbb{E}\left[\left(f^{(l)}_{\text{ReLU}}\right)^{2}\right]=1$ also in the $\beta > 1$ limit. \\
For depth growing as  $l-1 = \gamma m$ (so $\beta=1)$, we have to be a bit more careful since we need to compute the coefficient in front of $\frac{1}{m}$, similarly as in the linear case. We now need to collect all the polynomial terms in $M(m)$ giving rise to a $\frac{1}{m}$ factor. First recall that
\begin{equation}
    \begin{split}
        M(m) &= \frac{2^{2k}}{m^{k}} \sum_{i=1}^{k}\beta_i \frac{m}{2^{i}}Q_{i-1}(m) .
    \end{split}
\end{equation}
The only coefficients contributing to $\frac{1}{m}$ are the second highest coefficient of $Q_{k-1}$ and the highest coefficient of $Q_{k-2}$. Using Lemma \ref{binomial-moments} and Lemma \ref{ratio-of-gammas}, we hence find that 
\begin{equation}
    \begin{split}
        M(m) &= 1 + 2^{2k}\left(\beta_k \frac{k(k-1)}{2}\frac{1}{2^k} + \beta_{k-1}\frac{1}{2^{k-1}}\right)\frac{1}{m} + \mathcal{O}\left(\frac{1}{m^2}\right) \\
        &= 1 + 2^{2k}\left(\frac{1}{2^{2k+1}}(k-1)k + \frac{1}{2^{2k-1}}(k-1)k\right)\frac{1}{m} + \mathcal{O}\left(\frac{1}{m^2}\right) \\
        &= 1 + \left(\frac{5((k-1)}{2}\right)\frac{1}{m} + \mathcal{O}\left(\frac{1}{m^2}\right) .
    \end{split}
\end{equation}
Applying \ref{exp-limit} concludes that
\begin{equation}
    \begin{split}
        \mathbb{E}\left[\left(f^{(l)}_{\text{ReLU}}\right)^{2k}\right] &= (2k-1)!! \left(1 + \left(\frac{5k(k-1)}{2}\right)\frac{1}{m} + \mathcal{O}\left(\frac{1}{m^2}\right)\right)^{\gamma m} \\ &\xrightarrow[]{m \xrightarrow[]{}\infty} (2k-1)!! e^{\frac{5\gamma k(k-1)}{2}} .
    \end{split}
\end{equation}
Finally, taking $X \sim \mathcal{N}(0,1)$, $Y \sim \mathcal{LN}(-\frac{5}{4}\gamma, \frac{5}{4}\gamma)$ and defining $Z = XY$, we can easily see that 
\begin{equation}
    \mathbb{E}\left[Z^{n}\right]= \begin{cases}0  \hspace{32mm} \text{n odd}\\
   e^{\frac{5}{2}\gamma k(k-1)}(2k-1)!! \hspace{5mm} \text{n = 2k} \end{cases}
\end{equation}

\end{proof}
\section{Additonal results and lemmas}
Here we list some of the moments arising from a Binomial distribution of the form $U \sim \operatorname{Bin}(n, \frac{1}{2})$. We invite the reader to sanity-check our results in Lemma~\ref{binomial-moments} regarding the coefficients of $Q_k$.
\begin{lemma}
Consider the random variable $U \sim \operatorname{Bin}(m, \frac{1}{2})$. We can calculate its first $4$ moments as
\begin{itemize}
    \item $\mathbb{E}[U] = \sum_{i=0}^{m}\binom{m}{i}i = \frac{m}{2}Q_0(m) = \frac{m}{2}$
    \item $\mathbb{E}[U^2] = \sum_{i=0}^{m}\binom{m}{i}i^2 = \frac{m}{2^2}Q_1(m) = \frac{m}{2^2}\left(m + 1\right)$
    \item $\mathbb{E}[U^3] = \sum_{i=0}^{m}\binom{m}{i}i^3 = \frac{m}{2^3}Q_2(m) = \frac{m}{2^3}\left(m^{2} + 3m\right)$
    \item $\mathbb{E}[U^4] = \sum_{i=0}^{m}\binom{m}{i}i^4 = \frac{m}{2^4}Q_3(m) = \frac{m}{2^4}\left(m^{3} + 6m^2 +  3m + 4 \right)$
\end{itemize}
\end{lemma}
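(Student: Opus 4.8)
The plan is to reduce all four identities to the single general formula of Lemma~\ref{binomial-moments}. For $U \sim \operatorname{Bin}(m,\frac12)$ the $k$-th raw moment is $\mathbb{E}[U^k] = \frac{1}{2^m}\sum_{i=0}^{m}\binom{m}{i}i^k$, which Lemma~\ref{binomial-moments} rewrites as $\frac{m}{2^k}Q_{k-1}(m)$. Hence the present lemma amounts to nothing more than the explicit evaluation of the four polynomials $Q_0, Q_1, Q_2, Q_3$, after which each moment follows by substituting into $\frac{m}{2^k}Q_{k-1}(m)$ with $k=1,2,3,4$.

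To obtain these polynomials I would unroll the recursion $Q_k(m) = 2mQ_{k-1}(m) - (m-1)Q_{k-1}(m-1)$ established in the proof of Lemma~\ref{binomial-moments}, starting from the base case $Q_0(m) = 1$, which is exactly the statement $\mathbb{E}[U] = \frac{m}{2}$. A single pass gives $Q_1(m) = 2m - (m-1) = m+1$, then $Q_2(m) = 2m(m+1) - (m-1)m = m^2 + 3m$, and finally $Q_3(m) = 2m(m^2+3m) - (m-1)\bigl((m-1)^2 + 3(m-1)\bigr)$, which expands to a cubic in $m$. Substituting these back reproduces $\frac{m}{2}$, $\frac{m}{4}(m+1)$ and $\frac{m}{8}(m^2+3m)$ for the first three moments immediately.

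As an independent cross-check --- and to pin down the lower-order coefficients of the fourth polynomial, which is the only place where an arithmetic slip can hide --- I would recompute $\mathbb{E}[U^4]$ directly through the falling-factorial expansion $i^k = \sum_{j} S(k,j)\,(i)_j$, using the identity $\sum_{i}\binom{m}{i}(i)_j = (m)_j\,2^{m-j}$ to obtain $\mathbb{E}[U^k] = \sum_{j} S(k,j)\,(m)_j\,2^{-j}$. With the Stirling numbers of the second kind $S(4,\cdot) = (1,7,6,1)$ this yields $Q_3$ in closed form, which I would then match coefficient-by-coefficient against the output of the recursion. I would additionally evaluate the cubic at the small values $m=1$ and $m=2$, where $\mathbb{E}[U^4]$ can be read off by hand, to certify the constant term.

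The main obstacle here is not conceptual but purely one of careful bookkeeping: every step is a routine polynomial manipulation, and the genuine risk is mis-tracking the lower-order coefficients of $Q_3$ (in particular its intercept) while unrolling the recursion. Running the two derivations --- the recursion of Lemma~\ref{binomial-moments} and the Stirling-number formula --- against each other, together with the $m=1,2$ sanity checks, is precisely the redundancy I would rely on to guarantee that the final expression for the fourth moment is reported correctly.
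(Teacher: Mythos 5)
Your method is sound, and it is essentially the only route available: the paper states this lemma \emph{without proof}, merely inviting the reader to sanity-check it against Lemma~\ref{binomial-moments}, so unrolling the recursion $Q_k(m)=2mQ_{k-1}(m)-(m-1)Q_{k-1}(m-1)$ from $Q_0(m)=1$ and cross-checking via the Stirling/falling-factorial identity is exactly the intended verification. Your first evaluations are correct: $Q_1(m)=m+1$ and $Q_2(m)=m^2+3m$ match the printed bullets. (One notational point your plan wisely repairs: as literally written the lemma omits the normalization $2^{-m}$, since $\sum_{i=0}^{m}\binom{m}{i}\,i=m\,2^{m-1}\neq\frac{m}{2}$; your general formula $\mathbb{E}[U^k]=\frac{1}{2^m}\sum_{i}\binom{m}{i}i^k$ is the correct reading.)

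However, you stopped one step short, writing only that $Q_3$ ``expands to a cubic'' --- and had you carried out either of your two computations you would have found that the fourth bullet is \emph{wrong as printed}. The recursion gives $Q_3(m)=2m(m^2+3m)-(m-1)\bigl((m-1)^2+3(m-1)\bigr)=2m^3+6m^2-(m^3-3m+2)=m^3+6m^2+3m-2$, and your Stirling route with $S(4,\cdot)=(1,7,6,1)$ agrees: $16\,\mathbb{E}[U^4]/m=8+28(m-1)+12(m-1)(m-2)+(m-1)(m-2)(m-3)=m^3+6m^2+3m-2$. The constant term is $-2$, not $+4$. Your own proposed $m=1$ check certifies this: for a Bernoulli$(\tfrac12)$ variable $\mathbb{E}[U^4]=\tfrac12$, whereas the printed formula yields $\tfrac{14}{16}=\tfrac78$ (and at $m=2$ the true value is $\tfrac92$ versus the printed $\tfrac{21}{4}$). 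So do execute the redundancy you describe --- it is not mere bookkeeping here, it catches a genuine error in the paper's statement. Reassuringly, the slip is inert elsewhere: Theorem~\ref{relu_limit} uses only the two leading coefficients of $Q_k$ (namely $1$ and $k(k+1)/2$, both of which $m^3+6m^2+3m-2$ respects), and the kurtosis computation needs only $Q_0$ and $Q_1$.
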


\end{document}